\def\1{\bm{1}}
\def\vg{{\bm{g}}}
\def\vm{{\bm{m}}}
\def\vv{{\bm{v}}}
\def\vx{{\bm{x}}}
\def\vy{{\bm{y}}}
\def\vDelta{{\bm{\Delta}}}
\DeclareMathAlphabet{\mathsfit}{\encodingdefault}{\sfdefault}{m}{sl}
\SetMathAlphabet{\mathsfit}{bold}{\encodingdefault}{\sfdefault}{bx}{n}
\DeclareMathOperator*{\argmax}{arg\,max}
\DeclareMathOperator*{\argmin}{arg\,min}
\newcommand{\norm}[1]{\left\|#1\right\|}
\def\abs#1{\left| #1 \right|}
\newcommand{\inner}[2]{\left\langle #1,#2 \right\rangle}
\newcommand{\eps}{\epsilon}
\newcommand{\adam}{$\mathtt{Adam}$\xspace}
\newcommand{\adamw}{$\mathtt{AdamW}$\xspace}
\newcommand{\gd}{$\mathtt{GD}$\xspace}
\newcommand{\sgd}{$\mathtt{SGD}$\xspace}
\newcommand{\signgd}{$\mathtt{SignGD}$\xspace}
\newcommand{\frankwolfe}{$\mathtt{Frank}$-$\mathtt{Wolfe}$\xspace}
\newcommand{\nsdwd}{$\mathtt{NSD}$-$\mathtt{WD}$\xspace}
\newcommand{\normalizedgd}{$\mathtt{Normalized}$-$\mathtt{GD}$\xspace}
\newtheorem{theorem}{Theorem}[section]
\newtheorem{definition}[theorem]{Definition}
\newtheorem{lemma}[theorem]{Lemma}
\renewcommand{\vec}[1]{{\boldsymbol{\mathbf{#1}}}} 
\newcommand{\zhiyuan}[1]{{\color{cyan} [ZL: #1]}}
\newcommand{\shuo}[1]{{\color{LimeGreen} [SX: #1]}}
\renewcommand{\zhiyuan}[1]{}
\renewcommand{\shuo}[1]{}
\title{Implicit Bias of AdamW: $\ell_\infty$ Norm Constrained Optimization}
\date{}
\author{
Shuo Xie\qquad\qquad Zhiyuan Li \\
Toyota Technological Institute at Chicago\\
\texttt{\{shuox,zhiyuanli\}@ttic.edu} 
}
\begin{document}

\maketitle

\begin{abstract}
Adam with decoupled weight decay, also known as AdamW, is widely acclaimed for its superior performance in language modeling tasks, surpassing Adam with $\ell_2$ regularization in terms of generalization and optimization. However, this advantage is not theoretically well-understood. One challenge here is that though intuitively Adam with $\ell_2$ regularization optimizes the $\ell_2$ regularized loss, it is not clear if AdamW optimizes a specific objective. In this work, we make progress toward understanding the benefit of AdamW by showing that it implicitly performs constrained optimization. More concretely, we show in the full-batch setting, if AdamW converges with any non-increasing learning rate schedule whose partial sum diverges, it must converge to a KKT point of the original loss under the constraint that the $\ell_\infty$ norm of the parameter is bounded by the inverse of the weight decay factor. This result is built on the observation that Adam can be viewed as a smoothed version of SignGD, which is the normalized steepest descent with respect to $\ell_\infty$ norm, and a surprising connection between normalized steepest descent with weight decay and Frank-Wolfe.
\end{abstract}
\section{Introduction}\label{sec:intro}
\adam~\citep{kingma2014adam} and its variant \adamw~\citep{loshchilov2017decoupled} have been the most successful and widely used optimization algorithms in deep learning, especially for large language models (LLMs), whose pre-training costs massively and cannot be done with SGD. Despite its tremendous empirical success, we lack a good theoretical understanding of \adam's underlying work and the roles of its hyperparameters, in particular, \emph{weight decay}. \adamw achieves better optimization and generalization over \adam and decouples the effect of the learning rate and the weight decay coefficient by using a different implementation of weight decay~\citep{loshchilov2017decoupled}.\shuo{Shall we say decouple first and then achieve better performance?}  While \adam implements weight decay as a $\ell_2$ regularizer of the training objective, \adamw directly shrinks its weight per step, known as the \emph{decoupled weight decay}~(see \Cref{alg:adamw}).

\begin{algorithm}
    \caption{\colorbox{Thistle}{Adam with $\ell_2$ regularization} and \colorbox{SpringGreen}{Adam with decoupled weight decay (\adamw)}}\label{alg:adamw}
    \begin{algorithmic}
        \INPUT{$\beta_1, \beta_2>0$, initialization $\vx_0$, total steps $T$, learning rate schedule $\{\eta_t\}_{t=1}^T$, weight decay coefficient $\lambda$}
        \State $\vm_0 \gets \vec{0}, \vv_0 \gets \vec{0}$  
        \For{$t=1, 2, \cdots, T$}
        \State $\vg_t \gets \nabla L(\vx_{t-1})$\colorbox{Thistle}{$+\lambda \vx_{t-1}$}
        \State $\vm_t \gets \beta_1 \vm_{t-1} + (1-\beta_1) \vg_t$
        \State $\vv_t \gets \beta_2 \vv_{t-1} + (1-\beta_2) \vg_t^2$
        \State $\vx_t \gets \vx_{t-1} -\eta_t \frac{\vm_t}{\sqrt{\vv_t}}$\colorbox{SpringGreen}{$-\lambda \eta_t \vx_{t-1}$}
        \EndFor
        \State \Return $\vx_T$
    \end{algorithmic}
\end{algorithm}

However, the advantage of \adamw over \adam is mostly empirical and our theoretical understanding is quite limited. \citet{zhuang2022understanding} argues that one desirable property that \adamw has while \adam does not is scale-freeness, meaning \adamw yields the same optimization trajectory if loss is multiplied by any positive constant. Yet this property does not give us enough information to understand the difference regarding the optimization processes and the final learned solutions between \adamw and \adam with $\ell_2$ regularization. Intuitively, if \adam with $\ell_2$ regularization converges to some point, it converges to at least a stationary point of the regularized loss function, if not a minimizer. But for \adamw, it is even not clear if it is optimizing any (regularized) loss function. Thus, towards taking the first step of understanding the benefit of decoupled weight decay in \adamw, we ask the following question:

\begin{quote}
    Which solution does \adamw converge to, if it converges?
\end{quote}

Our following main result \Cref{thm:main} characterizes the implicit bias of \adamw in the \emph{deterministic} case, where a full-batch loss is used:
\begin{theorem}\label{thm:main}
    For any continuously differentiable function $L:\mathbb{R}^d\to\mathbb{R}$,  $\beta_1 \leq \beta_2<1 $,  initialization $\vx_0$ 
 and non-increasing learning rate $\{\eta_t\}_{t=1}^\infty$ such that $\sum_{t=1}^\infty \eta_t = \infty$, if the iterates of \adamw $\{\vx_t\}_{t=0}^\infty$ on $L$ converges to some $\vx_\infty$, then $\vx_\infty$ is a KKT point (\Cref{defi:KKT_points}) of the constrained optimization problem $\min_{\norm{\vx}_\infty \leq \frac{1}{\lambda}} L(\vx)$. 
 
If $L$ is additionally convex, then
\adamw converges to the constrained minimizer, \emph{i.e.}, $\vx_\infty\in \argmin_{\norm{\vx}_\infty \leq \frac{1}{\lambda}} L(\vx)$.
\end{theorem}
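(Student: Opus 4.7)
The plan is to leverage the two observations emphasized in the abstract: \adam's normalized direction $\vm_t/\sqrt{\vv_t}$ behaves like $\sign(\nabla L(\vx))$, and the resulting \signgd step with decoupled weight decay coincides with a \frankwolfe step on the $\ell_\infty$ ball of radius $1/\lambda$, since $\vx_t = (1-\eta_t\lambda)\vx_{t-1} - \eta_t \sign(\nabla L(\vx_{t-1}))$ can be rewritten as $(1-\eta'_t)\vx_{t-1} + \eta'_t\cdot\bigl(-\sign(\nabla L(\vx_{t-1}))/\lambda\bigr)$ with effective step size $\eta'_t = \eta_t\lambda$. The KKT system to be verified at $\vx_\infty$ is: there exist non-negative multipliers $\mu_{i,+},\mu_{i,-}\geq 0$ with $\partial_i L(\vx_\infty) = \mu_{i,-} - \mu_{i,+}$, complementary slackness $\mu_{i,+}(1/\lambda - (\vx_\infty)_i) = 0 = \mu_{i,-}(1/\lambda + (\vx_\infty)_i)$, and feasibility $\|\vx_\infty\|_\infty\leq 1/\lambda$.

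For the KKT part, I would proceed as follows. A standard bound gives $\|\vm_t/\sqrt{\vv_t}\|_\infty\leq 1+o(1)$ when $\beta_1\leq \beta_2$, and $\vx_t\to\vx_\infty$ together with continuous differentiability forces $\vg_t\to\nabla L(\vx_\infty)$, so both EMAs converge and $(\vm_t/\sqrt{\vv_t})_i\to\sign(\partial_i L(\vx_\infty))$ on every coordinate with $\partial_i L(\vx_\infty)\neq 0$. Telescoping the \adamw update yields $\vx_T - \vx_0 = -\sum_{t=1}^T \eta_t\bigl(\vm_t/\sqrt{\vv_t} + \lambda\vx_{t-1}\bigr)$; convergence of the left-hand side combined with $\sum\eta_t=\infty$ and the limits just established forces $\sign(\partial_i L(\vx_\infty)) + \lambda(\vx_\infty)_i = 0$ at every coordinate with nonzero gradient, which pins $(\vx_\infty)_i$ to $\pm 1/\lambda$ with the correct sign and determines the multipliers $\mu_{i,+} = (-\partial_i L(\vx_\infty))_+$, $\mu_{i,-} = (\partial_i L(\vx_\infty))_+$. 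Feasibility on the remaining (zero-gradient) coordinates follows from the one-step contraction $(|(\vx_t)_i|-1/\lambda)_+ \leq (1-\eta_t\lambda)(|(\vx_{t-1})_i|-1/\lambda)_+$ (valid for $t$ large enough that $\eta_t\lambda\leq 1$), combined with $\sum\eta_t=\infty$.

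For the additional convex statement, once KKT is in hand the upgrade to global optimality is a standard argument: for any feasible $\vx$, convexity gives $L(\vx) - L(\vx_\infty)\geq \langle \nabla L(\vx_\infty),\vx - \vx_\infty\rangle = \sum_i (\mu_{i,-} - \mu_{i,+})(x_i - (\vx_\infty)_i)$, and substituting $\mu_{i,+}(\vx_\infty)_i = \mu_{i,+}/\lambda$ and $\mu_{i,-}(\vx_\infty)_i = -\mu_{i,-}/\lambda$ (both from complementary slackness) rewrites this as $\sum_i \mu_{i,+}(1/\lambda - x_i) + \sum_i \mu_{i,-}(1/\lambda + x_i)$, which is termwise non-negative because $|x_i|\leq 1/\lambda$. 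Thus $\vx_\infty\in\argmin_{\|\vx\|_\infty\leq 1/\lambda}L(\vx)$.

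The main obstacle lies in the KKT part — making the heuristic $\vm_t/\sqrt{\vv_t}\to\sign(\nabla L(\vx_\infty))$ quantitative enough to pair with the divergent-partial-sum condition: transient error in the EMAs must be controlled uniformly in $t$, coordinates with $\partial_i L(\vx_\infty) = 0$ (where $\sign$ is discontinuous and only the one-sided bound $|(\vm_t/\sqrt{\vv_t})_i|\leq 1+o(1)$ is available) must be handled separately via weight-decay contraction, and the two arguments must be combined to yield feasibility \emph{and} the sign-matching condition simultaneously. The convex upgrade is, by contrast, a routine KKT-sufficiency calculation.
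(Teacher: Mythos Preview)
Your overall strategy---telescoping the update to identify the weighted average of $\vm_t/\sqrt{\vv_t}$ with $-\lambda\vx_\infty$, splitting coordinates by whether $\partial_i L(\vx_\infty)$ vanishes, showing $(\vm_t/\sqrt{\vv_t})_i\to\sign(\partial_i L(\vx_\infty))$ on nonzero-gradient coordinates, and the convex KKT-sufficiency calculation---matches the paper. The gap is in your treatment of the zero-gradient coordinates.

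The bound $|(\vm_t/\sqrt{\vv_t})_i|\leq 1+o(1)$ that you call ``standard'' is false when $\beta_1<\beta_2$. The actual standard bound (Cauchy--Schwarz, already in \citet{kingma2014adam}) is only $|(\vm_t/\sqrt{\vv_t})_i|\leq\sqrt{(1-\beta_1)/(1-\beta_2)}$, which can be much larger than $1$; the paper explicitly flags that ``for any single $t$, $\|\vDelta_t\|$ could be larger than $1$.'' No $1+o(1)$ improvement is available even under $g_{t,i}\to 0$: with $\beta_1=0$ one has $m_t=g_t$ and $v_t=(1-\beta_2)\sum_{s\le t}\beta_2^{t-s}g_s^2$, so whenever $g_t^2$ dominates $\beta_2 v_{t-1}$ one gets $|\Delta_t|\approx 1/\sqrt{1-\beta_2}$, and this can recur infinitely often along a sequence $g_t\to 0$. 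With only a uniform bound $|\Delta_{t,i}|\le C$ for some constant $C>1$, your one-step contraction becomes $(|(\vx_t)_i|-C/\lambda)_+\le(1-\eta_t\lambda)(|(\vx_{t-1})_i|-C/\lambda)_+$ and yields only $|(\vx_\infty)_i|\le C/\lambda$, not $1/\lambda$.

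The paper's fix is to abandon pointwise control of $\Delta_{t,i}$ and instead bound the \emph{weighted average} $\bigl|\sum_{t\le T}\eta_t\Delta_{t,i}\bigr|\big/\sum_{t\le T}\eta_t$ directly (\Cref{lem:amortized_update_bound}). This is exactly the right quantity: your own telescoping already shows this average converges to $-\lambda(\vx_\infty)_i$, so bounding it by $1$ \emph{is} feasibility on that coordinate. The proof applies Cauchy--Schwarz across both the $t$-sum and the momentum expansion, replaces $(1-\beta_2)g_s^2$ by $v_s-\beta_2 v_{s-1}$, and then Abel-sums to collect the result into $\sum_t\alpha_t\ln(v_t/v_1)$ with coefficients $\alpha_t$ that are controlled using the non-increasing learning-rate hypothesis together with $v_{t,i}\to 0$ on zero-gradient coordinates. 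This averaged bound is the paper's main technical lemma, and without it (or an equivalent device) your argument does not close.
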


Despite being simplistic, the full-batch setting is still a very interesting and highly non-trivial regime, because the two main hypotheses of why \adam outperforms \sgd got challenged recently in the deterministic regime~\citep{kunstner2023noise}. The first hypothesis is that \adam outperforms \sgd by better handling heavy-tailed noise~\citep{zhang2020adaptive}. However, \citet{kunstner2023noise} finds that  \adam still outperforms \gd for optimizing language tasks even in the full-batch setting. The second hypothesis is the smoothness of the training loss landscape can linearly increase as the gradient norm increases and thus clipping or normalization is necessary for gradient descent. Intriguingly,  \citet{kunstner2023noise} finds that normalizing each update of GD cannot close the gap towards \adam in the full-batch setting, but normalizing \emph{each coordinate} to its sign (\emph{i.e.}, \signgd) closes the gap. 

The theoretical results and analysis in this work support the empirical observation made by \citet{kunstner2023noise}. The way we prove \Cref{thm:main} is first to prove that normalized steepest descent with weight decay (\nsdwd) for any norm $\norm{\cdot}$ must converge to KKT points of the constrained optimization problem $\min_{\norm{\vx}\le \frac{1}{\lambda}} L(\vx)$ (\Cref{thm:nsd_wd_main}). Then we show that \adamw asymptotically behaves just like \signgd with weight decay, which is the normalized steepest descent w.r.t. $\ell_\infty$ norm with weight decay and the same proof framework generalizes to \adamw if $\beta_1\le \beta_2<1$. The condition $\beta_1\le \beta_2$ is crucial, and we provide a counter-example where $1>\beta_1>\beta_2$ and \adamw converges somewhere else, instead of a constrained minimizer.

It remains interesting why \signgd outperforms \normalizedgd in the full batch setting. They are both normalized steepest descent but with respect to different norms -- \normalizedgd picks the steepest direction under the geometry of $\ell_2$ norm, while \signgd picks the steepest direction under the geometry of the $\ell_\infty$ norm.  It is natural to make the following conjecture: \emph{\adam outperforms \gd due to its utilization of $\ell_\infty$ geometry, under which the loss function could have better properties, e.g., smaller smoothness}. Our main result~\Cref{thm:main} provides positive evidence for this conjecture. We also provide a convergence analysis for normalized steepest descent with weight decay for convex loss, where the suboptimality against the constrained minimizer in norm ball of radius $\frac{1}{\lambda}$ vanishes is $O(\frac{H}{T\lambda^2})$, where $T$ is the total number of steps, $\lambda$ is the weight decay factor, and $H$ is the smoothness of loss w.r.t. the particular norm used for picking the steepest descent direction. Based on the convergence bound, we construct a concrete $d$-dimensional loss function in \Cref{subsec:convex_convergence} whose minimizer $\vx^*$ satisfies $\norm{\vx^*}_2\approx \sqrt{d} \norm{\vx^*}_\infty$ and \signgd with weight decay converges much faster than \normalizedgd with weight decay because \signgd with weight decay can use a $\sqrt{d}$ times larger weight decay factor $\lambda$ than \normalizedgd. 

\paragraph{Contributions.} Below we summarize our contributions:
\begin{enumerate}
    \item In \Cref{subsec:convex_convergence}, we prove normalized steepest descent with weight decay optimizes convex functions under norm constraints~(\Cref{thm:1/t_convergence}). In \Cref{subsec:convergence_to_KKT_points}, we prove it must converge to KKT points of the norm-constrained optimization problem for general loss functions if it converges with a learning rate schedule whose partial sum diverges~(\Cref{thm:nsd_wd_main}). 
    \item In \Cref{sec:adamw}, we prove \adamw must converge to KKT points of the norm-constrained optimization problem for general loss functions if it converges with a non-increasing learning rate schedule whose partial sum diverges~(\Cref{thm:main}). 
    \item Towards generalizing the proof of \Cref{thm:nsd_wd_main} to \Cref{thm:main}, we prove a novel and tight upper bound on average update size of \adam~(\Cref{lem:amortized_update_bound}), which holds even for non-deterministic settings as well and might be of independent interest to the community. We test various predictions made by our bound in experiments.
\end{enumerate}
\section{Preliminaries and Notations}\label{sec:prelim}

\paragraph{Notations:} We use $\norm{\cdot}$ to denote a general norm and $\norm{\cdot}_*$ to denote its dual norm. We say a function $L:\mathbb{R}^d\to \mathbb{R}$ has $H$-lipschitz gradient w.r.t. norm $\norm{\cdot}$ for some $H>0$ iff for all $\vx,\vy\in\mathbb{R}^d$, $\norm{\nabla L(\vx) - \nabla L(\vy)}_* \leq H \norm{\vx-\vy}$. We define the \emph{smoothness} of loss $L$ as the smallest positive $H$ w.r.t. $\norm{\cdot}$ such that $L$ has $H$-lipschitz gradient. We say a function $L:\mathbb{R}^d\to \mathbb{R}$ is convex iff for any $\vx,\vy\in\mathbb{R}^d$ and $\theta\in[0,1]$, it holds that $L(\theta\vx+(1-\theta)\vy)\le \theta L(\vx) + (1-\theta)L(\vy)$. We define the \emph{subgradients} of convex function $L$ at point $\vx$ as $\{\vg\in\mathbb{R}^d\mid L(\vy)\ge L(\vx)+\inner{\vy-\vx}{\vg},\forall \vy\in\mathbb{R}^d\}$, which is denoted by $\partial L(\vx)$. When $L$ is differentiable at $\vx$,  $\partial L(\vx)$ contains only one element, which is the gradient $\nabla L(\vx)$. In particular, all norms are convex functions and we have the following standard lemma for the subgradients of norms:

\begin{lemma}\label{lem:norm_subgradient}
    For any norm $\norm{\cdot}$ and $\vx\in\mathbb{R}^d$,  $\partial \norm{\vx} = \{\vDelta\in\mathbb{R}^d\mid \norm{\vDelta}_*=1,\inner{\vDelta}{\vx} =\norm{\vx}\}$.
\end{lemma}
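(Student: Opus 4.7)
The plan is to prove this standard characterization of the subdifferential of a norm by showing the two inclusions separately, using only the definition of the subgradient and of the dual norm $\norm{\vDelta}_* = \sup_{\norm{\vy} \le 1} \inner{\vDelta}{\vy}$.

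For the inclusion $\supseteq$, I would start with any $\vDelta$ satisfying $\norm{\vDelta}_* = 1$ and $\inner{\vDelta}{\vx} = \norm{\vx}$. For an arbitrary $\vy \in \mathbb{R}^d$, the dual-norm inequality gives $\inner{\vDelta}{\vy} \le \norm{\vDelta}_* \norm{\vy} = \norm{\vy}$, and rewriting this as $\norm{\vy} \ge \inner{\vDelta}{\vx} + \inner{\vDelta}{\vy - \vx} = \norm{\vx} + \inner{\vDelta}{\vy - \vx}$ is exactly the subgradient inequality, so $\vDelta \in \partial \norm{\vx}$.

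For the inclusion $\subseteq$, I would take $\vDelta \in \partial \norm{\vx}$ and extract the two conditions separately. To pin down $\inner{\vDelta}{\vx} = \norm{\vx}$, I would plug $\vy = 2\vx$ into the subgradient inequality to get $\inner{\vDelta}{\vx} \le \norm{\vx}$, and plug $\vy = 0$ to get $\inner{\vDelta}{\vx} \ge \norm{\vx}$. To establish $\norm{\vDelta}_* = 1$, I would shift and apply the triangle inequality: for any $\vz$, the subgradient inequality at $\vy = \vx + \vz$ gives $\norm{\vx + \vz} \ge \norm{\vx} + \inner{\vDelta}{\vz}$, and combining with $\norm{\vx + \vz} \le \norm{\vx} + \norm{\vz}$ yields $\inner{\vDelta}{\vz} \le \norm{\vz}$ for every $\vz$, hence $\norm{\vDelta}_* \le 1$. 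In the case $\vx \neq 0$, the reverse inequality $\norm{\vDelta}_* \ge 1$ follows immediately from $\inner{\vDelta}{\vx/\norm{\vx}} = 1$.

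There is essentially no hard step here; the only mild subtlety is the edge case $\vx = 0$, where the subdifferential is actually the whole dual unit ball $\{\vDelta : \norm{\vDelta}_* \le 1\}$ rather than the dual unit sphere. I would either restrict the statement to $\vx \neq 0$ (which is the regime in which \Cref{lem:norm_subgradient} is invoked in the body of the paper, since it is applied to iterates that are moved by normalized steepest descent and thus have positive norm) or note this boundary case explicitly. Modulo that caveat, the proof is a short two-direction argument using only the definitions of subgradient and dual norm together with the triangle inequality.
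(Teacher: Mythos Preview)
The paper does not actually prove \Cref{lem:norm_subgradient}; it is stated in the preliminaries as a ``standard lemma'' and never revisited in the appendix. So there is no proof in the paper to compare against.

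Your argument is the standard one and is correct for $\vx \neq 0$: the $\supseteq$ direction via the dual-norm inequality and the $\subseteq$ direction via the choices $\vy = 0, 2\vx, \vx + \vz$ are exactly the moves one would expect. Your observation about the edge case $\vx = 0$ is also accurate --- the lemma as stated is literally false there, since $\partial\norm{0}$ is the full dual unit ball while the right-hand side is only the dual unit sphere. The paper does not address this, and indeed \Cref{lem:alternative_KKT} (which relies on \Cref{lem:norm_subgradient}) is applied at points where $\nabla L(\vx_\infty)$ may vanish, so the sloppiness is real but harmless for the downstream arguments (the KKT condition is trivially satisfied when the gradient is zero). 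Your proposed fix --- either restrict to $\vx \neq 0$ or note the boundary case explicitly --- is the right call.
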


\paragraph{Steepest Descent:} We say $\vv$ is a \emph{steepest descent direction} for objective function $L$ at current iterate $\vx$ w.r.t. norm $\norm{\cdot}$ iff $\norm{\vv} =1$ and $\inner{\vv}{\nabla L(\vx)} = \min_{\norm{\vv'}\le 1} \inner{\vv'}{\nabla L(\vx)}$. Thus for all steepest descent direction $\vv$, we have that $\inner{\vv}{\nabla L(\vx)} = -\norm{\nabla L(\vx)}_*$. 

Given initialization $\vx_0$, learning rate schedule $\{\eta_t\}_{t=0}^\infty$ and weight decay factor $\lambda$,
the $t$th iterate of \emph{normalized steepest descent} w.r.t. $\norm{\cdot}$ with decoupled weight decay is defined as 
\begin{equation}\label{eq:def_nsd_wd}
    \vx_{t} = (1-\lambda \eta_t)\vx_{t-1} - \eta_t \vDelta_t  \text{, where } \vDelta_t \in \argmax_{\norm{\vDelta} \leq 1} \nabla L(\vx_{t-1})^\top \vDelta.
\end{equation} 
Because the dual norm of the dual norm is always equal to the original norm, by \Cref{lem:norm_subgradient}, we can also characterize the steepest descent directions as the subgradient of its dual norm.
\begin{lemma}\label{lem:subgradient_of_dual_norm_is_steepest_descent_direction}
    $\argmax\limits_{\norm{\vDelta} \leq 1} \nabla L(\vx)^\top \vDelta = \left. \partial \norm{\vy}_*\right|_{\vy = \nabla L(\vx)}$.
\end{lemma}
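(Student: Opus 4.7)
The plan is to invoke \Cref{lem:norm_subgradient} with the roles of $\norm{\cdot}$ and $\norm{\cdot}_*$ swapped, exploiting the biduality identity $(\norm{\cdot}_*)_* = \norm{\cdot}$ that holds for any norm on $\mathbb{R}^d$. Setting $\vy = \nabla L(\vx)$ and applying \Cref{lem:norm_subgradient} to the norm $\norm{\cdot}_*$ at the point $\vy$ yields, for $\vy \neq \vec{0}$,
\[
\partial \norm{\vy}_* = \{\vDelta\in\mathbb{R}^d \mid \norm{\vDelta}=1,\ \inner{\vDelta}{\vy}=\norm{\vy}_*\}.
\]

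For the left-hand side of the lemma, I would unpack the definition $\norm{\vy}_* = \sup_{\norm{\vDelta}\le 1}\inner{\vDelta}{\vy}$ to obtain
\[
\argmax_{\norm{\vDelta}\le 1}\inner{\nabla L(\vx)}{\vDelta} = \{\vDelta \mid \norm{\vDelta}\le 1,\ \inner{\vDelta}{\vy}=\norm{\vy}_*\}.
\]
To reconcile this with the subgradient set I need to upgrade the inequality $\norm{\vDelta}\le 1$ to an equality. The generalized H\"older inequality $\inner{\vDelta}{\vy}\le \norm{\vDelta}\cdot\norm{\vy}_*$ handles this: whenever $\vy\neq\vec{0}$ and $\inner{\vDelta}{\vy} = \norm{\vy}_*$, dividing by $\norm{\vy}_*>0$ forces $\norm{\vDelta}\ge 1$, which combined with $\norm{\vDelta}\le 1$ pins $\norm{\vDelta}$ to exactly $1$. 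Hence the two sets coincide in the nonzero case.

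The only remaining wrinkle is the degenerate case $\nabla L(\vx) = \vec{0}$, where \Cref{lem:norm_subgradient} literally applied returns only the unit sphere rather than the closed unit ball. I would handle this directly from the definition of subgradient: $\vDelta \in \partial\norm{\vec{0}}_*$ iff $\norm{\vy'}_*\ge \inner{\vDelta}{\vy'}$ for all $\vy'$, which by the characterization of $\norm{\cdot}$ as the dual of $\norm{\cdot}_*$ is equivalent to $\norm{\vDelta}\le 1$. The argmax side trivially equals the same closed unit ball, so both sides agree here as well.

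The main (mild) obstacle is precisely this corner case, since \Cref{lem:norm_subgradient} as stated is slightly off at $\vx = \vec{0}$ and must be either corrected or bypassed; apart from that, the proof reduces to one invocation of biduality plus one application of H\"older.
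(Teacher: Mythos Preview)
Your proposal is correct and follows exactly the approach the paper indicates: the sentence preceding \Cref{lem:subgradient_of_dual_norm_is_steepest_descent_direction} says that the result follows from biduality $(\norm{\cdot}_*)_* = \norm{\cdot}$ together with \Cref{lem:norm_subgradient}, and the paper gives no further proof. Your write-up is in fact more careful than the paper, since you explicitly flag and resolve the degenerate case $\nabla L(\vx)=\vec{0}$ where \Cref{lem:norm_subgradient} as stated yields only the unit sphere rather than the full unit ball.
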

For completeness, we also define the \emph{steepest descent} w.r.t. $\norm{\cdot}$ with decoupled weight decay below, though we will not use it in our analysis. If we pick $\ell_2$ norm, \Cref{eq:def_sd_wd} becomes standard gradient descent.  
\begin{equation}\label{eq:def_sd_wd}
    \tilde \vx_{t} = (1-\lambda \eta_t)\tilde \vx_{t-1} - \eta_t \tilde \vDelta_t  \text{, where } \tilde \vDelta_t \in \argmax_{\tilde \vDelta\in\mathbb{R}^d} \left(\nabla L(\vx_{t-1})^\top \tilde \vDelta - \frac{1}{2}\norm{\tilde \vDelta}^2\right).
\end{equation}

It can be shown that for each steepest descent update $\tilde \vDelta$ for objective $L$ at $\vx$, there exists some normalized steepest descent update $\vDelta$ satisfying $\tilde \vDelta = \norm{\nabla L(\vx)}_*\vDelta$.

\section{Warm Up: Implicit Bias of Normalized Steepest Descent w. Weight Decay}\label{sec:warm_up}

In this section, we aim to present some high-level intuition about the constrained-minimization implicit bias of \adamw~(\Cref{thm:main}), by showing the same implicit bias for \signgd with weight decay, or equivalently, normalized steepest descent w.r.t. $\ell_\infty$ norm. \adamw is arguably a smoothed version of \signgd, which reduces the correlation between its numerator and denominator by using past moving average and thus reduces the biasedness of the update direction in the presence of noise. But intuitively, their behaviors are similar when there is no noise and the learning rate is small. 

Our analysis in this section holds for all norms, including the non-differentiable ones, like $\norm{\cdot}_\infty$.
\zhiyuan{more comments}

\subsection{Convex setting: constrained optimization}\label{subsec:convex_convergence} 

In this subsection, we give a simple non-asymptotic convergence analysis for normalized Steepest descent w. weight decay (\nsdwd) w.r.t. to general norms over smooth convex loss functions. If the norm of initialization is no larger than $\frac{1}{\lambda}$ where $\lambda$ is the weight decay factor then surprisingly \nsdwd is exactly equivalent to a well-known optimization algorithm in literature, \frankwolfe~\citep{frank1956algorithm}, where the constraint set here is the norm ball with radius $\frac{1}{\lambda}$.  If the norm of initialization is larger than $\frac{1}{\lambda}$, then the analysis contains an additional phase where the norm of iterates linearly converges to $\frac{1}{\lambda}$. In this case, the iterate of \nsdwd may always be outside the $\frac{1}{\lambda}$ norm ball, but still, the convergence analysis of \frankwolfe can be adopted (\emph{e.g.},~\citet{jaggi2013revisiting}). 
First, we show that the norm of the iterates will shrink to $\frac{1}{\lambda}$ as long as the norm of each update is bounded by $1$, i.e., $\norm{\vDelta_t} \leq 1$. Note this conclusion doesn't use the convexity of the function $L(\vx)$ nor the update $\vDelta_t$ being the steepest descent direction. It can hold under non-deterministic settings.
\begin{lemma} \label{lem:convergence_to_ball}
    For any learning rate schedule $\{\eta_t\}_{t=1}^\infty$ and update $\{\vDelta_t\}_{t=1}^\infty$ such that $\lambda \eta_t < 1$ and $\norm{\vDelta_t} \leq 1$, $\norm{\vx_t} - \frac{1}{\lambda} \le \max\left( e^{-\lambda \sum_{i=1}^{t} \eta_i } \left(\norm{\vx_0} - \frac{1}{\lambda} \right) ,0\right)$.
\end{lemma}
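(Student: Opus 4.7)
The plan is to apply the triangle inequality to the \nsdwd update, factor out the weight-decay contraction, and then iterate. Specifically, since $\vx_t = (1-\lambda \eta_t)\vx_{t-1} - \eta_t \vDelta_t$ with $1 - \lambda \eta_t > 0$ and $\norm{\vDelta_t} \le 1$, the triangle inequality gives
\begin{equation*}
    \norm{\vx_t} \;\le\; (1-\lambda \eta_t)\norm{\vx_{t-1}} + \eta_t.
\end{equation*}
The key algebraic step is to subtract $\tfrac{1}{\lambda}$ from both sides and notice that $\eta_t - \tfrac{1}{\lambda} = -\tfrac{1}{\lambda}(1-\lambda \eta_t)$, which lets us factor $(1-\lambda\eta_t)$ cleanly:
\begin{equation*}
    \norm{\vx_t} - \tfrac{1}{\lambda} \;\le\; (1-\lambda\eta_t)\bigl(\norm{\vx_{t-1}} - \tfrac{1}{\lambda}\bigr).
\end{equation*}

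Writing $a_t := \norm{\vx_t} - \tfrac{1}{\lambda}$, this is the one-step recursion $a_t \le (1-\lambda\eta_t)\,a_{t-1}$ with a strictly positive multiplier in $(0,1)$. I then split into two cases based on the sign of $a_0$. If $\norm{\vx_0} \ge \tfrac{1}{\lambda}$, iterating the recursion and using the standard inequality $1-x \le e^{-x}$ yields
\begin{equation*}
    a_t \;\le\; \prod_{i=1}^{t}(1-\lambda\eta_i)\,a_0 \;\le\; e^{-\lambda \sum_{i=1}^{t}\eta_i}\bigl(\norm{\vx_0}-\tfrac{1}{\lambda}\bigr),
\end{equation*}
which matches the first branch of the max. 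If $\norm{\vx_0} < \tfrac{1}{\lambda}$, a short induction using the same recursion shows $a_t \le 0$ for all $t$: whenever $a_{t-1} \le 0$, multiplying by the positive factor $(1-\lambda\eta_t)$ preserves non-positivity, so $a_t \le 0$, matching the zero branch of the max.

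There is no real obstacle here beyond getting the factoring $\eta_t - \tfrac{1}{\lambda} = -\tfrac{1}{\lambda}(1-\lambda\eta_t)$ right and handling the sign split cleanly; the argument uses only the triangle inequality and the norm bound $\norm{\vDelta_t}\le 1$, so it requires neither convexity of $L$ nor that $\vDelta_t$ be a steepest descent direction. This is exactly the generality the lemma advertises and is what makes it reusable in the non-deterministic \adamw analysis later on, where the per-step update directions only need to satisfy an approximate unit-norm bound.
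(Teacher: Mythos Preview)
Your proof is correct and matches the paper's own argument essentially line for line: both apply the triangle inequality to the update, use the factoring $\eta_t - \tfrac{1}{\lambda} = -\tfrac{1}{\lambda}(1-\lambda\eta_t)$ to obtain the one-step contraction $\norm{\vx_t}-\tfrac{1}{\lambda}\le(1-\lambda\eta_t)\bigl(\norm{\vx_{t-1}}-\tfrac{1}{\lambda}\bigr)$, iterate it, and then split on the sign of $\norm{\vx_0}-\tfrac{1}{\lambda}$ with the bound $1-x\le e^{-x}$ in the positive case.
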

The proof is deferred to \Cref{sec:convergence_to_ball}. \Cref{lem:convergence_to_ball} shows that $\vx_t$ is either always inside the norm ball with radius $\frac{1}{\lambda}$, or their distance shrinks exponentially as the sum of learning rates increases. Whenever $\vx_t$ gets into the norm ball with radius $\frac{1}{\lambda}$, $\vx_t$ will not leave it and the remaining trajectory of \nsdwd is exactly the same as \frankwolfe, as shown in the following theorem. We note the relationship between \frankwolfe and steepest descent algorithms is also observed very recently in the continuous case~\citep{chen2023lion}.

\begin{theorem}
    For any norm $\norm{\cdot}$,  weight decay $\lambda$, and $\norm{\vx_{t-1}}\le \frac{1}{\lambda}$, \nsdwd with learning rate $\eta_t<\frac{1}{\lambda}$ and \frankwolfe~(\Cref{alg:frank_wolfe}) with step size $\gamma_t = \eta_t\lambda$ and convex set $\mathcal{X} \triangleq \{\vy\mid \norm{\vy}\le \frac{1}{\lambda}\}$ generate the same next iterate $\vx_{t}$.
\end{theorem}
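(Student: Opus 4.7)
The plan is to verify the claim by directly expanding both update rules and identifying them term by term. The key bridge is recognizing that the Frank--Wolfe linear minimization oracle (LMO) over the norm ball $\mathcal{X} = \{\vy : \norm{\vy} \leq \frac{1}{\lambda}\}$ produces exactly $-\frac{1}{\lambda} \vDelta_t$, where $\vDelta_t$ is the NSD ascent direction used in \Cref{eq:def_nsd_wd}.

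First I would unpack Frank--Wolfe. At iterate $\vx_{t-1}$ (which lies in $\mathcal{X}$ by the hypothesis $\norm{\vx_{t-1}} \leq \frac{1}{\lambda}$), the next Frank--Wolfe iterate with step size $\gamma_t = \eta_t \lambda$ is $\vx_t^{\mathrm{FW}} = (1-\gamma_t)\vx_{t-1} + \gamma_t \vs_t$, where
\begin{equation*}
    \vs_t \in \argmin_{\norm{\vs} \leq \frac{1}{\lambda}} \inner{\nabla L(\vx_{t-1})}{\vs}.
\end{equation*}
Because the constraint set is a norm ball (a positively homogeneous set), I can pull out the radius: substituting $\vs = -\frac{1}{\lambda}\vDelta$ converts the minimization over $\norm{\vs} \leq \frac{1}{\lambda}$ into
\begin{equation*}
    -\tfrac{1}{\lambda}\max_{\norm{\vDelta}\leq 1}\inner{\nabla L(\vx_{t-1})}{\vDelta},
\end{equation*}
so the LMO returns $\vs_t = -\frac{1}{\lambda}\vDelta_t$ with $\vDelta_t \in \argmax_{\norm{\vDelta}\leq 1}\inner{\nabla L(\vx_{t-1})}{\vDelta}$, which is precisely the NSD ascent direction of \Cref{eq:def_nsd_wd}.

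Plugging $\gamma_t = \eta_t\lambda$ and $\vs_t = -\frac{1}{\lambda}\vDelta_t$ into the Frank--Wolfe convex combination gives
\begin{equation*}
    \vx_t^{\mathrm{FW}} = (1 - \eta_t\lambda)\vx_{t-1} + \eta_t\lambda \cdot \Big(-\tfrac{1}{\lambda}\vDelta_t\Big) = (1 - \lambda\eta_t)\vx_{t-1} - \eta_t\vDelta_t,
\end{equation*}
which is identical to the NSD-WD update. The hypothesis $\eta_t < \frac{1}{\lambda}$ guarantees $\gamma_t \in (0,1)$ so the Frank--Wolfe step is a valid convex combination and $\vx_t^{\mathrm{FW}} \in \mathcal{X}$; the hypothesis $\norm{\vx_{t-1}} \leq \frac{1}{\lambda}$ guarantees the Frank--Wolfe iterate is admissible in the first place. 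Since the choice of $\vDelta_t$ (resp.\ $\vs_t$) need not be unique, I should state the equivalence as: for every tie-breaking rule for the NSD ascent direction there is a matching tie-breaking rule for the Frank--Wolfe LMO yielding the same $\vx_t$, and vice versa.

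There is no real obstacle here beyond careful bookkeeping of signs and the scaling factor $\frac{1}{\lambda}$; the whole argument is a direct verification once the LMO over the norm ball is identified with a rescaling of the unit-ball dual optimizer. The mild subtlety worth flagging in the text is that the equivalence is per-step and relies on starting inside $\mathcal{X}$, which is why \Cref{lem:convergence_to_ball} is invoked in the surrounding discussion to justify reducing to this case after a burn-in phase when $\norm{\vx_0} > \frac{1}{\lambda}$.
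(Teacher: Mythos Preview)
Your proposal is correct. The paper does not actually supply a proof of this theorem; it is stated as an observation immediately following \Cref{lem:convergence_to_ball} and left without a \texttt{proof} environment in either the main text or the appendix. Your direct verification---identifying the Frank--Wolfe LMO over the ball of radius $\frac{1}{\lambda}$ with $-\frac{1}{\lambda}\vDelta_t$ and then substituting $\gamma_t=\eta_t\lambda$---is exactly the one-line algebra the authors implicitly rely on, and your remark about matching tie-breaking rules when the argmax is non-unique is a welcome clarification that the paper omits.
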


\begin{algorithm}
    \caption{\frankwolfe}\label{alg:frank_wolfe}
    \begin{algorithmic}
        \INPUT{convex set $\mathcal{X}$, $\vx_0 \in \mathcal{X}$, total steps $T$, step sizes $\{\gamma_t\}_{t=1}^T$}
        \For{$t=1, 2, \cdots, T$}
        \State $\vy_t \gets \argmin_{\vy \in \mathcal{X}} \nabla L(\vx_{t-1})^\top \vy$
        \State $\vx_t \gets (1-\gamma_t) \vx_{t-1} + \gamma_t \vy_t$
        \EndFor
        \State \Return $\vx_T$
    \end{algorithmic}
\end{algorithm}

Define $\vx^*=\argmin_{\norm{\vx}\leq \frac{1}{\lambda}} L(\vx)$ to be the constrained minimizer of convex function $L(\vx)$. We first compute how much the gap between $L(\vx_t)$ and $L(\vx^*)$ can decrease in one normalized steepest descent step when the iterate $\vx_t$ is bounded. 
\begin{lemma}[Descent Lemma for Smooth Convex Loss]\label{lem:one_step_nsd}
    Suppose loss function $L$ is convex and has $H$-lipschitz gradient w.r.t. norm $\norm{\cdot}$. For iterates $\{\vx_t\}$ in \nsdwd (\Cref{eq:def_nsd_wd}), we have that \[L(\vx_{t}) - L(\vx^*) \leq (1-\lambda \eta_t)(L(\vx_{t-1}) - L(\vx^*)) + \frac{H}{2} \eta_t^2\left(1 + \lambda \norm{\vx_{t-1}} \right)^2.\] 
\end{lemma}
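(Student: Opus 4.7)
The plan is the standard descent-lemma recipe: apply smoothness to bound $L(\vx_t)$ in terms of $L(\vx_{t-1})$ and a first-order term, estimate the quadratic remainder using the triangle inequality, and then use convexity together with the constraint $\|\vx^*\| \le 1/\lambda$ to convert the first-order term into a contraction toward $L(\vx^*)$. Concretely, from $H$-Lipschitz gradient w.r.t.\ $\|\cdot\|$ we have
\begin{equation*}
L(\vx_t)\;\le\;L(\vx_{t-1})+\nabla L(\vx_{t-1})^\top(\vx_t-\vx_{t-1})+\tfrac{H}{2}\|\vx_t-\vx_{t-1}\|^2.
\end{equation*}
Since $\vx_t-\vx_{t-1}=-\lambda\eta_t\vx_{t-1}-\eta_t\vDelta_t$ and $\|\vDelta_t\|\le 1$, the triangle inequality gives $\|\vx_t-\vx_{t-1}\|\le \eta_t(1+\lambda\|\vx_{t-1}\|)$, which takes care of the quadratic term in the target bound.

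For the linear term, I would substitute the explicit form of $\vx_t-\vx_{t-1}$ and use $\nabla L(\vx_{t-1})^\top\vDelta_t=\|\nabla L(\vx_{t-1})\|_*$ (from the definition of $\vDelta_t$ as the argmax over the unit norm ball) to get
\begin{equation*}
\nabla L(\vx_{t-1})^\top(\vx_t-\vx_{t-1})\;=\;-\lambda\eta_t\,\nabla L(\vx_{t-1})^\top\vx_{t-1}\;-\;\eta_t\,\|\nabla L(\vx_{t-1})\|_*.
\end{equation*}
The goal is to show this quantity is at most $\lambda\eta_t(L(\vx^*)-L(\vx_{t-1}))$, since adding $L(\vx_{t-1})$ to both sides of the smoothness inequality will then yield exactly the claim. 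After dividing by $\eta_t$, what is needed is
\begin{equation*}
-\lambda\,\nabla L(\vx_{t-1})^\top\vx_{t-1}-\|\nabla L(\vx_{t-1})\|_*\;\le\;\lambda\bigl(L(\vx^*)-L(\vx_{t-1})\bigr).
\end{equation*}

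Here convexity of $L$ supplies $L(\vx^*)-L(\vx_{t-1})\ge \nabla L(\vx_{t-1})^\top(\vx^*-\vx_{t-1})$, so it suffices to prove $-\|\nabla L(\vx_{t-1})\|_*\le \lambda\,\nabla L(\vx_{t-1})^\top\vx^*$, i.e.\ $-\lambda\,\nabla L(\vx_{t-1})^\top\vx^*\le \|\nabla L(\vx_{t-1})\|_*$. This is where the constraint $\|\vx^*\|\le 1/\lambda$ enters: by the generalized Cauchy--Schwarz/duality inequality, $|\nabla L(\vx_{t-1})^\top \vx^*|\le \|\nabla L(\vx_{t-1})\|_*\,\|\vx^*\|\le \tfrac{1}{\lambda}\|\nabla L(\vx_{t-1})\|_*$, which gives exactly what is needed. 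Chaining everything together proves the lemma.

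The proof is almost mechanical once the right decomposition is in hand; the only mildly subtle step is recognizing that the weight-decay term $-\lambda\eta_t\nabla L(\vx_{t-1})^\top\vx_{t-1}$ combines with the steepest-descent term $-\eta_t\|\nabla L(\vx_{t-1})\|_*$ to reproduce the Frank--Wolfe-style progress inequality $\lambda\eta_t\nabla L(\vx_{t-1})^\top(\vx^*-\vx_{t-1})$ modulo a slack that is exactly absorbed by the dual-norm bound $\lambda\|\vx^*\|\le 1$. This is the only place where the constraint-radius hypothesis on $\vx^*$ is used, and it is where I expect a reader to pause; everything else is bookkeeping.
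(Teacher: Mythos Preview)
Your proposal is correct and follows essentially the same approach as the paper: both use the smoothness inequality (you cite it directly, the paper derives it via the integral Taylor form), bound the quadratic remainder by the triangle inequality $\|\vx_t-\vx_{t-1}\|\le \eta_t(1+\lambda\|\vx_{t-1}\|)$, and handle the linear term by splitting $\vx_{t-1}=(\vx_{t-1}-\vx^*)+\vx^*$, applying convexity to the first piece and the dual-norm inequality with $\lambda\|\vx^*\|\le 1$ to the second. The organization differs only cosmetically.
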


The proof of \Cref{lem:one_step_nsd} is deferred to \Cref{sec:convergence_nsd}. 
With \Cref{lem:one_step_nsd}, we can prove the convergence of $L(\vx_t)$ for learning rate schedules with certain conditions. The proof is also deferred to \Cref{sec:convergence_nsd}.
\begin{theorem}\label{thm:any_rate_convergence}
Assume that $\eta_t \geq 0$, $\lim_{t \rightarrow \infty} \eta_t =0$ and $\sum_{t=1}^\infty \eta_t = \infty$. For any convex loss $L$ with $H$-lipschitz gradient, $\lim_{t \rightarrow \infty} L(\vx_t) = L(\vx^*)$.
\end{theorem}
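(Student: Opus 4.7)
The plan is to reduce the claim to a scalar recursion for $a_t := L(\vx_t) - L(\vx^*)$ and then analyze that recursion directly. Since $\eta_t \to 0$, we may, without loss of generality, discard a finite prefix and assume $\lambda \eta_t < 1$ throughout. By \Cref{lem:convergence_to_ball}, the iterates stay in a bounded region: $\norm{\vx_t} \le B := \max(\norm{\vx_0}, 1/\lambda)$. Substituting this uniform bound into \Cref{lem:one_step_nsd} gives the clean one-step recursion
\begin{equation*}
    a_t \le (1-\lambda \eta_t)\, a_{t-1} + C\, \eta_t^2, \qquad C := \tfrac{H}{2}\bigl(1+\lambda B\bigr)^2.
\end{equation*}

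Next, I would unroll the recursion. Let $\pi_t := \prod_{i=1}^{t}(1-\lambda \eta_i) > 0$ and $S_t := \sum_{i=1}^{t}\eta_i$. Dividing the inequality by $\pi_t$ and telescoping, then using $1-\lambda \eta_i \le e^{-\lambda \eta_i}$, yields
\begin{equation*}
    a_T \;\le\; \pi_T\, a_0 \;+\; C\, \pi_T \sum_{t=1}^{T} \frac{\eta_t^2}{\pi_t} \;\le\; \pi_T\, a_0 \;+\; C \sum_{t=1}^{T} \eta_t^2\, e^{-\lambda (S_T - S_t)}.
\end{equation*}
Because $\sum_i \eta_i = \infty$, $\pi_T \to 0$, so the first term vanishes.

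The main obstacle is controlling the remaining sum, since the standard stochastic-approximation hypothesis $\sum_t \eta_t^2 < \infty$ is \emph{not} available under our assumptions (e.g., $\eta_t = 1/\sqrt{t}$ is permitted). The key is a head/tail split governed by the decay of $\eta_t$. Fix $\delta > 0$; since $\eta_t \to 0$, pick $T_0$ with $\eta_t \le \delta$ for all $t > T_0$. The tail satisfies
\begin{equation*}
    \sum_{t=T_0+1}^{T} \eta_t^2\, e^{-\lambda (S_T - S_t)} \;\le\; \delta \sum_{t=T_0+1}^{T} \eta_t\, e^{-\lambda(S_T - S_t)} \;\le\; \frac{\delta}{\lambda},
\end{equation*}
by comparing the sum to $\int_0^{\infty} e^{-\lambda u}\, du$. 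The head $\sum_{t \le T_0} \eta_t^2\, e^{-\lambda(S_T - S_t)} \le e^{-\lambda(S_T - S_{T_0})}\sum_{t \le T_0}\eta_t^2 \to 0$ as $T \to \infty$. Letting $\delta \to 0$ gives $\limsup_T a_T \le 0$.

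To upgrade this to $\lim_T a_T = 0$, I would show $\liminf_T a_T \ge 0$. Since $\vx^*$ minimizes $L$ only over the constraint ball $\gB := \{\vx : \norm{\vx} \le 1/\lambda\}$, $a_t$ can be negative when $\vx_t \notin \gB$. However, \Cref{lem:convergence_to_ball} shows $\dist(\vx_t, \gB) \to 0$ exponentially, and $H$-Lipschitzness of $\nabla L$ together with the uniform bound $\norm{\vx_t} \le B$ makes $\nabla L$ uniformly bounded, hence $L$ uniformly Lipschitz, on the ball of radius $B$. Projecting $\vx_t$ onto $\gB$ and invoking this Lipschitz estimate yields $a_t \ge -K\, \dist(\vx_t, \gB) \to 0$ for some constant $K$, completing the proof.
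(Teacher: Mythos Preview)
Your proposal is correct and follows essentially the same route as the paper: reduce to a scalar recursion via \Cref{lem:one_step_nsd} and the uniform norm bound from \Cref{lem:convergence_to_ball}, unroll, and control the remaining sum by a head/tail split (the paper packages this step as a standalone lemma, \Cref{lem:lr_convergence}, with nearly identical bookkeeping). Your final paragraph establishing $\liminf_T a_T \ge 0$ via projection onto the constraint ball is a welcome addition that the paper omits; without it the recursion analysis only yields $\limsup_T a_T \le 0$, since $a_t = L(\vx_t) - L(\vx^*)$ need not be nonnegative when $\vx_t$ lies outside the ball.
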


We also provide a specific example of learning rates $\{\eta_t\}_{t=1}^\infty$ that can achieve $O(\frac{1}{t})$ convergence of $f(\vx_t)$, which is the same as \frankwolfe over convex objectives~\citep{jaggi2013revisiting} and the proof is standard. For completeness, we provide a proof of \Cref{thm:1/t_convergence} in \Cref{sec:convergence_nsd}.

\begin{theorem}\label{thm:1/t_convergence}
     Define $B = \max{\{\norm{\vx_0}, \frac{1}{\lambda}\}}$. For \nsdwd with learning rate schedule $\eta_t = \frac{2}{\lambda (t+1)}$, we have $L(\vx_t) - L(\vx^*) \leq \frac{2H(1+\lambda B)^2}{(t+2)\lambda^2}$ for $t \geq 1$.
\end{theorem}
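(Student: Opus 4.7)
The plan is to combine the descent lemma (\Cref{lem:one_step_nsd}) with the iterate-norm bound (\Cref{lem:convergence_to_ball}), reduce to a one-dimensional scalar recursion in the suboptimality gap $D_t := L(\vx_t) - L(\vx^*)$, and then mimic the classical \frankwolfe $O(1/t)$ induction.

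First I would establish the uniform bound $\norm{\vx_t}\le B$ for every $t\ge 0$. For $t\ge 2$ this follows from \Cref{lem:convergence_to_ball} together with $\norm{\vDelta_t}=1$, since $\lambda\eta_t = 2/(t+1)<1$ and $\max\{e^{-\lambda\sum_i\eta_i}(\norm{\vx_0}-1/\lambda),0\}+1/\lambda\le B$. At $t=1$ the learning rate satisfies $\lambda\eta_1=1$, so the update collapses to $\vx_1=-\frac{1}{\lambda}\vDelta_1$ and hence $\norm{\vx_1}=1/\lambda\le B$. Plugging this uniform bound into \Cref{lem:one_step_nsd} yields
\[
D_t \;\le\; (1-\lambda\eta_t)\,D_{t-1} \;+\; \tfrac{H}{2}\,\eta_t^2\,(1+\lambda B)^2,
\]
which, with the schedule $\eta_t=\tfrac{2}{\lambda(t+1)}$, simplifies to $D_t\le \tfrac{t-1}{t+1}D_{t-1} + \tfrac{C}{(t+1)^2}$, where $C:=\tfrac{2H(1+\lambda B)^2}{\lambda^2}$.

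The remainder is an induction on $t\ge 1$ to show $D_t\le C/(t+2)$. For $t=1$, the contraction factor $1-\lambda\eta_1$ equals zero, so $D_1\le \tfrac{H}{2\lambda^2}(1+\lambda\norm{\vx_0})^2 \le C/4\le C/3$, giving the base case. For the inductive step, if $D_{t-1}\le C/(t+1)$ then $D_t \le \tfrac{t-1}{t+1}\cdot\tfrac{C}{t+1}+\tfrac{C}{(t+1)^2} = \tfrac{tC}{(t+1)^2}$, and the target $\tfrac{tC}{(t+1)^2}\le \tfrac{C}{t+2}$ reduces to $t(t+2)\le(t+1)^2$, which is immediate.

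Since the two ingredients \Cref{lem:one_step_nsd} and \Cref{lem:convergence_to_ball} are already in hand, there is essentially no conceptual hurdle; the only minor subtlety is that at $t=1$ the factor $\lambda\eta_t$ equals $1$ rather than being strictly less than $1$, sitting at the boundary of the hypothesis of \Cref{lem:convergence_to_ball}, so I would handle that step by direct inspection rather than by invoking the lemma. Apart from that, the argument is the standard \frankwolfe $O(1/t)$ induction tuned to our specific constants.
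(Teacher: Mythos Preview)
Your proposal is correct and follows essentially the same route as the paper: invoke \Cref{lem:convergence_to_ball} to bound $\norm{\vx_t}\le B$, plug into \Cref{lem:one_step_nsd}, and run the standard \frankwolfe induction $D_t\le\tfrac{t-1}{t+1}D_{t-1}+\tfrac{C}{(t+1)^2}\Rightarrow D_t\le\tfrac{C}{t+2}$. You are in fact slightly more careful than the paper, which simply cites \Cref{lem:convergence_to_ball} despite its hypothesis $\lambda\eta_t<1$ being hit with equality at $t=1$; your direct inspection of that step (yielding $\norm{\vx_1}\le 1/\lambda$) cleanly closes that boundary case, after which the lemma applies from $\vx_1$ onward.
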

 Note that the descent rate highly depends on the smoothness coefficient $H$, which is determined by the selected norm. 
Therefore, we provide a synthetic example that may demonstrate the advantage of $\ell_\infty$ norm as mentioned in \Cref{sec:intro}. For some constant $\vx^* \in \mathbb{R}^d$, the loss function $g:\mathbb{R}^{d} \rightarrow \mathbb{R}$ is defined as 
\begin{equation}\label{eq:synthetic_example}
    g(\vx) = \sum_{i=1}^{d} \frac{(\vx_i-\vx^*_i)^2}{i^2}.
\end{equation}
The Hessian matrix $\nabla^2 g$ is a diagonal matrix with diagonal entries $\{\frac{2}{i^2}\}_{i=1}^d$. For $\ell_2$ norm, the smoothness coefficient is the largest eigenvalue of Hessian matrix, which is $2$. For $\ell_\infty$ norm, the smoothness coefficient is the sum of the diagonal entries because of the diagonality, which is $\sum_{i=1}^d \frac{2}{i^2}$. It is upper bounded by $\sum_{i=1}^\infty \frac{2}{i^2} = \frac{\pi^2}{3}$ for all dimension $d$. However, if $\vx^*$ is set to be in the unit $\ell_\infty$ norm ball, its $\ell_2$ norm can be as large as $\sqrt{d}$, which makes the suboptimality bound in \Cref{thm:1/t_convergence} for $\ell_2$ norm normalized steepest descent with weight decay $d$ times larger than its $\ell_\infty$ counterpart. We implement steepest descent with $\ell_\infty$ norm and $\ell_2$ norm on a $100$-dimension example in \Cref{sec:synthetic_exp} and find that $\ell_\infty$ norm can indeed work better. 

\subsection{Non-convex setting: convergence to KKT points}\label{subsec:convergence_to_KKT_points}

In this subsection, we study the implicit bias of \signgd (or more generally, \nsdwd) when the loss is non-convex. In such case, last-iterate parameter convergence is, in general, difficult to show\footnote{Indeed, even for convex case, \frankwolfe may not converge in parameter.\citep{bolte2023iterates}}, and thus we turn to study \emph{what parameters \signgd and \nsdwd can converge to}. Our main results \Cref{thm:nsd_wd_main} show that such parameters must be the KKT points~(see \Cref{defi:KKT_points}) of the constrained optimization problems. In particular, if the objective is convex, since the norm ball constraint is always convex for all norm, all KKT points are constrained minimizers.

\begin{definition}[KKT points]\label{defi:KKT_points}
    We say $\vx^*\in\mathbb{R}^d$ is a KKT point of the constrained optimization problem $\min_{\norm{\vx}\le \frac{1}{\lambda}} L(\vx)$ iff there exists $s^*\ge0$ such that $\norm{\vx^*}\le \frac{1}{\lambda}$, $0\in \nabla L(\vx^*)+ \partial (s^*\norm{\vx})\vert_{\vx=\vx^*}$, $s^*(\norm{\vx^*}-\frac{1}{\lambda})=0$.
\end{definition}

For convex $L$, all KKT points $\vx^*$ are optimal and the dual variable $s^*\ge0$ is the certificate for the optimality. To see that, for any other $\norm{\vy}\le \frac{1}{\lambda}$, it holds that $L(\vy)\ge L(\vy)+s^*(\norm{\vy}-\frac{1}{\lambda})\ge L(\vx^*)+s^*(\norm{\vx^*}-\frac{1}{\lambda})$, where the second inequality is because $L(\vx)+s^*\norm{\vx}$ is also convex and $0$ is its subgradient at $\vx^*$. Thus we conclude $L(\vy)\ge  L(\vx^*)+s^*(\norm{\vx^*}-\frac{1}{\lambda}) = L(\vx^*)$.

Now we state the main result for this subsection.
\begin{theorem}[Non-convex, KKT]\label{thm:nsd_wd_main}
    For any continuously differentiable function $L:\mathbb{R}^d\to\mathbb{R}$,  initialization $\vx_0$,
 and learning rate schedule $\{\eta_t\}_{t=1}^\infty$ such that $\sum_{t=1}^\infty \eta_t = \infty$, if the iterates of \nsdwd $\{\vx_t\}_{t=0}^\infty$ on $L$ converges to some $\vx_\infty$, then $\vx_\infty$ is a KKT point (\Cref{defi:KKT_points}) of the constrained optimization problem $\min_{\norm{\vx} \leq \frac{1}{\lambda}} L(\vx)$. 
\end{theorem}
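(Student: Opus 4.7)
The plan is to verify the four KKT conditions in \Cref{defi:KKT_points} --- primal feasibility, dual feasibility $s^* \ge 0$, stationarity, and complementary slackness --- in turn. Primal feasibility $\norm{\vx_\infty} \le 1/\lambda$ follows directly from \Cref{lem:convergence_to_ball}: since $\norm{\vDelta_t} \le 1$ and $\sum_t \eta_t = \infty$, the upper bound on $\norm{\vx_t}$ shrinks to $1/\lambda$ in the limit (applied from any index onward where $\lambda\eta_t<1$, which must hold eventually once $\vx_t$ converges). The remaining task is to exhibit $s^* \ge 0$ satisfying $0 \in \nabla L(\vx_\infty) + s^* \partial \norm{\vx_\infty}$ together with $s^*(\norm{\vx_\infty} - 1/\lambda) = 0$.

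The central tool is a weighted-average argument driven by the divergence of $S_T := \sum_{t=1}^T \eta_t$. Rewriting the update as $\vx_t - \vx_{t-1} = -\eta_t(\lambda \vx_{t-1} + \vDelta_t)$ and telescoping gives $\sum_{t=1}^T \eta_t(\lambda \vx_{t-1} + \vDelta_t) = \vx_0 - \vx_T$, whose right-hand side is bounded. Dividing by $S_T \to \infty$ yields $\frac{1}{S_T}\sum_{t=1}^T \eta_t(\lambda \vx_{t-1} + \vDelta_t) \to 0$. Since $\vx_{t-1} \to \vx_\infty$, a standard weighted Ces\`aro argument gives $\frac{1}{S_T}\sum_{t=1}^T \eta_t \vx_{t-1} \to \vx_\infty$. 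Combining the two, the weighted average $\bar \vDelta_T := \frac{1}{S_T}\sum_{t=1}^T \eta_t \vDelta_t$ must converge to $-\lambda \vx_\infty$.

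Next I will argue that this limit lies in the set $\left.\partial \norm{\vy}_*\right|_{\vy = \nabla L(\vx_\infty)}$. By \Cref{lem:subgradient_of_dual_norm_is_steepest_descent_direction}, each $\vDelta_t$ belongs to $\left.\partial \norm{\vy}_*\right|_{\vy = \nabla L(\vx_{t-1})}$. Because $L \in C^1$ and $\vx_{t-1} \to \vx_\infty$, we have $\nabla L(\vx_{t-1}) \to \nabla L(\vx_\infty)$; the subdifferential map of the convex function $\norm{\cdot}_*$ is outer semicontinuous with closed convex values, so for any $\varepsilon > 0$ there exists $t_0$ with $\vDelta_t$ lying within distance $\varepsilon$ of the convex set $\left.\partial \norm{\vy}_*\right|_{\vy = \nabla L(\vx_\infty)}$ for every $t \ge t_0$. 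Splitting $\bar\vDelta_T$ into a prefix (of size $O(S_{t_0}/S_T) = o(1)$) and a suffix (a convex combination of points $\varepsilon$-close to the target set, hence itself $\varepsilon$-close by convexity of the set), then sending $T \to \infty$ followed by $\varepsilon \to 0$ and invoking closedness gives $-\lambda \vx_\infty \in \left.\partial \norm{\vy}_*\right|_{\vy = \nabla L(\vx_\infty)}$.

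Finally I will convert this inclusion into the KKT conditions using \Cref{lem:norm_subgradient}. If $\nabla L(\vx_\infty) = 0$, take $s^* = 0$ and all conditions are trivial. Otherwise, \Cref{lem:norm_subgradient} applied to $\norm{\cdot}_*$ at $\vy = \nabla L(\vx_\infty) \ne 0$ turns the membership $-\lambda\vx_\infty \in \left.\partial\norm{\vy}_*\right|_{\vy=\nabla L(\vx_\infty)}$ into both $\norm{\vx_\infty}=1/\lambda$ (which delivers complementary slackness) and $\inner{-\lambda\vx_\infty}{\nabla L(\vx_\infty)} = \norm{\nabla L(\vx_\infty)}_*$. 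Setting $s^* = \norm{\nabla L(\vx_\infty)}_* > 0$, a second application of \Cref{lem:norm_subgradient} to the primal norm at $\vx_\infty$ checks that $-\nabla L(\vx_\infty)/s^* \in \partial\norm{\vx_\infty}$, giving stationarity. The main obstacle is the outer-semicontinuity step: the limit is the average of a drifting sequence, not a single $\vDelta_t$, so one needs both the closed-graph property of the subdifferential map and the convexity of its values, together with a careful cutoff $t_0$ to absorb the early transient terms that need not lie close to $\left.\partial\norm{\vy}_*\right|_{\vy=\nabla L(\vx_\infty)}$.
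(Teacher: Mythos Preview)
Your argument is correct and shares the paper's skeleton---telescope the update rule to show that the weighted average $\bar\vDelta_T \to -\lambda\vx_\infty$, then convert this into the KKT conditions---but the central technical step is handled differently. The paper factors the proof through \Cref{lem:nsd_wd_KKT} and \Cref{lem:alternative_KKT}: rather than proving the set inclusion $-\lambda\vx_\infty \in \partial\norm{\cdot}_*\big|_{\nabla L(\vx_\infty)}$, it establishes only the scalar identity $\inner{\nabla L(\vx_\infty)}{\vDelta_\infty}=\norm{\nabla L(\vx_\infty)}_*$ by Ces\`aro-averaging the exact relation $\inner{\nabla L(\vx_{t-1})}{\vDelta_t}=\norm{\nabla L(\vx_{t-1})}_*$ and bounding the drift termwise via $\abs{\inner{\nabla L(\vx_{t-1})-\nabla L(\vx_\infty)}{\vDelta_t}}\le \norm{\nabla L(\vx_{t-1})-\nabla L(\vx_\infty)}_*$; the bound $\norm{\vDelta_\infty}\le 1$ is then just the triangle inequality, and \Cref{lem:alternative_KKT} finishes. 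Your route is more structural---you get the full subdifferential membership at once---while the paper's is more elementary, needing only continuity of $\nabla L$ and the duality inequality rather than the closed-graph property of the subdifferential map. The paper's scalar approach is also the template it reuses for \adamw (\Cref{lem:property_converged_point}), where $\vDelta_t=\vm_t/\sqrt{\vv_t}$ is \emph{not} an element of $\partial\norm{\cdot}_1\big|_{\nabla L(\vx_{t-1})}$ and your outer-semicontinuity argument would not apply directly.

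One small imprecision: the parenthetical ``$\lambda\eta_t<1$ must hold eventually once $\vx_t$ converges'' is not true in general (take $\vx_t\equiv 0$ at a stationary point with $\vDelta_t=0$ and arbitrary $\eta_t$), so your appeal to \Cref{lem:convergence_to_ball} for primal feasibility is not fully justified. This is harmless, however, because your own step~3 already yields it: since subgradients of a norm lie in the unit dual ball, $-\lambda\vx_\infty\in\partial\norm{\cdot}_*\big|_{\nabla L(\vx_\infty)}$ forces $\norm{-\lambda\vx_\infty}\le 1$. The paper obtains the same bound the same way, from $\norm{\bar\vDelta_T}\le 1$ (part~3 of \Cref{lem:nsd_wd_KKT}), without invoking \Cref{lem:convergence_to_ball}.
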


To prove \Cref{thm:nsd_wd_main}, we use the following alternative characterization for KKT points of $\min_{\norm{\vx}\le \frac{1}{\lambda}} L(\vx)$ below based on \Cref{lem:norm_subgradient}.
\begin{lemma}\label{lem:alternative_KKT}
    $\vx$ is a KKT point of $\min_{\norm{\vx}\le \frac{1}{\lambda}} L(\vx)$ iff $\norm{\vx}\le\frac{1}{\lambda}$ and $\inner{-\lambda \vx}{\nabla L(\vx)} = \norm{\nabla L(\vx)}_*$.
\end{lemma}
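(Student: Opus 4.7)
The plan is to prove the biconditional by unpacking \Cref{defi:KKT_points} and applying \Cref{lem:norm_subgradient} to characterize membership in $\partial \norm{\vx}$. In both directions, I would split into the trivial case where $\nabla L(\vx) = 0$ (equivalently, the dual variable $s^*$ is zero) and the nontrivial case where the norm constraint is active.

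For the forward direction, suppose $\vx$ is a KKT point with associated $s^* \ge 0$. If $s^* = 0$, stationarity forces $\nabla L(\vx) = 0$, so both sides of the target identity vanish. If $s^* > 0$, complementary slackness gives $\norm{\vx} = 1/\lambda$ (and in particular $\vx \neq 0$), and stationarity rewrites as $-\nabla L(\vx)/s^* \in \partial \norm{\vx}$. Applying \Cref{lem:norm_subgradient} yields both $\norm{-\nabla L(\vx)/s^*}_* = 1$, i.e.\ $s^* = \norm{\nabla L(\vx)}_*$, and $\inner{-\nabla L(\vx)/s^*}{\vx} = \norm{\vx} = 1/\lambda$. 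Multiplying by $\lambda s^*$ gives $\inner{-\lambda \vx}{\nabla L(\vx)} = s^* = \norm{\nabla L(\vx)}_*$, as required.

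For the reverse direction, assume $\norm{\vx} \le 1/\lambda$ and $\inner{-\lambda \vx}{\nabla L(\vx)} = \norm{\nabla L(\vx)}_*$. If $\nabla L(\vx) = 0$, choose $s^* = 0$; then $\partial(s^* \norm{\cdot})|_\vx = \{0\}$ and all three KKT conditions are trivially satisfied. Otherwise, set $s^* = \norm{\nabla L(\vx)}_* > 0$ and check the KKT conditions. Complementary slackness is the main nontrivial step: the generalized Cauchy–Schwarz inequality $\inner{-\lambda \vx}{\nabla L(\vx)} \le \lambda \norm{\vx} \norm{\nabla L(\vx)}_*$, combined with the assumed equality, forces $\lambda \norm{\vx} \ge 1$, and together with the primal feasibility $\norm{\vx} \le 1/\lambda$ this pins down $\norm{\vx} = 1/\lambda$. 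For stationarity, I would verify $-\nabla L(\vx)/s^* \in \partial \norm{\vx}$ through the two criteria of \Cref{lem:norm_subgradient}: the dual-norm condition $\norm{-\nabla L(\vx)/s^*}_* = 1$ holds by the choice of $s^*$, and the inner-product condition $\inner{-\nabla L(\vx)/s^*}{\vx} = \norm{\vx}$ is exactly the hypothesis divided by $\lambda s^*$.

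The main obstacle—if one can call it that—is the deduction $\norm{\vx} = 1/\lambda$ in the nontrivial reverse direction, where one must recognize the assumed identity as the equality case of the dual-norm inequality. Everything else is a direct bookkeeping translation between the subgradient inclusion in \Cref{defi:KKT_points} and the explicit description of $\partial \norm{\vx}$ from \Cref{lem:norm_subgradient}.
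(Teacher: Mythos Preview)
Your proposal is correct and follows precisely the route the paper intends: the paper does not spell out a proof of this lemma but simply states it as an alternative characterization ``based on \Cref{lem:norm_subgradient},'' and your argument is exactly the natural unpacking of \Cref{defi:KKT_points} via that subgradient description, with the appropriate case split on whether the dual variable vanishes.
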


With \Cref{lem:alternative_KKT}, next we illustrate the intuition for \Cref{thm:nsd_wd_main} for the case where the dual norm $\norm{\cdot}_*$ is continuously differentiable at $\nabla L(\vx_\infty)$ (like $\ell_2$-norm and $\nabla L(\vx_\infty)\neq \mathbf{0}$). For sufficiently large $t$, when $\nabla L(\vx_t)$ gets sufficiently close to $\nabla L(\vx_\infty)$, the descent direction $-\vDelta_t $ is unique, equal to $-\nabla \norm{\nabla L(\vx_t)}_*$ by \Cref{lem:subgradient_of_dual_norm_is_steepest_descent_direction}, and satisfies $\inner{\vDelta_t}{\nabla L(\vx_t)} = \norm{\nabla L(\vx_t)}_*$. Taking $t\to\infty$ we get $\inner{-\nabla \norm{\nabla L(\vx_\infty)}_*}{\nabla L(\vx_\infty)} = \norm{\nabla L(\vx_\infty)}_*$. Moreover, we must have $ \nabla \norm{\nabla L(\vx_\infty)}_*+ \lambda\vx_\infty = \lim_{t\to\infty} \left( \nabla \norm{\nabla L(\vx_t)}_*+ \lambda\vx_t\right)=0$, otherwise  $\vx_t$ keeps moving towards $\lim_{t\to\infty} \left( \nabla \norm{\nabla L(\vx_t)}_*+ \lambda\vx_t\right)$ since $\sum_{t=1}^\infty \eta_t =\infty$ and thus $\vx_\infty$ cannot be the limit. This implies the second condition in \Cref{lem:alternative_KKT}. The first condition that $\norm{\vx_\infty}\le\frac{1}{\lambda}$ is immediate from \Cref{lem:convergence_to_ball} and that $\sum_{t=1}^\infty \eta_t =\infty$. 

However, the above intuition no longer works when dual norm $\norm{\cdot}_*$ is not differentiable at $\nabla L(\vx_\infty)$. This could happen for $\ell_\infty$ norm where the dual norm is $\ell_1$ norm and $\nabla L(\vx_\infty)$ with coordinates of value $0$, because the subgradient of absolute value function at $0$, $\partial |0|$, could be anything between $-1$ and $1$. And more generally, this could happen for any norm and $\nabla L(\vx_\infty)=\mathbf{0}$. If the limit point $\vx_\infty$ has zero gradient for $L$, then the steepest descent direction $-\vDelta$ is provably not continuous around $\vx_\infty$. 

The following lemma (\Cref{lem:nsd_wd_KKT}) circumvents the above issue by considering the weighted average of past steepest descent directions, which provably converges, given the iterates $\{\vx_t\}_{t=1}^\infty$  converge.  \Cref{thm:nsd_wd_main} is a direct combination of \Cref{lem:nsd_wd_KKT} and \Cref{lem:alternative_KKT} and we omit its proof. The proof of \Cref{lem:nsd_wd_KKT} is deferred into \Cref{sec:proof_nsd_wd_KKT}.

\begin{lemma}\label{lem:nsd_wd_KKT}
For any learning rate schedule $\{\eta_t\}_{t=1}^\infty$ satisfying $\sum_{t=1}^\infty \eta_t = \infty$, if the iterates of \nsdwd $\{\vx_t \}_{t=0}^\infty$ converges to some $\vx_\infty$, we have that
    \begin{enumerate}
        \item $\vDelta_\infty := \lim\limits_{T \rightarrow \infty} \frac{\sum_{t=1}^T \eta_t \vDelta_t}{\sum_{t=1}^T \eta_t}$ exists and $\vDelta_\infty= -\lambda \vx_\infty$.
        \item $\inner{\nabla L(\vx_\infty)}{\vDelta_\infty} = \norm{\nabla L(\vx_\infty)}_*$.
        \item $\norm{\vDelta_\infty} \leq 1$.
    \end{enumerate}
\end{lemma}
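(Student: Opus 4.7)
The plan is to derive all three conclusions from a single telescoping identity for the update rule, combined with repeated applications of the standard \emph{weighted Ces\`aro} fact that if $a_t\to a$ and $\eta_t\ge 0$ with $S_T:=\sum_{t=1}^T\eta_t\to\infty$, then $\frac{1}{S_T}\sum_{t=1}^T\eta_t a_t\to a$. The hypothesis $\sum_t\eta_t=\infty$ is exactly what lets us invoke this fact throughout.

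First I would rearrange the update rule to $\eta_t\vDelta_t=(\vx_{t-1}-\vx_t)-\lambda\eta_t\vx_{t-1}$ and sum, obtaining
\[
\frac{1}{S_T}\sum_{t=1}^T\eta_t\vDelta_t=\frac{\vx_0-\vx_T}{S_T}-\lambda\cdot\frac{1}{S_T}\sum_{t=1}^T\eta_t\vx_{t-1}.
\]
The first term on the right vanishes since $\vx_T\to\vx_\infty$ is bounded while $S_T\to\infty$, and the second is a weighted average of the convergent sequence $\vx_{t-1}\to\vx_\infty$, so tends to $\vx_\infty$. This proves existence of the limit and gives Claim 1: $\vDelta_\infty=-\lambda\vx_\infty$. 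Claim 3 is then immediate by the triangle inequality, $\norm{S_T^{-1}\sum_t\eta_t\vDelta_t}\le S_T^{-1}\sum_t\eta_t\norm{\vDelta_t}\le 1$, upon passing to the limit.

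For Claim 2 I would exploit the defining argmax identity $\inner{\nabla L(\vx_{t-1})}{\vDelta_t}=\norm{\nabla L(\vx_{t-1})}_*$. Taking the $\eta_t$-weighted average of both sides,
\[
\frac{1}{S_T}\sum_{t=1}^T\eta_t\inner{\nabla L(\vx_{t-1})}{\vDelta_t}=\frac{1}{S_T}\sum_{t=1}^T\eta_t\norm{\nabla L(\vx_{t-1})}_*.
\]
The right-hand side converges to $\norm{\nabla L(\vx_\infty)}_*$ by continuity of $\nabla L$ and of $\norm{\cdot}_*$ together with weighted Ces\`aro. For the left-hand side I would split off $\nabla L(\vx_\infty)$: the piece $S_T^{-1}\sum_t\eta_t\inner{\nabla L(\vx_\infty)}{\vDelta_t}$ equals $\inner{\nabla L(\vx_\infty)}{S_T^{-1}\sum_t\eta_t\vDelta_t}$ by linearity and converges to $\inner{\nabla L(\vx_\infty)}{\vDelta_\infty}$ by Claim 1, while the remainder is bounded in absolute value by $S_T^{-1}\sum_t\eta_t\norm{\nabla L(\vx_{t-1})-\nabla L(\vx_\infty)}_*$ (using $\norm{\vDelta_t}\le 1$ and the dual-norm inequality), a weighted average of a sequence going to zero, hence vanishing.

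The main conceptual obstacle, and the reason for averaging the $\vDelta_t$ rather than working with a single late iterate, is that $\vDelta_t$ itself need not converge when the dual norm is non-smooth at $\nabla L(\vx_\infty)$ (as happens for $\ell_\infty$ when $\nabla L(\vx_\infty)$ has a zero coordinate, or for any norm when $\nabla L(\vx_\infty)=\vzero$). Averaging smooths the oscillating subgradient selection, and the divergence of $\sum_t\eta_t$ is precisely what makes the initial transient $\vx_0-\vx_T$ asymptotically negligible relative to $S_T$; beyond this, only continuity of $\nabla L$ and the dual-norm identity are used, so the argument requires neither convexity of $L$ nor smoothness of the norm.
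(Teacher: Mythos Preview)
Your proposal is correct and takes essentially the same approach as the paper: both rely on the telescoping identity $\eta_t\vDelta_t=(\vx_{t-1}-\vx_t)-\lambda\eta_t\vx_{t-1}$ for Claim 1, the argmax identity $\inner{\nabla L(\vx_{t-1})}{\vDelta_t}=\norm{\nabla L(\vx_{t-1})}_*$ together with the bound $\norm{\vDelta_t}\le 1$ for Claim 2, and the triangle inequality for Claim 3. The only cosmetic difference is that you package the recurring step as a single weighted Ces\`aro lemma, whereas the paper writes out each $\epsilon$--$\delta$ argument by hand; your presentation is cleaner but substantively identical.
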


\zhiyuan{talk about extension to normalized steepest descent with momentum if time permits, also Lion} 
\section{Implicit Bias of AdamW}\label{sec:adamw}
In this section, we extend the analysis on \nsdwd in \Cref{sec:warm_up} to \adamw to prove that the converged parameters of \adamw is the KKT point of the constrained optimization problem. The proof relies on an upper bound of average update size of \adamw and we find that the bound can also be used to guide hyperparameter tuning in empirical study.

We first state the analog of \Cref{lem:nsd_wd_KKT} for \adamw with the norm being $\ell_\infty$ norm since we treat \adamw as a smoothed version of \signgd, which is \Cref{lem:property_converged_point}. Here we additionally assume that $\{ \eta_t\}_{t=1}^\infty$ is non-increasing and $\vDelta_t$ is defined as $\frac{\vm_t}{\sqrt{\vv_t}}$ from \Cref{alg:adamw}. \Cref{thm:main} is again a direct combination of \Cref{lem:alternative_KKT} and \Cref{lem:property_converged_point}.
\begin{lemma}\label{lem:property_converged_point} 
    For non-increasing learning rate schedule $\{\eta_t\}_{t=0}^\infty$ satisfying $\sum_{t=1}^\infty \eta_t = \infty$ and $\beta_2 \geq \beta_1$, we get $\{\vx_t \}_{t=1}^\infty$ by running AdamW with weight decay factor $\lambda$. If $\{\vx_t \}_{t=0}^\infty$ converges to some $\vx_\infty$, then it holds that
    \begin{enumerate}
        \item $\vDelta_\infty := \lim\limits_{T \rightarrow \infty} \frac{\sum_{t=1}^T \eta_t \vDelta_t}{\sum_{t=1}^T \eta_t}$ exists and $\vDelta_\infty= -\lambda \vx_\infty$.
        \item $\inner{\nabla L(\vx_\infty)}{\vDelta_\infty} = \norm{\nabla L(\vx_\infty)}_1$.
        \item $\norm{\vDelta_\infty}_\infty \leq 1$.
    \end{enumerate}
\end{lemma}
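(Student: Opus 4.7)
The plan is to mirror the three-part structure of the proof of~\Cref{lem:nsd_wd_KKT}, proving each claim separately and importing \adamw-specific machinery only where strictly needed. Writing $\vDelta_t = \vm_t/\sqrt{\vv_t}$, the \adamw iteration from~\Cref{alg:adamw} can be rewritten as $\vx_t = (1-\lambda\eta_t)\vx_{t-1} - \eta_t\vDelta_t$, which is structurally identical to the NSDWD update in~\Cref{eq:def_nsd_wd}.

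The first claim is proved by telescoping exactly as for NSDWD: summing the update gives $\sum_{t=1}^T \eta_t\vDelta_t = (\vx_0-\vx_T) - \lambda\sum_{t=1}^T \eta_t\vx_{t-1}$. After dividing by $\sum_{t=1}^T\eta_t$, the first term vanishes since $\vx_T$ is bounded and $\sum_t\eta_t\to\infty$, and a weighted-Cesaro argument applied to $\vx_{t-1}\to\vx_\infty$ with non-negative weights whose partial sums diverge sends the second term to $-\lambda\vx_\infty$. No Adam-specific structure is used in this step.

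For the second claim, I would work coordinatewise. Continuity of $\nabla L$ and $\vx_t\to\vx_\infty$ yield $g_{t,i}\to g_{\infty,i}:=(\nabla L(\vx_\infty))_i$ for each $i$. On coordinates with $g_{\infty,i}\neq 0$, the standard fact that an exponential moving average of a convergent sequence converges to the same limit gives $m_{t,i}\to g_{\infty,i}$ and $v_{t,i}\to g_{\infty,i}^2$, so $(\vDelta_t)_i\to\mathrm{sign}(g_{\infty,i})$ pointwise. A second weighted-Cesaro application then yields $(\vDelta_\infty)_i=\mathrm{sign}(g_{\infty,i})$, contributing exactly $\abs{g_{\infty,i}}$ to the inner product. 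Coordinates with $g_{\infty,i}=0$ contribute zero irrespective of $(\vDelta_\infty)_i$, so summing over $i$ produces $\inner{\nabla L(\vx_\infty)}{\vDelta_\infty}=\norm{\nabla L(\vx_\infty)}_1$.

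The third claim is the main obstacle, and this is where $\beta_1\le\beta_2$, the non-increasing learning-rate hypothesis, and the novel amortized update bound~\Cref{lem:amortized_update_bound} enter crucially. On coordinates with $g_{\infty,i}\neq 0$ the pointwise limit above immediately gives $\abs{(\vDelta_\infty)_i}=1$, but where $g_{\infty,i}=0$ the per-step ratio $\abs{m_{t,i}}/\sqrt{v_{t,i}}$ can temporarily exceed one, so the pointwise bound $\norm{\vDelta_t}_\infty\le 1$ that drove the NSDWD proof is unavailable. Instead, I would invoke~\Cref{lem:amortized_update_bound} to obtain $\sum_{t=1}^T\eta_t\abs{(\vDelta_t)_i} \le (1+o(1))\sum_{t=1}^T\eta_t$, divide, apply the triangle inequality to the definition of $(\vDelta_\infty)_i$, and send $T\to\infty$ to conclude $\abs{(\vDelta_\infty)_i}\le 1$, hence $\norm{\vDelta_\infty}_\infty\le 1$. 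The role of $\beta_1\le\beta_2$ is precisely to keep $m_{t,i}^2/v_{t,i}$ controlled in this averaged sense: Cauchy--Schwarz gives $m_{t,i}^2\le(1-\beta_1)\sum_{s\le t}\beta_1^{t-s}g_{s,i}^2$, whose geometric weights decay at least as fast as those in $v_{t,i}=(1-\beta_2)\sum_{s\le t}\beta_2^{t-s}g_{s,i}^2$, so after Abel summation against the non-increasing schedule $\eta_t$ the ratio is bounded by one up to lower-order boundary terms.
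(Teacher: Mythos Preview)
Your plan for parts 1 and 2 matches the paper exactly: telescope the update and apply weighted Ces\`aro for the first claim; for the second, work coordinatewise, use that exponential moving averages of convergent sequences converge to the same limit to get $(\vDelta_t)_j\to\sign(\nabla L(\vx_\infty)_j)$ on nonzero coordinates, and note that zero coordinates contribute nothing to the inner product.

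For part 3 your strategy is again the paper's --- split by whether $\nabla L(\vx_\infty)_j$ vanishes, dispatch the nonzero coordinates from part 2, and on the zero coordinates invoke \Cref{lem:amortized_update_bound} --- but two points need attention. First, a harmless misreading: the lemma bounds $\bigl|\sum_t \eta_t \Delta_{t,j}\bigr|$, not $\sum_t \eta_t\abs{\Delta_{t,j}}$; the former is precisely what controls $\abs{(\vDelta_\infty)_j}$, so no triangle inequality enters. Second and more substantively, you assert that the lemma's right-hand side is $1+o(1)$ but give no argument; your last sentence reads as a sketch of why the lemma itself holds rather than why its correction terms are negligible in this application. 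That is where the paper's real technical work lies. On a coordinate with $\nabla L(\vx_\infty)_j=0$ one uses $\vv_{t,j}\to0$ to ensure $\ln(\vv_{t,j}/\vv_{1,j})\le 0$ for all $t$ beyond some finite index; the non-increasing schedule yields the two-sided bound $-\tfrac{\beta_1^{t-1}}{1-\beta_1}\eta_{t+1}\le\alpha_t\le\tfrac{\eta_t}{1-\beta_1}$ on the coefficients $\alpha_t$ appearing in \Cref{eq:amortized_bound}; and the crude estimate $\ln(\vv_{t,j}/\vv_{1,j})\ge(t-1)\ln\beta_2$ (from $\vv_{t,j}\ge\beta_2\vv_{t-1,j}$) handles the tail where $\alpha_t<0$. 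Only with all three ingredients can one show the correction terms are $O(1)$ uniformly in $T$, whence dividing by $\sum_t\eta_t\to\infty$ gives the required limit.
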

The condition $\beta_1 \leq \beta_2$ is necessary for the conclusion to hold. Otherwise, the iterates can converge outside the $\ell_\infty$ norm ball with radius $\frac{1}{\lambda}$ as shown in \Cref{sec:counter_example}. 

The first two properties in \Cref{lem:property_converged_point} follow from a similar argument for \Cref{lem:property_converged_point}, and the main technical difficulty here lies in the proof of the third property. This is because for any single $t$, $\norm{\vDelta_t}$ could be larger than $1$, which is different from the case of \nsdwd. To prove the third property, we need a tight upper bound for the average update size of \adam-like update rule, which is \Cref{lem:amortized_update_bound}. The proof of \Cref{lem:property_converged_point} is deferred to \Cref{sec:adamw_details}. 
\subsection{Upper bound for average update size of \adam}\label{sec:amortized_update_bound}
As mentioned earlier, \adam updates $\norm{\frac{\vm_t}{\sqrt{\vv_t}}}$ can easily go beyond $1$ and thus we prove the following upper bound for the average update size of \adam~(\Cref{lem:amortized_update_bound}). The proof of \Cref{lem:amortized_update_bound} is deferred to \Cref{sec:proof_amortized_update_bound}. 
\begin{lemma}\label{lem:amortized_update_bound}
    Given any $\beta_1 \leq \beta_2<1$, suppose scalar sequences $\{v_t\}_{t=0}^\infty$ and $\{g_t\}_{t=1}^\infty$ satisfy that $v_0 \geq 0, v_1 >0$ and $v_t-\beta_2 v_{t-1} \geq (1-\beta_2)g_t^2$ for $t \geq 1$. Given initial value $|m_0| \leq \sqrt{v_0}$, define $m_t = \beta_1 m_{t-1} + (1-\beta_1) g_t$ and $\Delta_t = \frac{m_t}{\sqrt{v_t}}$ for $t \geq 1$. For any coefficients $\{\eta_t\}_{t=1}^\infty$ and $T\in\mathbb{N}$, it always holds that 
    \begin{align}\label{eq:amortized_bound}
        &\frac{|\sum_{t=1}^T \eta_t\Delta_t|}{\sum_{t=1}^T \eta_t} \nonumber\\
        \leq &\left(1+\frac{\beta_2-\beta_1}{1-\beta_2} \frac{\sum_{t=1}^T \eta_t \beta_1^{t-1}}{\sum_{t=1}^T \eta_t} + \frac{(\beta_2-\beta_1)(1-\beta_1)}{(1-\beta_2)\sum_{t=1}^T \eta_t}
        \sum_{t=2}^T \left(\eta_t\frac{1-\beta_1^{t-1}}{1-\beta_1} -\sum_{i=1}^{T-t} \eta_{t+i} \beta_1^{i-1}\right)\ln{\frac{v_t}{v_1}}  \right)^{\frac{1}{2}}.
    \end{align}
    In particular, when $\beta_1=\beta_2$, it even holds that $|\Delta_t| \leq 1$. 
\end{lemma}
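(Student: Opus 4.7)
The plan is to handle the two regimes separately. For $\beta_1=\beta_2$, I would prove the stronger pointwise bound $m_t^2\le v_t$ by induction: the base case $m_0^2\le v_0$ is given, and the inductive step uses the weighted Cauchy--Schwarz inequality applied to $m_t=\beta_1 m_{t-1}+(1-\beta_1)g_t$, giving
\[m_t^2 \;\le\; \beta_1 m_{t-1}^2+(1-\beta_1)g_t^2 \;\le\; \beta_1 v_{t-1}+(1-\beta_1)g_t^2 \;=\; \beta_2 v_{t-1}+(1-\beta_2)g_t^2 \;\le\; v_t.\]
This gives $|\Delta_t|\le 1$, which is the last claim of the lemma and directly implies \Cref{eq:amortized_bound} since its right-hand side evaluates to $1$ when $\beta_1=\beta_2$.

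For the general case $\beta_1<\beta_2$, I would first use Cauchy--Schwarz to reduce to a weighted sum of $m_t^2/v_t$:
\[\left(\sum_{t=1}^T \eta_t\Delta_t\right)^2 \;\le\; A_T\sum_{t=1}^T\eta_t\frac{m_t^2}{v_t},\]
where $A_T=\sum_{t=1}^T \eta_t$. The central intermediate estimate is the per-step inequality
\[m_t^2-v_t\;\le\;\beta_1(m_{t-1}^2-v_{t-1})+(\beta_2-\beta_1)(g_t^2-v_{t-1}),\]
obtained by subtracting the hypothesis $v_t\ge \beta_2 v_{t-1}+(1-\beta_2)g_t^2$ from the Cauchy--Schwarz bound $m_t^2\le \beta_1 m_{t-1}^2+(1-\beta_1)g_t^2$. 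The coefficient $\beta_2-\beta_1$ of the error term vanishes precisely when $\beta_1=\beta_2$, consistent with the tight case. Unrolling using $m_0^2-v_0\le 0$ yields $m_t^2-v_t\le (\beta_2-\beta_1)\sum_{s=1}^t \beta_1^{t-s}(g_s^2-v_{s-1})$. I would then bound $g_s^2-v_{s-1}\le (v_s-v_{s-1})/(1-\beta_2)$ using the hypothesis on $v_s$, followed by the concavity inequality $v_s-v_{s-1}\le v_s\ln(v_s/v_{s-1})$ (i.e., $1-x\le -\ln x$ at $x=v_{s-1}/v_s$, valid for all $v_{s-1},v_s>0$), introducing the logarithmic quantities in the stated bound.

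The remaining bookkeeping divides by $v_t$, sums against $\eta_t$, exchanges the order of summation, and applies the Abel-summation identity
\[\sum_{t=s}^T C_t \;=\; \frac{1-\beta_1^{s-1}}{1-\beta_1}\sum_{t=s}^T \eta_t\beta_1^{t-s},\]
where $C_t=\eta_t\frac{1-\beta_1^{t-1}}{1-\beta_1}-\sum_{i=1}^{T-t}\eta_{t+i}\beta_1^{i-1}$ is the coefficient appearing in the lemma's RHS. This identity rewrites $\sum_{t=2}^T C_t\ln(v_t/v_1)$ as $\sum_{s=2}^T \frac{1-\beta_1^{s-1}}{1-\beta_1}\ln(v_s/v_{s-1})\sum_{t\ge s}\eta_t\beta_1^{t-s}$, producing the $(1-\beta_1^{s-1})$ prefactor in the stated bound. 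The initialization contribution (through $\beta_1^t m_0^2/v_t$ combined with $m_0^2\le v_0$) yields the correction $\frac{\beta_2-\beta_1}{1-\beta_2}\sum_t \eta_t\beta_1^{t-1}/A_T$.

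The hard part will be matching coefficients exactly. A naive application of $v_s/v_t\le \beta_2^{-(t-s)}$ produces the loose weight $(\beta_1/\beta_2)^{t-s}$ instead of the tight $\beta_1^{t-s}$ in the stated bound, and does not directly reveal the $(1-\beta_1^{s-1})$ prefactor. Achieving the sharp form demands a careful partitioning of the double sum over $s$ and $t$, so that the log contribution from each step $s$ is accumulated over future $t\ge s$ with exactly the right amortized weight; carrying out this bookkeeping via the Abel identity above is the main technical step of the proof.
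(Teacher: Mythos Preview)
Your $\beta_1=\beta_2$ argument is correct and matches the paper's. Your general-case skeleton is also essentially the paper's: after your Cauchy--Schwarz over $t$ and the recursion $m_t^2\le\beta_1 m_{t-1}^2+(1-\beta_1)g_t^2$ is unrolled, you land on exactly the quantity the paper obtains from its ``global'' Cauchy--Schwarz, namely $\sum_t\eta_t\,m_t^2/v_t\le\sum_t\eta_t\bigl[\beta_1^t v_0+(1-\beta_1)\sum_i\beta_1^i g_{t-i}^2\bigr]/v_t$.

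The gap is in how you introduce the logarithm. You propose $v_s-v_{s-1}\le v_s\ln(v_s/v_{s-1})$ \emph{before} dividing by $v_t$; after dividing, this leaves the factor $v_s/v_t$ inside the sum, and as you yourself note, bounding it by $\beta_2^{-(t-s)}$ is too loose. Your Abel identity is correct, but it is only a rewriting of the \emph{target} $\sum_t C_t\ln(v_t/v_1)$ in terms of consecutive log-ratios; it does nothing to remove the $v_s/v_t$ factor from what you actually derive. Concretely, your chain yields coefficient $\sum_{t\ge s}\eta_t\beta_1^{t-s}(v_s/v_t)$ on $\ln(v_s/v_{s-1})$, whereas the target has coefficient $\frac{1-\beta_1^{s-1}}{1-\beta_1}\sum_{t\ge s}\eta_t\beta_1^{t-s}$; since $\ln(v_s/v_{s-1})$ can have either sign, no termwise comparison closes this. (There is also a boundary issue: at $s=1$ with $v_0=0$ your bound $v_1-v_0\le v_1\ln(v_1/v_0)$ is vacuous.)

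The paper's fix is to swap the order. First Abel-sum $\sum_{s=1}^t\beta_1^{t-s}(v_s-v_{s-1})/v_t$ into $1-\beta_1^{t-1}\frac{v_0}{v_t}-(1-\beta_1)\sum_{i=1}^{t-1}\beta_1^{i-1}\frac{v_{t-i}}{v_t}$, drop the nonnegative $v_0/v_t$ term, and only \emph{then} apply the log inequality in the form $1-\frac{v_{t-i}}{v_t}\le\ln\frac{v_t}{v_{t-i}}$. This version is ``centered'' at $v_t$, so no residual ratio remains. Note also that the $\beta_1^{t-1}$ correction in the stated bound does not come from the initialization term (which is discarded) but from the identity $1=\beta_1^{t-1}+(1-\beta_1)\sum_{i=1}^{t-1}\beta_1^{i-1}$ used in this rewriting; the subsequent collapse to $\ln(v_t/v_1)$ is then routine reindexing.
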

Note $\{v_t\}_{t=0}^\infty$ here only needs to satisfy a more general condition rather than to be the exact moving average of $g_t^2$. It can be applied to the practical scenario where a small positive constant $\epsilon$ is added to $\sqrt{\vv_t}$ in the denominator to improve the numerical stability of \adam. It is easy to verify that for $\vv_t$ in \Cref{alg:adamw}, we have that
\begin{align*}
    &(\sqrt{\vv_t}+\epsilon)^2 - \beta_2(\sqrt{\vv_{t-1}}+\epsilon)^2 \\
    \geq& (\vv_t -\beta_2 \vv_{t-1}) + 2\epsilon(\sqrt{\vv_t}-\beta_2\sqrt{\vv_{t-1}})\\
    =&(1-\beta_2)\vg_t^2 + 2\epsilon(\sqrt{\beta_2^2 \vv_{t-1} + (1-\beta_2)\vg_t^2}-\sqrt{\beta_2^2 \vv_{t-1}}) \\
    \geq& (1-\beta_2)\vg_t^2. 
\end{align*}
Therefore, for \adam with $\epsilon$, $v_t$ in \Cref{eq:amortized_bound} is always lower bounded, and if we further have an upper bound for gradients, then we can easily control the average update size of \adam. One nice property is that the upper bound only scales up logarithmically to $1/\epsilon$, instead of linearly, as the naive upper bound scales. 

\paragraph{Relationship of $\ell_\infty$ norm and hyperparameters}
Another application of \Cref{lem:amortized_update_bound} is to provide a tight upper bound for the norm of iterates for any setting, \emph{e.g.}, before convergence or even when the gradient is stochastic. In particular, when the learning rate does not change over steps, we have the following upper bound whose proof is in \Cref{sec:iterate_norm_bound}. 
\begin{lemma}\label{lem:iterate_norm_bound}
For any coordinate $j\in[d]$, for \adamw with constant learning rate $\eta$ and weight decay factor $\lambda$, with $C\triangleq\max\limits_{1\le t\le T}\abs{\ln{\frac{\vv_{t,j}}{\vv_{1,j}}}}$, it holds that
    \begin{equation}
        \lambda\abs{\vx_{T,j}}-1 \leq  (1-\lambda\eta)^T \lambda\abs{\vx_{0,j}} +\frac{\lambda \eta(\beta_2-\beta_1)\left[2C + \beta_1^{T} + (1-\lambda\eta)^{T} \right]}{2(1-\beta_2)|1-\lambda\eta-\beta_1|}.
    \end{equation}
\end{lemma}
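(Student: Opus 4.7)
The plan is to apply \Cref{lem:amortized_update_bound} coordinate-wise to the unrolled \adamw recurrence. Since the \adamw update decouples across coordinates, I would fix an index $j$; writing $\alpha := 1-\lambda\eta$, the per-coordinate recurrence $\vx_{t,j} = \alpha\,\vx_{t-1,j} - \eta\,\Delta_{t,j}$ unrolls to $\vx_{T,j} = \alpha^T \vx_{0,j} - \eta\sum_{t=1}^T \alpha^{T-t}\Delta_{t,j}$, so the triangle inequality gives
\[
\lambda\abs{\vx_{T,j}} \leq \alpha^T\lambda\abs{\vx_{0,j}} + \lambda\eta\,\Bigl|\sum_{t=1}^T \alpha^{T-t}\Delta_{t,j}\Bigr|.
\]
I would then invoke \Cref{lem:amortized_update_bound} with the choice $\eta_t := \eta\alpha^{T-t}$ and linearize via $\sqrt{1+x}\leq 1 + x/2$, which is legal since the argument under the square root in the lemma is automatically nonnegative. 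Because $\lambda\sum_{t=1}^T \eta_t = 1 - \alpha^T \leq 1$, the ``main'' term of the linearized bound contributes exactly the constant $1$, which accounts for the ``$-1$'' on the left of the target inequality; it then suffices to bound the two correction quantities inside the square root of \Cref{lem:amortized_update_bound} by the two remaining pieces of the target.

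The first (``$A$'') correction is immediate: $\sum_{t=1}^T \eta_t\beta_1^{t-1}$ is a geometric series with ratio $\beta_1/\alpha$ and evaluates to $\eta(\alpha^T-\beta_1^T)/(\alpha-\beta_1)$; applying $\abs{\alpha^T - \beta_1^T} \leq \alpha^T + \beta_1^T$ and multiplying by the $(\beta_2-\beta_1)/(1-\beta_2)$ prefactor gives exactly the $\beta_1^T + (1-\lambda\eta)^T$ piece of the target after the factor of $1/2$. The second (``$B$'') correction is where the real work lies. Bounding $\abs{\ln(\vv_{t,j}/\vv_{1,j})}\leq C$ reduces the task to controlling $\sum_{t=2}^T \abs{f(t)}$ for $f(t) := \eta_t (1-\beta_1^{t-1})/(1-\beta_1) - \sum_{i=1}^{T-t}\eta_{t+i}\beta_1^{i-1}$. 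Substituting $\eta_t = \eta\alpha^{T-t}$, evaluating the inner geometric sum, and using $1-\alpha = \lambda\eta$, a short algebraic manipulation yields the clean identity
\[
(1-\beta_1)(\alpha-\beta_1)\,f(t) = \eta\bigl[(1-\beta_1)\beta_1^{T-t} - (1-\alpha)\alpha^{T-t} - (\alpha-\beta_1)\alpha^{T-t}\beta_1^{t-1}\bigr].
\]
Summing absolute values of the three pieces in $t$ reduces the problem to bounding $(1-\beta_1^{T-1}) + (1-\alpha^{T-1}) + \beta_1\abs{\alpha^{T-1}-\beta_1^{T-1}}$; a short case split on whether $\alpha \geq \beta_1$ or $\alpha < \beta_1$ shows this quantity is at most $2$ in both regimes, yielding the sharp estimate $(1-\beta_1)\sum_{t=2}^T\abs{f(t)}\leq 2\eta/\abs{\alpha-\beta_1}$. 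Plugging back through the $(\beta_2-\beta_1)(1-\beta_1)/(1-\beta_2)$ prefactor and the bound $C$ recovers the $2C$ piece of the target.

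The hard part, as suggested above, will be the estimate on $\sum_{t=2}^T\abs{f(t)}$: because $f(t)$ generally changes sign in $t$, the easier quantity $\abs{\sum_t f(t)}$ is too weak, and a termwise triangle inequality applied to the three-term decomposition above overshoots the needed constant by a factor of two. The tight constant is recovered only through the case split on $\alpha \gtrless \beta_1$, which exposes a partial cancellation between the third piece and one of the first two and squeezes $4\eta$ down to $2\eta$.
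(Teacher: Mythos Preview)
Your proposal is correct and essentially matches the paper's proof: same unrolling, same application of \Cref{lem:amortized_update_bound} with $\eta_t = \eta(1-\lambda\eta)^{T-t}$, same three-term decomposition of $f(t)$ (the paper's $\alpha_t$), and the same $\sqrt{1+x}\le 1+x/2$ linearization combined with $\lambda\sum_t\eta_t\le 1$. The one difference is that your case split on $\alpha\gtrless\beta_1$ is unnecessary---the crude bound $\beta_1|\alpha^{T-1}-\beta_1^{T-1}| \leq \beta_1(\alpha^{T-1}+\beta_1^{T-1})$ already yields $(1-\beta_1^{T-1}) + (1-\alpha^{T-1}) + \beta_1(\alpha^{T-1}+\beta_1^{T-1}) = 2 - (1-\beta_1)(\alpha^{T-1}+\beta_1^{T-1}) \leq 2$, which is exactly how the paper obtains the constant, so the ``hard part'' you anticipate is not there.
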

When $\beta_1=\beta_2$, we only need $T=\Omega\left(\frac{\log\norm{\vx_0}_\infty}{\lambda\eta}\right)$ to guarantee that $\abs{\vx_T,j}$ is no larger than $\frac{1}{\lambda}$ for any $\lambda\eta\le 1$. However, when $\beta_1 < \beta_2$ and $\beta_1 < 1-\lambda\eta$, the dominating term on the right-hand side is $C\cdot \frac{\eta\lambda(\beta_2-\beta_1)}{(1-\beta_2)(1-\eta\lambda-\beta_1)}$. Assuming $C=O(1)$,  it also requires $\lambda\eta \ll 1-\beta_2<1-\beta_1$ or $\lambda\eta<1-\beta_2 \approx 1-\beta_1$ to ensure the remaining term is small.
\section{Experiments}\label{sec:experiments}
In this section, we run experiments to verify the theoretical claims. In \Cref{sec:ptb_exp}, we show that the $\ell_\infty$ norm of iterates by \adamw can converge below $\frac{1}{\lambda}$ as shown in \Cref{thm:main} even when the function is non-convex. In \Cref{sec:synthetic_exp}, we show that steepest descent w.r.t. $\ell_\infty$ norm works better than w.r.t. $\ell_2$ norm for a specific function, which has better properties under $\ell_\infty$ geometry.  
\subsection{Language modeling task on PTB}\label{sec:ptb_exp}
We train a small two-layer transformer for language modeling task on the Penn Treebank dataset (PTB)~\citep{marcus1993building} based on the implementation provided by~\citet{kunstner2023noise}. We train the model in full batch without dropout in order to get deterministic gradients and follow the constant learning rate setting for the total $12800$ epochs. The learning rate $\eta$ is $\sqrt{10}\times10^{-3}$\footnote{We follow the tuning process in~\citet{kunstner2023noise} for \adamw and $\sqrt{10}\times10^{-3}$ achieves the best training performance. $\sqrt{10}\times10^{-3}$ also achieves the best training performance for full-batch \adam in~\citet{kunstner2023noise}. }. For each setting of $\beta_1$, $\beta_2$, we use \adam and \adamw with weight decay coefficient $\lambda=1,2$ to compare the $\ell_\infty$ norm for iterates in each optimizer. We employ the standard implementation in PyTorch but set $\epsilon$ to be $10^{-16}$ in \adam and \adamw rather than $0$ to avoid division by zero error because the gradient of some coordinates is $0$ in the first iteration. Each run is repeated for $2$ random seeds to show the robustness of our claim. More details can be found in \Cref{sec:exp_details}.

\begin{figure}
    \centering
    \begin{subfigure}[b]{0.24\textwidth}
        \includegraphics[width=\textwidth]{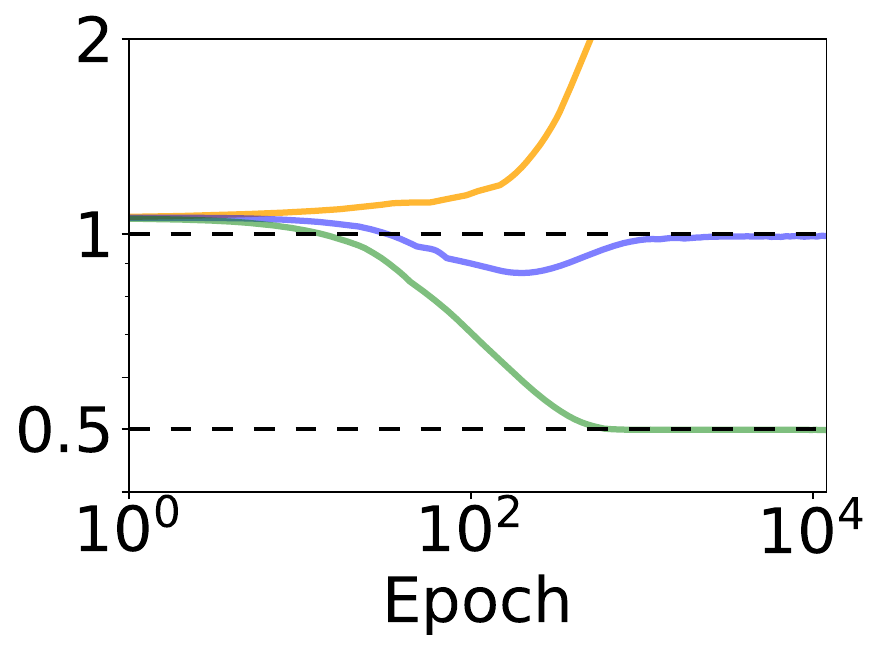}
        \caption{$\beta_1=\beta_2=0.99$}
        \label{fig:success_2}
    \end{subfigure}
    \begin{subfigure}[b]{0.24\textwidth}
        \includegraphics[width=\textwidth]{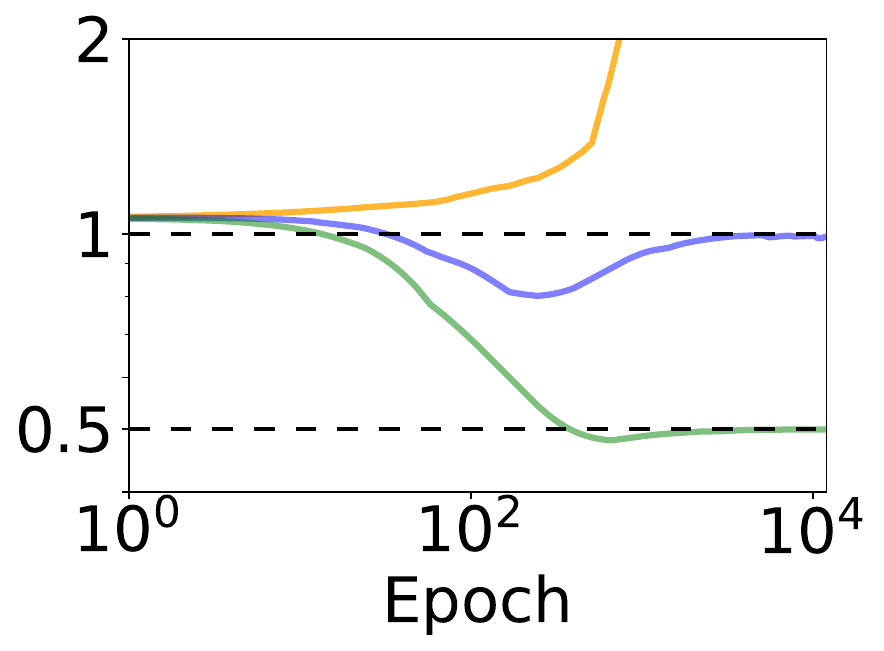}
        \caption{$\beta_1=\beta_2=0.999$}
        \label{fig:success_3}
    \end{subfigure}
    \begin{subfigure}[b]{0.24\textwidth}
        \includegraphics[width=\textwidth]{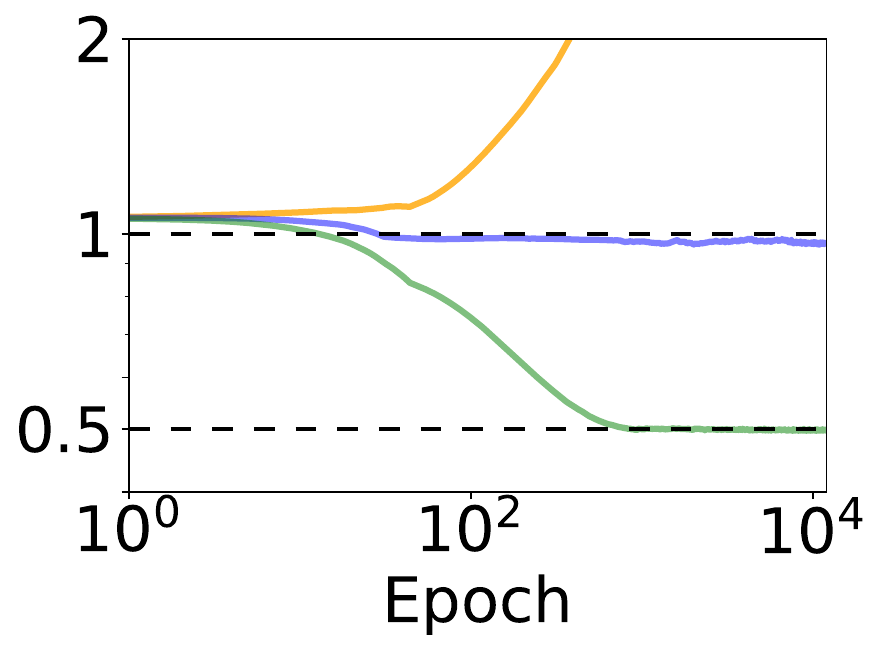}
        \caption{$\beta_1=0.9$, $\beta_2=0.95$}
        \label{fig:success_1}
    \end{subfigure}
    \begin{subfigure}[b]{0.24\textwidth}
        \includegraphics[width=\textwidth]{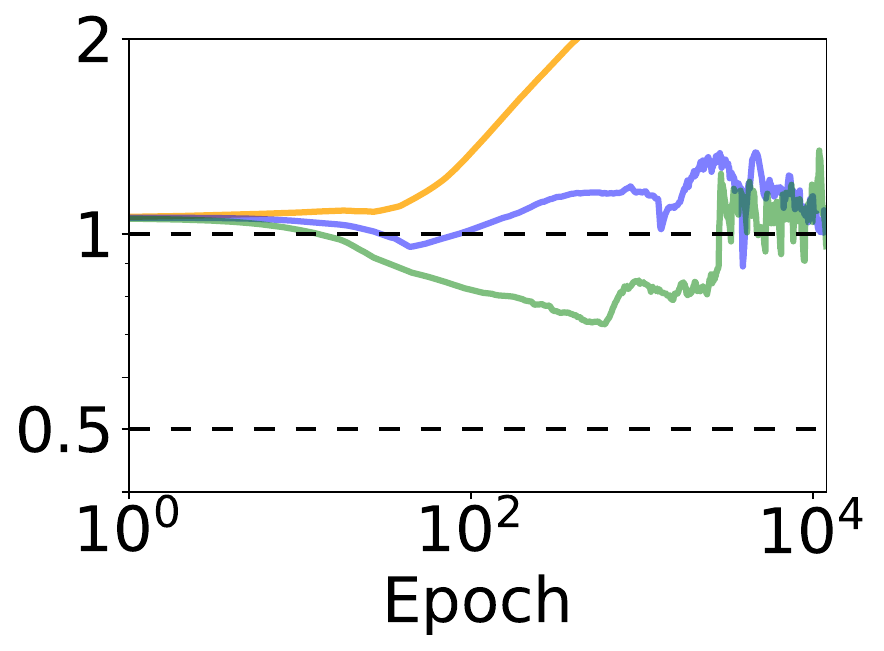}
        \caption{$\beta_1=0.9, \beta_2=0.999$}
        \label{fig:failure}
    \end{subfigure}

    \begin{subfigure}[b]{0.6\textwidth}
        \includegraphics[width=\textwidth]{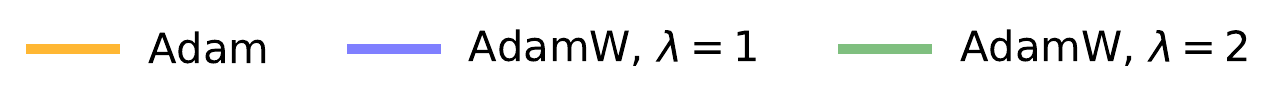}
        \phantomcaption
    \end{subfigure}
    \caption{The $\ell_\infty$ norm of parameters during the training process of language modeling task on PTB. 
    The complete results for \adam are in \Cref{fig:ptb_seed_0_full}. As predicted by \Cref{lem:iterate_norm_bound}, $\ell_\infty$ norm can be bounded by $\frac{1}{\lambda}$ when $\beta_1=\beta_2$ or $\lambda\eta \ll 1-\beta_2<1-\beta_1$. However, for the default setting $\beta_1=0.9$ and $\beta_2=0.999$, the $\ell_\infty$ norm of \adamw may not be bounded by $\frac{1}{\lambda}$ because $1-\beta_2<\lambda\eta<1-\beta_1$.}
    \label{fig:ptb_seed_0}
\end{figure}

From the discussion in \Cref{lem:iterate_norm_bound}, it requires either $\beta_1\approx\beta_2$ or $\lambda\eta \ll 1-\beta_2$ when $\beta_1 < \beta_2$ in order for $\norm{\vx_T}_\infty$ to be bounded by $\frac{1}{\lambda}$. To verify the first case, we employ the two hyperparamter settings where $\beta_1=\beta_2=0.99$ and $\beta_1=\beta_2=0.999$. To verify the second case, we employ the hyperparameter setting where $\beta_1=0.9$ and $\beta_2=0.95$. 
The $\ell_\infty$ norm of iterates for one random seed are shown in \Cref{fig:success_1,fig:success_2,fig:success_3}. In order to show the details around $0.5$ and $1$, we truncate the range of y-axis and the full result is plotted in \Cref{fig:ptb_seed_0_full} in \Cref{sec:exp_details}. 
In all these three settings, the $\ell_\infty$ norm of iterates in \adam keeps increasing while the $\ell_\infty$ norm of iterates in \adamw is constrained below $1$ and $\frac{1}{2}$ for $\lambda=1$ and $2$ respectively. The results for another random seed show similar pattern and are plotted in \Cref{fig:ptb_seed_1} and \Cref{fig:ptb_seed_1_full}. 

We also show that the condition is necessary for empirical study by training with default $\beta_1=0.9$ and $\beta_2=0.999$. Now $1-\beta_2 < \lambda \eta$ and $\beta_1 \neq \beta_2$, which breaks the condition. The $\ell_\infty$ norm of iterates are shown in \Cref{fig:failure}. The $\ell_\infty$ norm of \adamw can not be constrained by $1$ and $\frac{1}{2}$ for $\lambda=1$ and $2$. 

To study the phenomenon when the full batch assumption is removed, we repeat the same experiment with the batch size being half of the training dataset size. The results are shown in \Cref{fig:ptb_half_seed_0}. Note our norm upper bound~\Cref{lem:iterate_norm_bound} still holds in this case because it allows any sequence of gradients.  Interestingly, similar to the full batch case~\Cref{fig:ptb_seed_0},  $\ell_\infty$ norm still approaches $\frac{1}{\lambda}$, suggesting that our theory still provides useful prediction even for stochastic \adamw, if the noise is small.
\begin{figure}
    \centering
    \begin{subfigure}[b]{0.24\textwidth}
        \includegraphics[width=\textwidth]{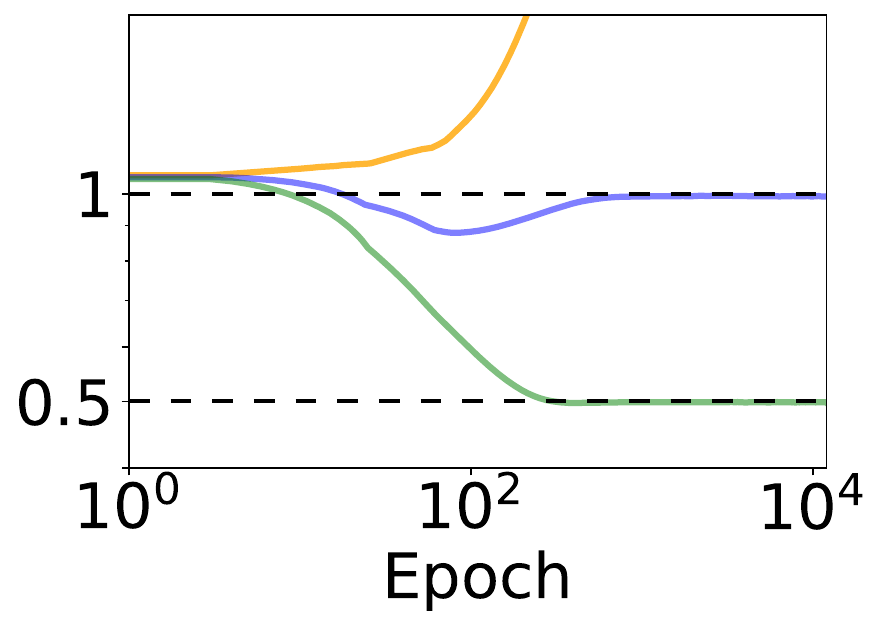}
        \caption{$\beta_1=\beta_2=0.99$}
    \end{subfigure}
    \begin{subfigure}[b]{0.24\textwidth}
        \includegraphics[width=\textwidth]{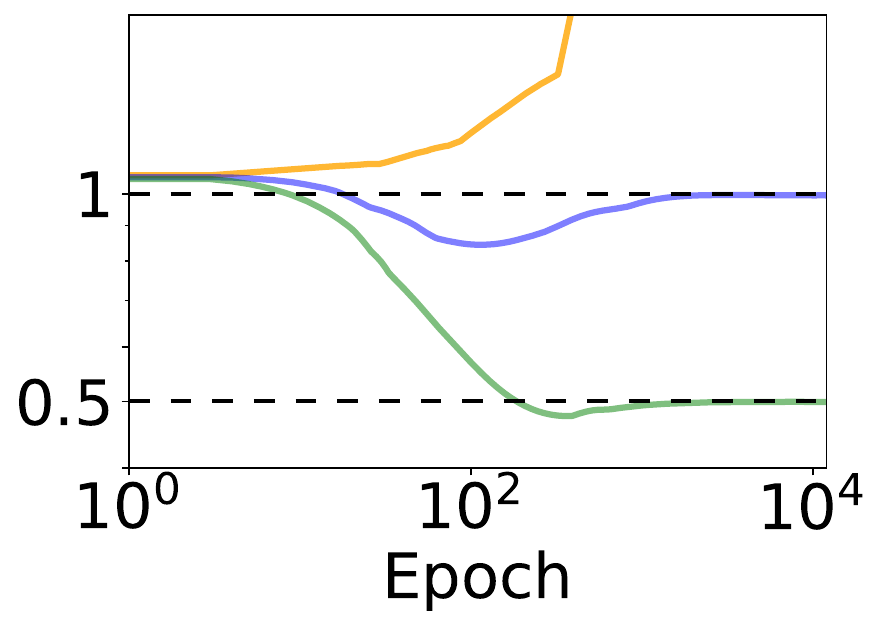}
        \caption{$\beta_1=\beta_2=0.999$}
    \end{subfigure}
    \begin{subfigure}[b]{0.24\textwidth}
        \includegraphics[width=\textwidth]{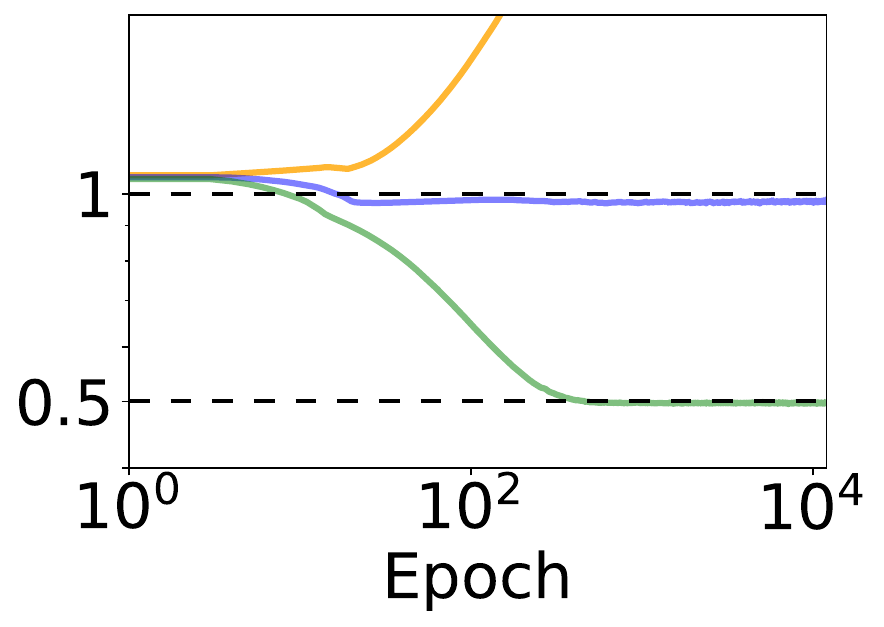}
        \caption{$\beta_1=0.9$, $\beta_2=0.95$}
    \end{subfigure}
    \begin{subfigure}[b]{0.24\textwidth}
        \includegraphics[width=\textwidth]{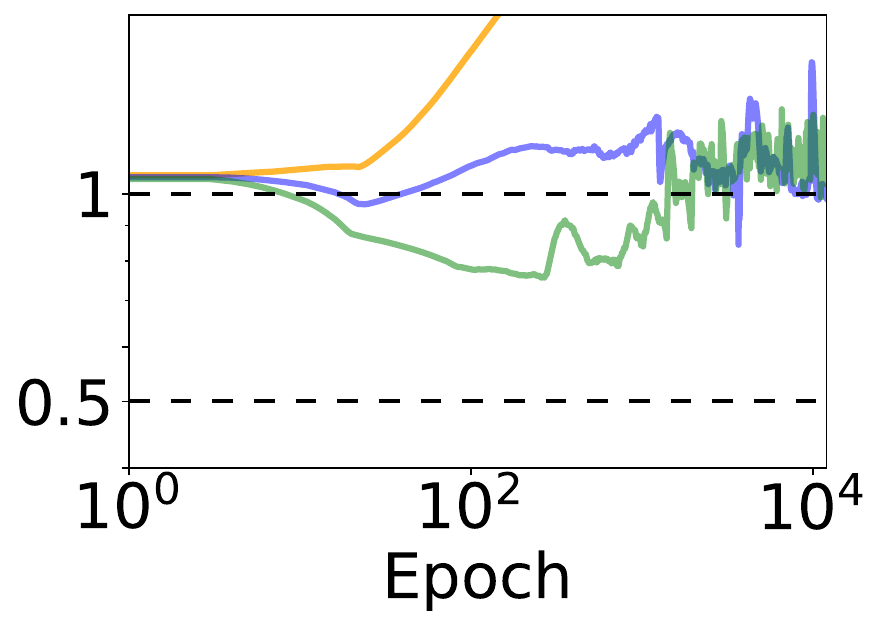}
        \caption{$\beta_1=0.9, \beta_2=0.999$}
    \end{subfigure}

    \begin{subfigure}[b]{0.6\textwidth}
        \includegraphics[width=\textwidth]{figures/legend.pdf}
        \phantomcaption
    \end{subfigure}
    \caption{$\ell_\infty$ norm of parameters when the batch size is half of the entire training set. Our norm upper bound~\Cref{lem:iterate_norm_bound} still holds in this case.}
    \label{fig:ptb_half_seed_0}
\end{figure}
\subsection{A synthetic problem}\label{sec:synthetic_exp}

As mentioned in \Cref{subsec:convex_convergence}, we construct the loss function \Cref{eq:synthetic_example} for some $\vx^*\in\mathbb{R}^{100}$. The first $10$ coordinates of $\vx^*$ is $1$ while the rest $90$ coordinates are uniformly sampled between $[-1,1]$. With such initialization, the $\ell_\infty$ norm of $\vx^*$ is always upper bounded by $1$ but the $\ell_2$ norm can be as large as $10$. 

We want to verify that an optimization algorithm can take advantage when it employs the norm that is more suitable for the loss function. So we implement normalized steepest descent and steepest descent with $\ell_\infty$ norm and $\ell_2$ norm. In order to test the effect of weight decay, we also implement normalized steepest descent with weight decay for both norms. In order to be able to reach the global minimizer, the weight decay factor $\lambda$ is set to be the lower bound of $\frac{1}{\norm{\vx^*}}$, which is $1$ for $\ell_\infty$ norm and $0.1$ for $\ell_2$ norm. The learning rate can be set according to theoretical results. The learning rate for steepest descent is constant $\frac{1}{H}$ in which $H$ is the smoothness coefficient. By \Cref{thm:1/t_convergence}, the learning rate for normalized steepest descent with weight decay factor $\lambda$ should be set as $\eta_t=\frac{2}{\lambda(t+1)}$. We also use the same learning rate schedule for normalized steepest descent without weight decay. 

All the algorithms receive the same initialization $\vx_0$, whose coordinate is uniformly initialized from $[-5,5]$. The results for the first $100$ iterations are shown in \Cref{fig:synthetic_exp}. The steepest descent w.r.t. $\ell_\infty$ norm always performs better than the steepest descent w.r.t. $\ell_2$ norm no matter whether the update is normalized or not. For both norms, the performance of the normalized steepest descent is improved when weight decay is activated.

\shuo{Maybe comment more on the advantage of $\ell_\infty$ norm. }
\begin{figure}
    \centering
    \includegraphics[width=0.5\linewidth]{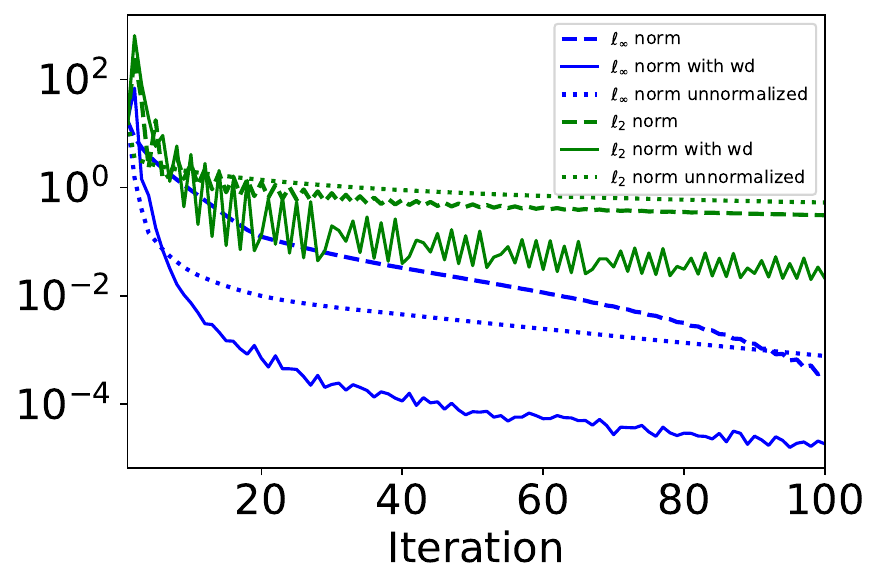}
    \caption{For both $\ell_2$ and $\ell_\infty$ norm, we plot training loss of normalized steepest descent w. and w.o. weight decay and unnormalized steepest descent over the quadratic loss $g(\vx) = \sum_{i=1}^{100} \frac{(\vx_i-\vx^*_i)^2}{i^2}$. When weight decay is turned on, it is set as $
    \frac{1}{\norm{\vx^*}}$ to preserve the optimal value even with the norm constraints~\Cref{thm:any_rate_convergence}. We find that \textbf{ $\ell_\infty$ norm always outperforms $\ell_2$ norm} regardless of the usage of weight decay and irrespective of whether the steepest descent method is normalized. The usage of \textbf{weight decay accelerates the optimization} for both $\ell_\infty$ norm and $\ell_2$ norm.}
    \label{fig:synthetic_exp}
\end{figure}

\section{Related Work}\label{sec:related_works}

\paragraph{Adaptive Methods:} While Stochastic Gradient Descent~\citep{robbins1951stochastic} remains popular for optimizing deep learning models like ResNet~\citep{he2016deep}, only adaptive methods can efficiently train recently-emerged large language models~\citep{zhang2020adaptive}. There has been a fruitful amount of research on adaptive gradient method, including AdaGrad~\citep{duchi2011adaptive}, RMSProp~\citep{tieleman2012lecture},  AdaDelta~\citep{zeiler2012adadelta}, Adam~\citep{kingma2014adam}, AdaFactor~\citep{shazeer2018adafactor}, AMSGrad~\citep{reddi2018convergence}, AdaBound~\citep{luo2019adaptive}, Lion~\citep{chen2023symbolic}, \emph{etc}. Recently there have been also adaptive methods attempting to accelerate by leveraging the second-order information, \emph{e.g.}, AdaHessian~\citep{yao2021adahessian} and Sophia~\citep{liu2023sophia}. However, most algorithms that are able to train large language models adopt coordinate-wise adaptivity. In contrast, stochastic gradient descent, even equipped with global gradient norm clipping, cannot match the performance of coordinate-wise adaptive algorithms on language tasks~\citep{li2022robust}. Previous work has given convergence rate for RMSProp and Adam under different assumptions~\citep{chen2018convergence,zou2019sufficient,shi2021rmsprop,guo2021novel,defossez2022simple,zhang2022adam}.

Our work shows that \adamw and \signgd with weight decay converge to the same point assuming convergence. \citet{balles2018dissecting, kunstner2023noise} point out that the similarity with \signgd largely accounts for the advantage of \adam over \sgd.  
Moreover, when \signgd is equipped with momentum which is one key component of \adam, it can achieve comparable empirical results with \adam for various tasks~\citep{balles2018dissecting, kunstner2023noise,bernstein2018signsgd,crawshaw2022robustness}.

\paragraph{Role of Weight Decay:} The usage of weight decay, which refers to shrinking the parameter by a small constant fraction, can be dated back to the 1980s~\citep{rumelhart1986learning,hinton1987learning}. It has been recognized as a standard trick to improve the generalization performance of neural
networks~\citep{krogh1991simple,bos1996using} for a long time. \citet{krizhevsky2012imagenet} first noticed that weight decay can sometimes accelerate optimization in deep learning. For modern architectures equipped with normalization layers, \emph{e.g.}, BatchNorm~\citep{ioffe2015batch} and LayerNorm~\citep{ba2016layer}, only the direction of the parameters before normalization layers matters, rather than their norms. Turning on weight decay in such settings changes the effective learning rate of the parameters~\citep{hoffer2018norm,arora2018theoretical,zhang2018three,li2019exponential,li2020reconciling}.

Though weight decay is equivalent to $\ell_2$ regularization for SGD, for steepest descent methods with general norms and adaptive methods like \adam, they lead to different optimization trajectories~\citep{loshchilov2017decoupled,zhang2018three,zhuang2022understanding}. The empirical benefit of weight decay over $\ell_2$ regularization when they are different is not well-understood in theory. 

\paragraph{Implicit Regularization:} Our main result~\Cref{thm:main} shows that \adamw regularizes the $\ell_\infty$ norm of the learned solution implicitly through modifying the optimization dynamics, rather than directly modifying the objective, like \adam with $\ell_2$ regularization. This kind of behavior is termed \emph{implicit regularization} or \emph{implicit bias} of optimization algorithms. Though there has been a large volume of works studying the implicit bias of (Stochastic) GD and its non-adaptive variants, including settings related to max margin~\citep{soudry2018implicit,gunasekar2018characterizing,nacson2019lexicographic,nacson2019convergence,ji2018gradient,lyu2019gradient}, initialization with norm~\citep{gunasekar2017implicit,arora2019implicit,li2019towards,lyu2021gradient}, kernel regime~\citep{jacot2018neural,arora2019fine,arora2019exact}, and flatness~\citep{blanc2019implicit,damian2021label,li2021happens,arora2022understanding,li2022fast,wen2022sharpness,damian2022self}, very few results are known about the implicit bias of adaptive methods like \adam. 
\citet{wilson2017marginal} shows that for linear regression problem, adaptive methods can converge to a solution whose elements have the same magnitude under certain conditions. Their converged solution thus has small $\ell_\infty$ norm while the converged solution of non-adaptive methods is known to have the smallest $\ell_2$ norm among all the global minimizers. \citet{wang2021implicit} shows that \adam behaves similarly to non-adaptive methods like GD when the cross-entropy loss converges to $0$ due to the positive numerical stability hyperparameter $\epsilon$ in the denominator of \adam's update rule. The theoretical derivation by \citet{cattaneo2023implicit} argues that \adam tends to find interpolating solutions with small $\ell_1$ norm. 

The concurrent work by \citet{chen2023lion} is arguably the most related work to us, where the recently discovered optimization algorithm by auto-search, Lion~\citep{chen2023symbolic}, is elegantly generalized to a family of algorithms, Lion-$\mathcal{K}$, where $\mathcal{K}$ is some convex function. When $\mathcal{K}$ is chosen to be the dual norm and momentum in Lion-$\mathcal{K}$ is turned off, Lion-$\mathcal{K}$ becomes the normalized steepest descent. Their analysis shows that even with momentum, the steepest normalized descent with weight decay can be viewed as optimization under the original norm constraint. However, in any Lion-$\mathcal{K}$ algorithm, the update at one step $t$ only depends on past iterates through first-order momentum $\vm_t$. Their analysis cannot be applied to \adamw because \adamw cannot be written in the form of Lion-$\mathcal{K}$ for any convex function $\mathcal{K}$. To see this, simply note that the update of Lion-$\mathcal{K}$ for a fixed $\mathcal{K}$ is completely determined by  $\vg_t, \vm_t$ and $\vx_t$ while the update of \adamw can still be different if the second order momentum $\vv_t$ is different. In terms of proof technique, \citet{chen2023lion} constructs the Lyapunov function while we directly characterize the KKT point and connect the converged point to KKT point through the weighted average update. 
\section{Discussion and Future Works}\label{sec:discussion}

This work focuses on the implicit bias of \adamw in the deterministic (or full-batch) case. Though our upper bound on the average update size of \adam holds unconditionally on the input gradients, regardless of stochasticity or not, it is unlikely that the $\frac{1}{\lambda}$ upper bound can be reached when there is large gradient noise, especially when $\beta_2$ is very close to $1$. In that case, the denominator of the update of \adamw is roughly the square root of the square of the expected gradient plus some additional gradient variance term, which strictly dominates the expected gradient in the numerator. \citet{malladi2022sdes} uses Stochastic Differential Equation (SDE) approximation to model the trajectories of \adam in such regime and empirically tests the implication of SDE approximation, namely the square root scaling rule.

The most interesting future direction is to understand in what sense the optimization advantage of coordinate-wise adaptive methods like \adam over standard SGD for language modeling tasks can be explained by the conjecture implied by the findings of \citet{kunstner2023noise}, that \emph{the loss function for language modeling tasks has better properties under $\ell_\infty$ geometry}. It would be interesting to understand if the loss landscape in real-world applications shares common properties with our toy quadratic example and induces similar results in~\Cref{fig:synthetic_exp}.

Another important future direction is to provide non-asymptotic convergence rates for \adamw in both convex and non-convex settings. 
\section{Conclusions}\label{sec:conlusions}

We make the first step towards understanding the benefit of \adamw over \adam with $\ell_2$ regularization by characterizing the implicit bias of \adamw, \emph{i.e.}, it can only converge to KKT points of the $\ell_\infty$ norm constrained optimization problem. There are two main insights behind this result: (1) \adam is a smoothed version of \signgd, which is the normalized steepest descent w.r.t. $\ell_\infty$ norm; (2) for any norm, the corresponding normalized steepest descent with weight decay is essentially Frank-Wolfe over the corresponding norm ball, which is known to perform constrained optimization. Our main technical contribution is a tight upper bound of the average update size of \adam updates. We test its prediction on the relationship between the $\ell_\infty$ norm of the parameters and the \adamw hyperparameters $\eta,\lambda$, $\beta_1$, $\beta_2$ on a language modeling task.

\bibliography{all}
\bibliographystyle{iclr2023_conference}

\appendix
\newpage {
\hypersetup{linkcolor=black}
\tableofcontents
}
\section{Omitted Proofs in \Cref{sec:warm_up}}\label{sec:normalized_sd}
In this section, we provide the omitted proofs in \Cref{sec:warm_up}, which shows the iterates and the converged solution by normalized steepest descent with decoupled weight decay before diving into the analysis on \adamw. In \Cref{sec:convergence_to_ball}, we prove that the iterates will enter or stay in the norm ball with radius $\frac{1}{\lambda}$ for any normalized update. In \Cref{sec:convergence_nsd}, we prove that the iterates of normalized steepest descent with weight decay will converge to the constrained minimizer of $L(\vx)$ in the same ball with proper learning rates.
\subsection{Omitted proofs for convergence into norm ball with bounded update}\label{sec:convergence_to_ball}

\begin{proof}\Cref{lem:convergence_to_ball}
    We prove by induction that $\norm{\vx_t} \leq \frac{1}{\lambda}+\prod_{i=1}^{t}(1-\lambda \eta_i) \left(\norm{\vx_0} - \frac{1}{\lambda} \right)$. 
    \begin{align*}
        \norm{\vx_t} - \frac{1}{\lambda} &= \norm{(1-\lambda \eta_{t}) \vx_{t-1} - \eta_{t} \vDelta_{t}}  - \frac{1}{\lambda}\\
        &\leq (1-\lambda \eta_{t}) \norm{\vx_{t-1}} + \eta_{t} \norm{\vDelta_{t}} - \frac{1}{\lambda} \\
        &\leq (1-\lambda \eta_{t}) \norm{\vx_{t-1}} + \eta_{t} - \frac{1}{\lambda} \\
        & = (1-\lambda \eta_{t}) \left(\norm{\vx_{t-1}} - \frac{1}{\lambda} \right)\\
        & \leq \prod_{i=1}^{t} (1-\lambda \eta_i) \left(\norm{\vx_0} - \frac{1}{\lambda}\right).
    \end{align*}
    When $\norm{\vx_0} > \frac{1}{\lambda}$, we have that
    \begin{align*}
        \norm{\vx_t} - \frac{1}{\lambda} &\leq \prod_{i=1}^{t} (1-\lambda \eta_i) \left(\norm{\vx_0} - \frac{1}{\lambda}\right) \\
        &\leq \prod_{i=1}^{t} \exp(-\lambda \eta_i) \left(\norm{\vx_0} - \frac{1}{\lambda}\right) \\
        &=\exp(-\lambda \sum_{i=1}^{t} \eta_i) \left(\norm{\vx_0} - \frac{1}{\lambda}\right).
    \end{align*}
    When $\norm{\vx_0} \leq \frac{1}{\lambda}$, $\norm{\vx_t} -\frac{1}{\lambda} \leq 0$. This completes the proof.
\end{proof}
\subsection{Omitted proofs for convergence to constrained minimizer with proper learning rates}\label{sec:convergence_nsd}

\begin{proof}[Proof of \Cref{lem:one_step_nsd}]

For normalized steepest descent update $\vDelta_t$ from \Cref{eq:def_nsd_wd}, 
\begin{align*}
        \nabla L(\vx_{t-1})^\top (\vx_{t}-\vx_{t-1}) &= -\eta_t \nabla L(\vx_{t-1})^\top \vDelta_t - \lambda \eta_t\nabla L(\vx_{t-1})^\top \vx_{t-1} \\
        &=-\eta_t \norm{\nabla L(\vx_{t-1})}_* - \lambda \eta_t \nabla L(\vx_{t-1})^\top (\vx_{t-1} - \vx^*) - \lambda \eta_t \nabla L(\vx_{t-1})^\top \vx^*\\
        &\leq -\eta_t \norm{\nabla L(\vx_{t-1})}_* - \lambda \eta_t \left(L(\vx_{t-1}) -L(\vx^*) \right) + \lambda \eta_t \norm{\nabla L(\vx_{t-1})}_* \norm{\vx^*}\\
        &\leq - \lambda \eta_t \left(L(\vx_{t-1}) -L(\vx^*) \right),
    \end{align*}
where the first inequality we use convexity of $L$ and the second inequality uses $\norm{\vx^*}\le 1$.

Since the gradient of $L$ is $H$-lipschitz, by Taylor expansion, we have that 
    \begin{align*}
        &L(\vx_{t}) - L(\vx_{t-1})\\ =& \int_0^1 \nabla L\left(\vx_{t-1} + \alpha (\vx_{t} - \vx_{t-1}) \right)^\top(\vx_{t} -\vx_{t-1}) \,d\alpha \\
        =& \nabla L(\vx_{t-1})^\top (\vx_{t}-\vx_{t-1}) + \int_0^1 \left(\nabla L\left(\vx_{t-1} + \alpha (\vx_{t} - \vx_{t-1}) \right) -\nabla L(\vx_{t-1}) \right)^\top(\vx_{t} -\vx_{t-1}) \,d\alpha \\
        \leq& - \lambda \eta_t \left(L(\vx_{t-1}) -L(\vx^*) \right) + \int_0^1 \norm{\nabla L\left(\vx_{t-1} + \alpha (\vx_{t} - \vx_{t-1}) \right) -\nabla L(\vx_{t-1})}_* \norm{\vx_{t}-\vx_{t-1}} \,d\alpha \\
        \leq& - \lambda \eta_t \left(L(\vx_{t-1}) -L(\vx^*) \right) + \int_0^1 H \alpha \norm{\vx_{t}-\vx_{t-1}}^2 \,d\alpha \\
        =& - \lambda \eta_t \left(L(\vx_{t-1}) -L(\vx^*) \right) +\frac{H}{2} \norm{\vx_{t}-\vx_{t-1}}^2.
    \end{align*}
    Because the update $\vDelta_t$ is normalized and thus have unit norm by definition, it holds that 
    \begin{align*}
        \norm{\vx_{t}-\vx_{t-1}}^2 = \norm{-\eta_t \vDelta_t - \lambda \eta_t \vx_{t-1}}^2 
        \leq \eta_t^2\left(\norm{\vDelta_t} + \lambda \norm{\vx_{t-1}} \right)^2 
        \leq \eta_t^2\left(1 + \lambda \norm{\vx_{t-1}} \right)^2. 
    \end{align*}
    Finally, we conclude that 
    \[L(\vx_{t}) - L(\vx^*) \leq (1-\lambda \eta_t)(L(\vx_{t-1}) - L(\vx^*)) + \frac{H}{2} \eta_t^2\left(1 + \lambda \norm{\vx_{t-1}} \right)^2.\]
\end{proof}

\begin{proof}[Proof of \Cref{thm:any_rate_convergence}]
    The proof of \Cref{thm:any_rate_convergence} is a direct application of  \Cref{lem:lr_convergence} on the one-step descent lemma~\Cref{lem:one_step_nsd}.
\end{proof}

\begin{lemma} \label{lem:lr_convergence}
    Assume that $\eta_t \geq 0$, $\lim_{t \rightarrow \infty} \eta_t =0$ and $\sum_{t=1}^\infty \eta_t = \infty$. $C$ is any positive number and $a_0 \geq 0$. If the sequence $\{a_t \}_{t=0}^\infty$ satisfies that $a_{t} \leq (1-\eta_t) a_{t-1} + C \eta_t^2$, then $\lim_{t \rightarrow \infty} a_t = 0$.
\end{lemma}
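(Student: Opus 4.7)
The plan is to prove that for every $\epsilon > 0$, we have $a_t \le \epsilon$ for all sufficiently large $t$; since $\epsilon$ is arbitrary and $a_t \ge 0$ (which holds in the intended application $a_t = L(\vx_t) - L(\vx^*)$ by convexity, and I will assume throughout), this gives $\lim_{t \to \infty} a_t = 0$. The key structural observation is that the recursion $a_t \le (1 - \eta_t) a_{t-1} + C \eta_t^2$ is an ``almost contraction plus quadratic noise'', where the noise term $C \eta_t^2 = \eta_t \cdot (C \eta_t)$ is dominated by the linear contraction $\eta_t a_{t-1}$ as soon as $a_{t-1}$ is bounded away from zero and $\eta_t$ is small; this is exactly what the hypothesis $\eta_t \to 0$ gives.

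Concretely, I would fix $\epsilon > 0$ and, using $\eta_t \to 0$, pick $T$ large enough that $\eta_t < 1$ and $C \eta_t \le \epsilon/4$ for all $t > T$. Then I split into two cases. \textbf{Case 1:} If there exists $s \ge T$ with $a_s \le \epsilon/2$, I would show by induction that $a_t \le \epsilon/2$ for every $t \ge s$; the inductive step reduces to $(1 - \eta_t)(\epsilon/2) + C \eta_t^2 \le \epsilon/2 - \eta_t(\epsilon/2 - C \eta_t) \le \epsilon/2$, so the sublevel set $\{a \le \epsilon/2\}$ is absorbing beyond time $T$. \textbf{Case 2:} If $a_t > \epsilon/2$ for all $t \ge T$, I would rewrite the recursion as $a_t - a_{t-1} \le -\eta_t a_{t-1} + C \eta_t^2 \le -\eta_t(\epsilon/2) + \eta_t(\epsilon/4) = -\eta_t \epsilon/4$ and telescope to obtain $a_N \le a_T - (\epsilon/4) \sum_{t = T+1}^N \eta_t$; since $\sum \eta_t = \infty$, the right-hand side diverges to $-\infty$, contradicting $a_N \ge 0$. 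Hence Case 1 must occur, giving $a_t \le \epsilon/2 < \epsilon$ for all large $t$.

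The main obstacle, in the sense of requiring an idea rather than routine algebra, is this case split: the recursion alone gives neither monotonicity nor a simple closed form, so one cannot just bound $a_t$ by a single product formula or a geometric tail. Recognizing that $\{a \le \epsilon/2\}$ is an absorbing set in one regime while $\sum \eta_t = \infty$ forces the linearly-decreasing lower bound to blow through $0$ in the other is what makes both hypotheses $\eta_t \to 0$ and $\sum \eta_t = \infty$ genuinely necessary; once this dichotomy is in hand, the remaining verification of each case is a mechanical manipulation of the recursion.
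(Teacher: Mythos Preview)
Your proof is correct and takes a genuinely different route from the paper's. The paper unrolls the recursion to obtain the explicit bound
\[
a_t \le a_0 \exp\Bigl(-\sum_{i=1}^t \eta_i\Bigr) + C \sum_{i=1}^t \eta_i^2 \exp\Bigl(-\sum_{j=i+1}^t \eta_j\Bigr),
\]
then shows each term vanishes: the first directly from $\sum \eta_t = \infty$, and the second by splitting the sum at a time $\tau$ after which $\eta_i \le \eta^*$ is small, bounding the tail by $\eta^* e^{\eta^*}$ via the telescoping identity $\eta_i \le e^{\eta_i}(1 - e^{-\eta_i})$. Your absorbing-set argument is more elementary---it bypasses the product formula entirely and reduces to a two-line dichotomy---and is the standard device from the stochastic-approximation literature. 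The paper's approach, by contrast, yields a quantitative pointwise bound on $a_t$ that could in principle be mined for a rate. Both proofs only establish $\limsup a_t \le 0$ and rely on $a_t \ge 0$ (which follows in the application from $a_t = L(\vx_t) - L(\vx^*)$) to close the limit; you flag this assumption explicitly, which is appropriate since the lemma as stated only hypothesizes $a_0 \ge 0$.
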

\begin{proof}[Proof of \Cref{lem:lr_convergence}]
First we show by induction that $a_t \leq a_0 \exp\left(-\sum_{i=1}^{t} \eta_i\right) + C \sum_{i=1}^{t} \eta_i^2 \exp\left(-\sum_{j=i+1}^{t} \eta_j \right)$.
    \begin{align*}
        a_{t} & \leq (1-\eta_t) a_{t-1} + C \eta_t^2 \\
        & \leq \exp\left(-\eta_t\right) a_{t-1} + C\eta_t^2 \\
        & \leq \exp\left(-\eta_t\right) \left[a_0 \exp\left(-\sum_{i=1}^{t-1} \eta_i\right) + C \sum_{i=1}^{t-1} \eta_i^2 \exp\left(-\sum_{j=i+1}^{t-1} \eta_j\right) \right] + C\eta_t^2 \\
        &=a_0 \exp\left(-\sum_{i=1}^{t} \eta_i\right)+C \sum_{i=1}^{t-1} \eta_i^2 \exp\left(-\sum_{j=i+1}^{t} \eta_j\right) + C\eta_t^2 \\
        &= a_0 \exp\left(-\sum_{i=1}^{t} \eta_i\right)+C \sum_{i=1}^{t} \eta_i^2 \exp\left(-\sum_{j=i+1}^{t} \eta_j\right).
    \end{align*}
Because $\sum_{t=1}^\infty \eta_t = \infty$, $\lim_{t \rightarrow \infty} a_0 \exp\left(-\sum_{i=1}^{t} \eta_i\right) =0$. In order to show $\lim_{t \rightarrow \infty} a_t = 0$, it's sufficient to show $\lim_{t \rightarrow \infty} \sum_{i=1}^{t} \eta_i^2 \exp\left(-\sum_{j=i+1}^{t} \eta_j \right) =0$. 

For any $\epsilon > 0$, $\eta^*$ is chosen such that $\eta^* e^{\eta^*} = \frac{\epsilon}{2}$. There exists $\tau \in \mathbb{N}^+$ such that $\eta_i \leq \eta^*$ for $i \geq \tau$. We choose $T$ such that $\exp(-\sum_{j=\tau}^{T} \eta_j) \leq \frac{\epsilon}{2\sum_{i=1}^{\tau-1} \eta_i^2}$. Then for any $t \geq T$, we have the following inequalities
\begin{align*}
    \sum_{i=1}^{\tau-1} \eta_i^2 \exp\left(-\sum_{j=i+1}^{t} \eta_j \right) & \leq \sum_{i=0}^{\tau-1} \eta_i^2 \exp\left(-\sum_{j=\tau}^{t} \eta_j \right) \\
    & \leq \left(\sum_{i=1}^{\tau-1} \eta_i^2 \right) \exp\left(-\sum_{j=\tau}^{T} \eta_j \right)\\
    & \leq \frac{\epsilon}{2},
\end{align*}
\begin{align*}
    \sum_{i=\tau}^{t} \eta_i^2 \exp\left(-\sum_{j=i+1}^{t} \eta_j \right) &\leq \eta^* \sum_{i=\tau}^{t} \eta_i \exp\left(-\sum_{j=i+1}^{t} \eta_j \right) \\
    &\leq \eta^* \sum_{i=\tau}^{t} \left(\exp(\eta_i) - 1\right) \exp\left(-\sum_{j=i+1}^{t} \eta_j \right) \\
    &=\eta^* \sum_{i=\tau}^{t}\exp(\eta_i) \left( 1 - \exp(-\eta_i)\right) \exp\left(-\sum_{j=i+1}^{t} \eta_j \right) \\
    & \leq \eta^* \sum_{i=\tau}^{t-1}\exp(\eta^*) \left( 1 - \exp(-\eta_i)\right) \exp\left(-\sum_{j=i+1}^{t} \eta_j \right) \\
    &=\eta^* \exp(\eta^*)\sum_{i=\tau}^{t} \left[ \exp\left(-\sum_{j=i+1}^{t} \eta_j \right)- \exp\left(-\sum_{j=i}^{t} \eta_j \right)\right] \\
    &= \eta^* \exp(\eta^*)\left[1- \exp\left(-\sum_{j=\tau}^{t} \eta_j \right)\right]\\
    &\leq \eta^* \exp(\eta^*) = \frac{\epsilon}{2},
\end{align*}
\begin{align*}
    \sum_{i=1}^{t} \eta_i^2 \exp\left(-\sum_{j=i+1}^{t} \eta_j \right) &= \sum_{i=1}^{\tau-1} \eta_i^2 \exp\left(-\sum_{j=i+1}^{t} \eta_j \right) + \sum_{i=\tau}^{t} \eta_i^2 \exp\left(-\sum_{j=i+1}^{t} \eta_j \right) \\
    & \leq \frac{\epsilon}{2} + \frac{\epsilon}{2} = \epsilon.
\end{align*}
\end{proof}

\begin{proof}[Proof of \Cref{thm:1/t_convergence}]
From \Cref{lem:convergence_to_ball}, $\norm{\vx_t} \leq \max{\{\norm{\vx_0}, \frac{1}{\lambda}\}} = B$ for $t \geq 0$. Define $C\triangleq \frac{H(1+\lambda B)^2}{2 \lambda^2}\frac{4}{(t+1)^2}$.

    We have that for $t=1$, 
    \begin{align*}
        L(\vx_1) - L(\vx^*) &\leq (1-1) (L(\vx_0) - L(\vx^*)) + \frac{H(1+\lambda B)^2}{2 \lambda^2} \\
        & = C \leq \frac{4C}{3}.
    \end{align*}
    Suppose $L(\vx_{t-1}) - L(\vx^*) \leq \frac{4C}{t+1}$, we have that
    \begin{align*}
        L(\vx_t) - L(\vx^*) &\leq (1-\frac{2}{t+1}) (L(\vx_{t-1}) - L(\vx^*)) + \frac{H(1+\lambda B)^2}{2 \lambda^2}\frac{4}{(t+1)^2} \\
        &\leq \frac{t-1}{t+1} \frac{4C}{t+1} + \frac{4C}{(t+1)^2}\\
        &=\frac{4Ct}{(t+1)^2} \leq \frac{4C}{t+2}.
    \end{align*}
\end{proof}

\subsection{Omitted Proofs for \Cref{lem:nsd_wd_KKT}}\label{sec:proof_nsd_wd_KKT}
\begin{proof}[Proof of \Cref{lem:nsd_wd_KKT}]$ $
    \begin{enumerate}
        \item For any $\eps > 0$, there exists $t'$ such that $\norm{\vx_t - \vx_\infty} \leq \frac{\eps}{2 \lambda}$ for any $t > t'$. Because $\eta_t \vDelta_t = \vx_{t-1} - \vx_{t} - \lambda \eta_t \vx_{t-1}$, we have that
        \begin{align*}
            \frac{\sum_{t=1}^T \eta_t \vDelta_t}{\sum_{t=1}^T \eta_t} &= \frac{\vx_0 - \vx_{T} - \lambda \sum_{t=1}^T \eta_t \vx_{t-1}}{\sum_{t=1}^T \eta_t} \\
            &=\frac{\vx_0 - \vx_{T}-\lambda(\sum_{t=1}^{t'} \eta_t \vx_{t-1}-\sum_{t=1}^{t'} \eta_t \vx_\infty)}{\sum_{t=1}^T \eta_t} -\lambda \frac{\sum_{t=1}^{t'} \eta_t \vx_\infty + \sum_{t=t'+1}^T \eta_t \vx_{t-1}}{\sum_{t=1}^T \eta_t}.
        \end{align*}
        There exists $T' \geq t'$ such that $\sum_{t=1}^T \eta_t \geq \frac{2}{\eps} \left(\norm{\vx_0 -\vx_\infty -\lambda \left(\sum_{t=1}^{t'} \eta_t \vx_{t-1}-\sum_{t=1}^{t'} \eta_{t} \vx_\infty \right)} + \frac{\eps}{2} \right)$ for $T \geq T'$. Then we have
        \begin{align*}
            \norm{\frac{\sum_{t=1}^T \eta_t \vDelta_t}{\sum_{t=1}^T \eta_t} +\lambda \vx_\infty} &\leq \frac{\norm{\vx_0 - \vx_{T}-\lambda \left(\sum_{t=1}^{t'} \eta_t \vx_{t-1}-\sum_{t=1}^{t'} \eta_t \vx_\infty \right)}}{\sum_{t=1}^T \eta_t} + \lambda \frac{\sum_{t=t'+1}^T \eta_t \norm{\vx_{t-1} - \vx_\infty}}{\sum_{t=1}^T \eta_t} \\
            &\leq \frac{\norm{\vx_0 -\vx_\infty -\lambda \left(\sum_{t=1}^{t'} \eta_t \vx_{t-1}-\sum_{t=1}^{t'} \eta_t \vx_\infty \right)} + \norm{\vx_{T}-\vx_\infty}}{\sum_{t=1}^T \eta_t} + \lambda \frac{\eps}{2 \lambda} \\
            &\leq \frac{\eps}{2} + \frac{\eps}{2} = \eps.
        \end{align*}
        So $\vDelta_\infty := \frac{\sum_{t=1}^T \eta_t \vDelta_t}{\sum_{t=1}^T \eta_t}$ exists and $\vDelta_\infty = -\lambda \vx_\infty$. 
        \item Because $\nabla L(\vx)$ is a continuous function and $\lim_{t \rightarrow \infty} \vx_t = \vx_\infty$, $\lim_{t\rightarrow \infty} \nabla L(\vx_t) = \nabla L(\vx_\infty)$. For any $\epsilon>0$, there exists $T_1$ such that \[\left|\norm{\nabla L(\vx_t)}_*-\norm{\nabla L(\vx_\infty)}_* \right| \leq \norm{\nabla L(\vx_t) - \nabla L(\vx_\infty) }_* \leq \frac{\epsilon}{3}\] 
        for any $t \geq T_1$. It also holds that \[|\inner{\nabla L(\vx_t)-\nabla L(\vx_\infty)}{\vDelta_t}| \leq \norm{\nabla L(\vx_t) - \nabla L(\vx_\infty) }_* \norm{\vDelta_t} \leq \frac{\epsilon}{3}\] because $\norm{\vDelta_t}\leq 1$.
        Because $\sum_{t=1}^\infty \eta_t=\infty$, there exists $T_2 \geq T_1$ such that \[\sum_{t=1}^{T_2} \eta_t \geq \frac{3}{\epsilon}\left|\sum_{t=1}^{T_1}\eta_t \left(\norm{\nabla L(\vx_t)}_*-\norm{\nabla L(\vx_\infty)}_* + \inner{\nabla L(\vx_\infty)-\nabla L(\vx_t)}{\vDelta_t}\right) \right|.\] Then for any $T \geq T_2$, we have that
        \begin{align*}
            &\left|\frac{\sum_{t=1}^T \eta_t \inner{\nabla L(\vx_\infty)}{\vDelta_t}}{\sum_{t=1}^T \eta_t} -\norm{\nabla L(\vx_\infty)}_*\right| \\
            =&\left|\frac{\sum_{t=1}^T \eta_t \inner{\nabla L(\vx_t)}{\vDelta_t}}{\sum_{t=1}^T \eta_t} + \frac{\sum_{t=1}^T \eta_t \inner{\nabla L(\vx_\infty)-\nabla L(\vx_t)}{\vDelta_t}}{\sum_{t=1}^T \eta_t}-\norm{\nabla L(\vx_\infty)}_*\right| \\
            =&\left|\frac{\sum_{t=1}^T \eta_t \norm{\nabla L(\vx_t)}_*}{\sum_{t=1}^T \eta_t} + \frac{\sum_{t=1}^T \eta_t \inner{\nabla L(\vx_\infty)-\nabla L(\vx_t)}{\vDelta_t}}{\sum_{t=1}^T \eta_t}-\norm{\nabla L(\vx_\infty)}_*\right|\\
            =&\left|\frac{\sum_{t=1}^T \eta_t (\norm{\nabla L(\vx_t)}_*-\norm{\nabla L(\vx_\infty)}_*)}{\sum_{t=1}^T \eta_t} + \frac{\sum_{t=1}^T \eta_t \inner{\nabla L(\vx_\infty)-\nabla L(\vx_t)}{\vDelta_t}}{\sum_{t=1}^T \eta_t}\right|\\
            \leq&\frac{\left|\sum_{t=1}^{T_1}\eta_t \left(\norm{\nabla L(\vx_t)}_*-\norm{\nabla L(\vx_\infty)}_* + \inner{\nabla L(\vx_\infty)-\nabla L(\vx_t)}{\vDelta_t}\right) \right|}{\sum_{t=1}^T \eta_t} \\
            +& \frac{\sum_{t=T_1+1}^T \eta_t \left|\norm{\nabla L(\vx_t)}_*-\norm{\nabla L(\vx_\infty)}_* \right|}{\sum_{t=1}^T \eta_t}+ \frac{\sum_{t=T_1+1}^T \eta_t \left|\inner{\nabla L(\vx_\infty)-\nabla L(\vx_t)}{\vDelta_t}\right|}{\sum_{t=1}^T \eta_t}\\
            \leq & \frac{\epsilon}{3} + \frac{\epsilon}{3}\frac{\sum_{t=T_1+1}^T \eta_t}{\sum_{t=1}^T \eta_t}+\frac{\epsilon}{3}\frac{\sum_{t=T_1+1}^T \eta_t}{\sum_{t=1}^T \eta_t} \\
            \leq &\epsilon.
        \end{align*}
        Therefore, we prove that $\norm{\nabla L(\vx_\infty)}_* = \lim_{T \to \infty} \frac{\sum_{t=1}^T \eta_t \inner{\nabla L(\vx_\infty)}{\vDelta_t}}{\sum_{t=1}^T \eta_t}$. On the other hand, we have that
        \begin{align*}
            \inner{\nabla L(\vx_\infty)}{\vDelta_\infty} &= \inner{\nabla L(\vx_\infty)}{\lim_{T \to \infty} \frac{\sum_{t=1}^T \eta_t \vDelta_t}{\sum_{t=1}^T \eta_t}}\\
            &=\lim_{T \to \infty} \inner{\nabla L(\vx_\infty)}{ \frac{\sum_{t=1}^T \eta_t \vDelta_t}{\sum_{t=1}^T \eta_t}}\\
            &=\lim_{T \to \infty} \frac{\sum_{t=1}^T \eta_t \inner{\nabla L(\vx_\infty)}{\vDelta_t}}{\sum_{t=1}^T \eta_t}, 
        \end{align*}
        which finishes the proof. 
        \item For any $T$, we know $\norm{\frac{\sum_{t=1}^T \eta_t \vDelta_t}{\sum_{t=1}^T \eta_t}} \leq \frac{\sum_{t=1}^T \eta_t \norm{\vDelta_t}}{\sum_{t=1}^T \eta_t} \leq \frac{\sum_{t=1}^T \eta_t}{\sum_{t=1}^T \eta_t}=1$. By the continuity of $\norm{\cdot}$, $\norm{\Delta_\infty} = \lim_{T \rightarrow\infty} \norm{\frac{\sum_{t=1}^T \eta_t \vDelta_t}{\sum_{t=1}^T \eta_t}} \leq 1$. 
    \end{enumerate}
\end{proof}
\section{Omitted Proofs in \Cref{sec:adamw}}\label{sec:adamw_details}

\subsection{Omitted proofs for upper bound for average update size of \adam}\label{sec:proof_amortized_update_bound}

\begin{proof}[Proof of \Cref{lem:amortized_update_bound}]
    We first represent $m_t$ and $v_t$ as a weighted sum of $g_t$ and $g_t^2$. 
    \begin{align*}
        m_t &= \beta_1 m_{t-1} + (1-\beta_1) g_t = \beta_1^t m_0+(1-\beta_1)\sum_{i=0}^{t-1} \beta_1^i g_{t-i},\\
        v_t &\geq \beta_2 v_{t-1} + (1-\beta_2) g_t^2 \geq \beta_2^t v_0+(1-\beta_2)\sum_{i=0}^{t-1} \beta_2^i g_{t-i}^2.
    \end{align*}By Cauchy–Schwarz inequality, we have that
    \begin{align*}
        \left|\sum_{t=1}^T \eta_t \Delta_t\right| &= \left| \sum_{t=1}^T \eta_t \frac{\beta_1^t m_0+(1-\beta_1)\sum_{i=0}^{t-1} \beta_1^i g_{t-i}}{\sqrt{v_t}} \right|\\
        &\leq \left[ \sum_{t=1}^T \eta_t \left(\frac{\beta_1^t m_0^2}{v_t}+\sum_{i=0}^{t-1} (1-\beta_1)\beta_1^i \frac{g_{t-i}^2}{v_t} \right)\right]^{\frac{1}{2}} \left[ \sum_{t=1}^T \eta_t \left( \beta_1^t + \sum_{i=0}^{t-1} (1-\beta_1)\beta_1^i \right)\right]^{\frac{1}{2}}\\
        &\leq \left(\sum_{t=1}^T \eta_t \frac{\beta_1^t v_0}{v_t} + \sum_{t=1}^T \eta_t \sum_{i=0}^{t-1} (1-\beta_1)\beta_1^i \frac{v_{t-i} - \beta_2 v_{t-i-1}}{(1-\beta_2)v_t} \right)^{\frac{1}{2}} \left( \sum_{t=1}^T \eta_t \right)^{\frac{1}{2}}.
    \end{align*}
    We further analyze the first time in RHS and have that
    \begin{align*}
        &\sum_{t=1}^T \eta_t \frac{\beta_1^t v_0}{v_t} +\sum_{t=1}^T \eta_t \sum_{i=0}^{t-1} (1-\beta_1)\beta_1^i \frac{v_{t-i} - \beta_2 v_{t-i-1}}{(1-\beta_2)v_t} \\
        =& \sum_{t=1}^T \eta_t \frac{\beta_1^t v_0}{v_t} +\sum_{t=1}^T \eta_t \frac{1-\beta_1}{(1-\beta_2) v_t} \left(v_t-\beta_1^{t-1} \beta_2 v_0+\sum_{i=1}^{t-1} (\beta_1^i - \beta_2 \beta_1^{i-1}) v_{t-i}\right)\\
        =&\sum_{t=1}^T \eta_t + \frac{\beta_2 - \beta_1}{1-\beta_2} \sum_{t=1}^T \eta_t \left(1-\beta_1^{t-1}\frac{v_0}{v_t}-(1-\beta_1)\sum_{i=1}^{t-1} \beta_1^{i-1} \frac{v_{t-i}}{v_t}\right)\\
        \leq &\sum_{t=1}^T \eta_t + \frac{\beta_2 - \beta_1}{1-\beta_2} \sum_{t=1}^T \eta_t \left(\beta_1^{t-1}+(1-\beta_1)\sum_{i=1}^{t-1} \beta_1^{i-1} (1-\frac{v_{t-i}}{v_t})\right) \\
         \leq& \sum_{t=1}^T \eta_t + \frac{\beta_2 - \beta_1}{1-\beta_2} \sum_{t=1}^T \eta_t \left(\beta_1^{t-1}+(1-\beta_1)\sum_{i=1}^{t-1} \beta_1^{i-1} \ln{(\frac{v_{t}}{v_{t-i}})}\right)\\
         =&\sum_{t=1}^T \eta_t +\frac{\beta_2-\beta_1}{1-\beta_2}\sum_{t=1}^T \eta_t \beta_1^{t-1}+ \frac{(\beta_2-\beta_1)(1-\beta_1)}{1-\beta_2}
        \sum_{t=1}^T \left(\eta_t\frac{1-\beta_1^{t-1}}{1-\beta_1} -\sum_{i=1}^{T-t} \eta_{t+i} \beta_1^{i-1}\right)\ln{v_t} \\
        =&\sum_{t=1}^T \eta_t +\frac{\beta_2-\beta_1}{1-\beta_2}\sum_{t=1}^T \eta_t \beta_1^{t-1}+ \frac{(\beta_2-\beta_1)(1-\beta_1)}{1-\beta_2}
        \sum_{t=2}^T \left(\eta_t\frac{1-\beta_1^{t-1}}{1-\beta_1} -\sum_{i=1}^{T-t} \eta_{t+i} \beta_1^{i-1}\right)\ln{\frac{v_t}{v_1}}. 
    \end{align*}
    When $\beta_1 = \beta_2$, we have that
    \begin{align*}
        \left|\Delta_t \right| &= \left| \frac{\beta_1^t m_0+(1-\beta_1)\sum_{i=0}^{t-1} \beta_1^i g_{t-i}}{\sqrt{v_t}} \right| \\
        &\leq \left( \frac{\beta_1^t m_0^2 + (1-\beta_1)\sum_{i=0}^{t-1} \beta_1^i g_{t-i}^2}{v_t}\right)^{\frac{1}{2}}\left( \beta_1^t + \sum_{i=0}^{t-1} (1-\beta_1)\beta_1^i \right)^{\frac{1}{2}}\\
        &\leq \left(\frac{\beta_1^t v_0 + (1-\beta_1)\sum_{i=0}^{t-1} \beta_1^i g_{t-i}^2}{v_t} \right)^{\frac{1}{2}}\\
        &=1.
    \end{align*}
\end{proof}
\subsection{Proof for \Cref{lem:property_converged_point}}
\begin{proof}[Proof of \Cref{lem:property_converged_point}]$ $

    \begin{enumerate}
    \item The proof for this part is the same as the proof of \Cref{lem:nsd_wd_KKT} in \Cref{sec:proof_nsd_wd_KKT}.
        \item If $\nabla L(\vx_\infty) = \vec{0}$, $\inner{\nabla L(\vx_\infty)}{\vDelta_\infty} =0= \norm{\nabla L(\vx_\infty)}_1$. 
        
        If $\nabla L(\vx_\infty) \neq \vec{0}$, we consider each coordinate $j$ such that $\nabla L(\vx_\infty)_j \neq 0$. Since we have that 
        $\lim_{t \rightarrow \infty} \nabla L(\vx_t)_j = \nabla L(\vx_\infty)_j$, we can get the convergence for $\vm_{t,j}$ and $\vv_{t,j}$. 
        \begin{align*}
            \vm_{t,j}&= (1-\beta_1) \sum_{i=0}^t \beta_1^i \vg_{t-i,j}\rightarrow \nabla L(\vx_\infty)_j, \\
            \vv_{t,j}&= (1-\beta_2) \sum_{i=0}^t \beta_2^i \vg_{t-i,j}^2 \rightarrow \nabla L(\vx_\infty)_j^2 \neq 0.
        \end{align*}
        Then we have that
        $\lim_{t \rightarrow \infty} \vDelta_{t,j} =\lim_{t \rightarrow \infty} \frac{\vm_{t,j}}{\sqrt{\vv_{t,j}}}= \text{sign}(\nabla L(\vx_\infty)_j)$. 
        For any $\eps > 0$, there exists $t'$ such that $\norm{\vDelta_{t,j} - \text{sign} \left(\nabla L(\vx_\infty)_j \right)} \leq \frac{\eps}{2}$ for $t \geq t'$. And there exists $T' \geq t'$ such that $\sum_{t=1}^T \eta_t \geq \frac{2}{\eps} \sum_{t=1}^{t'} \eta_t \left(\vDelta_{t,j}-\text{sign}\left(\nabla L(\vx_\infty)_j\right)\right)$ for any $T \geq T'$. Then for any $T \geq T'$, we have that
        \begin{align*}
            &\norm{\frac{\sum_{t=1}^T \eta_t \vDelta_{t,j}}{\sum_{t=1}^T \eta_t} - \text{sign}(\nabla L(\vx_\infty)_j)} \\
            \leq& \norm{\frac{\sum_{t=1}^{t'} \eta_t \left(\vDelta_{t,j}-\text{sign}(\nabla L(\vx_\infty)_j)\right)}{\sum_{t=1}^T \eta_t}} + \frac{\sum_{t=t'+1}^T \eta_t \norm{\vDelta_{t,j} - \text{sign}(\nabla L(\vx_\infty)_j)}}{\sum_{t=1}^T \eta_t} \\
            \leq& \frac{\eps}{2} + \frac{\eps}{2} = \eps. 
        \end{align*}
        So $\vDelta_{\infty,j} = \lim_{T \rightarrow \infty} \frac{\sum_{t=1}^T \eta_t \vDelta_{t,j}}{\sum_{t=1}^T \eta_t}=\text{sign}(\nabla L(\vx_\infty)_j)$ for $\nabla L(\vx_\infty)_j \neq 0$. Then we have that
        \begin{align*}
            \inner{\nabla L(\vx_\infty)}{\vDelta_\infty} &= \sum_{\nabla L(\vx_\infty)_j\neq 0}\nabla L(\vx_\infty)_j \vDelta_{\infty,j} \\
            &= \sum_{\nabla L(\vx_\infty)_j\neq 0}|\nabla L(\vx_\infty)_j|  \\
            &= \norm{\nabla L(\vx_\infty)}_1. 
        \end{align*}
        \item For nonzero coordinate $j$ of $\nabla L(\vx_\infty)$, from above we have $|\vDelta_{\infty,j}| = |\text{sign} (\nabla L(\vx_\infty)_j)|=1$.

        For $j$ such that $\nabla L(\vx_\infty)_j=0$, we know $\lim_{t \rightarrow \infty} \vg_{t,j} = \lim_{t \rightarrow \infty} \vm_{t,j} = \lim_{t \rightarrow \infty} \vv_{t,j} = 0$. We employ the upper bound for average update in \Cref{lem:amortized_update_bound} since $\{\vg_{t,j}\}_{t=1}^\infty$ and $\{\vv_{t,j}\}_{t=0}^\infty$ in \Cref{alg:adamw} satisfy the condition that $\vv_{t,j} - \beta_2 \vv_{t-1,j} \geq (1-\beta_2)\vg_{t,j}^2$ and $\vm_{0,j} = 0 \leq \sqrt{\vv_{0,j}}$. 
        By \Cref{lem:amortized_update_bound} we have 
        \begin{align*}
            & \left|\frac{\sum_{t=1}^T \eta_t \vDelta_{t,j}}{\sum_{t=1}^T \eta_t}\right|\\
            \leq & \left(\frac{\sum_{t=1}^T \eta_t +\frac{\beta_2-\beta_1}{1-\beta_2}\sum_{t=1}^T \eta_t \beta_1^{t-1}+ \frac{(\beta_2-\beta_1)(1-\beta_1)}{1-\beta_2}
        \sum_{t=2}^T \left(\eta_t\frac{1-\beta_1^{t-1}}{1-\beta_1} -\sum_{i=1}^{T-t} \eta_{t+i} \beta_1^{i-1}\right)\ln{\frac{\vv_{t,j}}{\vv_{1,j}}}}{\sum_{t=1}^T \eta_t}\right)^{\frac{1}{2}}. 
        \end{align*}
        The denominator goes to $\infty$ when $T \rightarrow \infty$. So it suffices to bound the last two terms in the numerator by constants in order to show $\norm{\vDelta_\infty}\le 1$.
        Because $\eta_t$ is non-increasing in $t$, it holds that 
        \begin{align*}
            \sum_{t=1}^T\eta_t \beta_1^t \leq \sum_{t=1}^T \eta_1 \beta_1^t \leq \frac{\eta_1 \beta_1}{1-\beta_1}.
        \end{align*}
        For the last term, we first analyze the coefficient between each $\ln{\vv_{t,j}}$. 
        Define $\alpha_t = \eta_t \frac{1-\beta_1^{t-1}}{1-\beta_1} -\sum_{i=1}^{T-t} \eta_{t+i} \beta_1^{i-1}$. We claim that $\abs{\alpha_t} \leq \max{\{\frac{\beta_1^{t-1}}{1-\beta_1}\eta_{t+1}, \frac{\eta_t}{1-\beta_1} \}} = \frac{\eta_t}{1-\beta_1}$. This is because 
        \begin{align*}
            \alpha_t&\leq \eta_t \frac{1-\beta_1^{t-1}}{1-\beta_1} \leq \frac{\eta_t}{1-\beta_1},
        \end{align*}
        and again by monotonicity of learning rates $\eta_t$, we have that
        \begin{align*}
            \alpha_t\geq \eta_{t+1} \frac{1-\beta_1^{t-1}}{1-\beta_1} -\sum_{i=1}^{T-t} \eta_{t+1} \beta_1^{i-1} \geq \eta_{t+1}\left( \frac{1-\beta_1^{t-1}}{1-\beta_1} - \sum_{i=1}^\infty \beta_1^{i-1}\right) 
            =-\frac{\beta_1^{t-1}}{1-\beta_1}\eta_{t+1}.
        \end{align*}
        
        We can also have $\ln{\frac{\vv_{t,j}}{\vv_{1,j}}} \geq (t-1) \ln{\beta_2}$ because \[\ln{\vv_{t,j}}-\ln{\vv_{t-1,j}} = \ln{\frac{\vv_{t,j}}{\vv_{t-1,j}}}=\ln{\frac{\beta_2 \vv_{t-1,j}+(1-\beta_2) \vg_{t,j}^2}{\vv_{t-1,j}}} \geq \ln{\beta_2}.\] And there exists $t'$ such that $\ln{\frac{\vv_{t,j}}{\vv_{1,j}}} \leq 0$ for any $t \geq t'$ because $\lim_{t \rightarrow \infty} \vv_{t,j}=0$.
        Then 
        \begin{align*}
            \sum_{t=2}^T \alpha_t\ln{\frac{\vv_{t,j}}{\vv_{1,j}}}  &\leq \sum_{t=2}^{t'} \abs{\alpha_t} \abs{\ln{\frac{\vv_{t,j}}{\vv_{1,j}}}}  + \sum_{t=t'+1}^T \alpha_t\ln{\frac{\vv_{t,j}}{\vv_{1,j}}} \\
            &\leq \sum_{t=2}^{t'}\frac{\eta_t}{1-\beta_1}\abs{\ln{\frac{\vv_{t,j}}{\vv_{1,j}}}}  + \sum_{\alpha_t \leq 0,\ t'<t \leq T}\alpha_t \ln{\frac{\vv_{t,j}}{\vv_{1,j}}} \\
            &\leq \frac{\sum_{t=2}^{t'}\eta_t\abs{\ln{\frac{\vv_{t,j}}{\vv_{1,j}}}}}{1-\beta_1}+\sum_{\alpha_t \leq 0,\ t'<t \leq T} \left(-\frac{\beta_1^{t}}{1-\beta_1}\eta_{t+1}\right) (t-1) \ln{\beta_2}\\
            &\leq \frac{\sum_{t=2}^{t'}\eta_t\abs{\ln{\frac{\vv_{t,j}}{\vv_{1,j}}}}}{1-\beta_1} +(-\ln{\beta_2})\sum_{t=1}^T \frac{(t-1)\beta_1^t}{1-\beta_1} \eta_{t+1} \\
            &\leq \frac{\sum_{t=2}^{t'}\eta_t\abs{\ln{\frac{\vv_{t,j}}{\vv_{1,j}}}}}{1-\beta_1}+(-\ln{\beta_2}) \eta_1 \sum_{t=1}^T \frac{(t-1)\beta_1^t}{1-\beta_1} \\
            &\leq \frac{\sum_{t=2}^{t'}\eta_t\abs{\ln{\frac{\vv_{t,j}}{\vv_{1,j}}}}}{1-\beta_1}-\frac{\eta_1\beta_1^2\ln{\beta_2}}{(1-\beta_1)^3}.
        \end{align*}
        Define $C:=\frac{(\beta_2-\beta_1)\eta_1\beta_1}{(1-\beta_2)(1-\beta_1)} + \frac{\beta_2-\beta_1}{1-\beta_2}\left(\sum_{t=2}^{t'}\eta_t\abs{\ln{\vv_{t,j}}}-\frac{\eta_1\beta_1^2\ln{\beta_2}}{(1-\beta_1)^2}\right)$, we now have 
        \begin{equation*}
            \left| \frac{\sum_{t=1}^T \eta_t \vDelta_{t,j}}{\sum_{t=1}^T \eta_t} \right|
            \leq \left( \frac{\sum_{t=1}^T \eta_t +C}{\sum_{t=1}^T \eta_t}\right)^{\frac{1}{2}}. 
        \end{equation*}
        Then $\left| \vDelta_{\infty, j}\right| =  \left|\lim\limits_{T\to\infty} \frac{\sum_{t=1}^T \eta_t \vDelta_{t,j}}{\sum_{t=1}^T \eta_t} \right| \le \lim\limits_{T\to\infty} \abs{ \frac{\sum_{t=1}^T \eta_t \vDelta_{t,j}}{\sum_{t=1}^T \eta_t}}\leq 1$ because $\sum_{t=1}^\infty \eta_T=\infty$. This completes the proof. 
    \end{enumerate}
\end{proof}

\subsection{A counter example when $\beta_1 > \beta_2$}\label{sec:counter_example}
For any $\lambda$ and $\beta_1 > \beta_2$, we provide the following example for which the iterates of \adamw will converge and the $\ell_\infty$ norm of the converged solution is larger than $\frac{1}{\lambda}$.

For some sufficiently small $\eta$, denote $\Tilde{x} = -\frac{1}{\lambda}\frac{\frac{1-\beta_1}{1-\lambda\eta-\beta_1}}{\sqrt{\frac{1-\beta_2}{(1-\lambda\eta)^2-\beta_2}}}$. $L(x)$ is defined as $\frac{1}{2}(x-\Tilde{x})^2$. For any starting point $x_0>\Tilde{x}$, $m_0$ is set as $\frac{1-\beta_1}{1-\lambda\eta-\beta_1}g_1$ and $v_0$ is set as $\frac{1-\beta_2}{(1-\lambda\eta)^2-\beta_2}g_1^2$ with $g_1=\nabla L(x_0) = x_0-\Tilde{x}$. We show by induction that $\frac{m_t}{\sqrt{v_t}} = -\lambda \Tilde{x}$ and $x_{t}-\Tilde{x} = (1-\lambda \eta)(x_{t-1}-\Tilde{x})$ for any $t \geq 1$. 

When $t=1$, we have that 
\begin{align*}
    m_1&=\beta_1 m_0 +(1-\beta_1)g_1=\frac{(1-\beta_1)(1-\lambda\eta)}{1-\lambda\eta-\beta_1}g_1, \\
    v_1&=\beta_2 v_0 + (1-\beta_2)g_1^2=\frac{(1-\beta_2)(1-\lambda\eta)^2}{(1-\lambda\eta)^2-\beta_2}g_1^2.
\end{align*}
Then $\frac{m_1}{\sqrt{v_1}} = \text{sign}(g_1)\frac{\frac{1-\beta_1}{1-\lambda\eta-\beta_1}}{\sqrt{\frac{1-\beta_2}{(1-\lambda\eta)^2-\beta_2}}} = \frac{\frac{1-\beta_1}{1-\lambda\eta-\beta_1}}{\sqrt{\frac{1-\beta_2}{(1-\lambda\eta)^2-\beta_2}}} = -\lambda \Tilde{x}$, which proves the first claim. For the second claim, we have that 
\begin{equation*}
    x_1-\Tilde{x} = x_0-\eta \frac{m_1}{\sqrt{v_1}}-\lambda\eta x_0-\Tilde{x}=x_0+\lambda\eta\Tilde{x}-\lambda\eta x_0-\Tilde{x}=(1-\lambda\eta)(x_0-\Tilde{x}).
\end{equation*}

Suppose the claims hold for any $0\leq i<t$. Then $g_{i+1}=\nabla L(x_i) = x_i-\Tilde{x} = (1-\lambda\eta)^i g_1$. We have that 
\begin{align*}
    m_t&=\beta_1^t m_0 + (1-\beta_1)\sum_{i=1}^t \beta_1^{t-i} g_i \\
    &=\beta_1^t \frac{1-\beta_1}{1-\lambda\eta-\beta_1}g_1 + (1-\beta_1)\sum_{i=1}^t \beta_1^{t-i} (1-\lambda\eta)^{i-1}g_1 \\
    &=\left(\beta_1^t \frac{1-\beta_1}{1-\lambda\eta-\beta_1} + (1-\beta_1) \frac{(1-\lambda\eta)^t-\beta_1^t}{1-\lambda\eta-\beta_1}\right)g_1 \\
    &=\frac{1-\beta_1}{1-\lambda\eta-\beta_1}(1-\lambda\eta)^tg_1, 
\end{align*}
and 
\begin{align*}
    v_t &= \beta_2^t v_0 + (1-\beta_2)\sum_{i=1}^t \beta_2^{t-i} g_i^2 \\
    &=\beta_2^t \frac{1-\beta_2}{(1-\lambda\eta)^2-\beta_2}g_1^2 + (1-\beta_2)\sum_{i=1}^t \beta_2^{t-i} (1-\lambda\eta)^{2(i-1)}g_1^2 \\
    &=\left(\beta_2^t \frac{1-\beta_2}{(1-\lambda\eta)^2-\beta_2} + (1-\beta_2) \frac{(1-\lambda\eta)^{2t}-\beta_2^t}{(1-\lambda\eta)^2-\beta_2}\right)g_1^2 \\
    &=\frac{1-\beta_2}{(1-\lambda\eta)^2-\beta_2}(1-\lambda\eta)^{2t}g_1^2. 
\end{align*}
Then $\frac{m_t}{\sqrt{v_t}} = \frac{\frac{1-\beta_1}{1-\lambda\eta-\beta_1}}{\sqrt{\frac{1-\beta_2}{(1-\lambda\eta)^2-\beta_2}}} = -\lambda \Tilde{x}$. We also have that 
\begin{equation*}
    x_{t}-\Tilde{x} = x_{t-1}-\eta \frac{m_t}{\sqrt{v_t}}-\lambda\eta x_{t-1}-\Tilde{x}=x_0+\lambda\eta\Tilde{x}-\lambda\eta x_0-\Tilde{x}=(1-\lambda\eta)(x_0-\Tilde{x}).
\end{equation*}. 

In this regime, $x_t$ will converge to $\Tilde{x}$ because $|x_t-\Tilde{x}|=O((1-\lambda\eta)^t)$. However, when $\beta_1 > \beta_2$ and $\lambda\eta$ is very small, $|\Tilde{x}|$ can be larger than $\frac{1}{\lambda}$. For example, when $\beta_1=0.99$, $\beta_2=0.9$, $\lambda=0.1$ and $\eta=0.01$, $|\Tilde{x}|=10.999>\frac{1}{\lambda}$. 

\subsection{Proof for upper bound for norm of iterates in \adamw}\label{sec:iterate_norm_bound}
\begin{proof}[Proof of \Cref{lem:iterate_norm_bound}]

For \adamw with constant learning rate $\eta$ and each coordinate $j$, $\vx_{T,j}$ can be written as weighted average of past update
\begin{align*}
    \vx_{T,j}&=(1-\lambda \eta)^T \vx_{0,j} + \sum_{t=0}^{T-1} \eta(1-\lambda \eta)^t \frac{\vm_{T-t,j}}{\sqrt{\vv_{T-t,j}}} \\
    &= (1-\lambda \eta)^T \vx_{0,j} +\sum_{t=1}^T \eta(1-\lambda \eta)^{T-t} \frac{\vm_{t,j}}{\sqrt{\vv_{t,j}}}.
\end{align*}
Define $\eta_t = \eta(1-\lambda \eta)^{T-t}$ for $1 \leq t \leq T$. We apply \Cref{lem:amortized_update_bound} on $\{\vv_{t,j}\}_{t=1}^T$ and $\{\vg_{t,j}\}_{t=1}^T$ to bound $\left| \sum_{t=1}^T \eta(1-\lambda \eta)^{T-t} \frac{\vm_{t,j}}{\sqrt{\vv_{t,j}}}\right|$. 

We first compute $\sum_{t=1}^T \eta_t = \frac{1-(1-\lambda \eta)^T}{\lambda} \leq \frac{1}{\lambda}$. For the second term in \Cref{eq:amortized_bound}, we have that
\begin{align*}
    \frac{\beta_2-\beta_1}{1-\beta_2} \sum_{t=1}^T \eta_t \beta_1^{t-1} &=\frac{\beta_2-\beta_1}{1-\beta_2} \sum_{t=1}^T \eta(1-\lambda \eta)^{T-t} \beta_1^{t-1} \\
    &=\frac{1}{\lambda}\frac{(\beta_2-\beta_1) \lambda \eta[(1-\lambda \eta)^T - \beta_1^T]}{(1-\beta_2)(1-\lambda\eta-\beta_1)}\\
    &\leq \frac{(\beta_2-\beta_1)\eta}{(1-\beta_2)\abs{1-\lambda\eta-\beta_1}}\left[\beta_1^{T} + (1-\lambda\eta)^{T} \right]. 
\end{align*}
For the last term, we define $\alpha_t = \eta_t\frac{1-\beta_1^{t-1}}{1-\beta_1} -\sum_{i=1}^{T-t} \eta_{t+i} \beta_1^{i-1}$ and we can compute the exact form of $\alpha_t$ as following
\begin{align*}
    \alpha_t&=\eta(1-\lambda \eta)^{T-t} \frac{1-\beta_1^{t-1}}{1-\beta_1} -\sum_{i=1}^{T-t} \eta(1-\lambda \eta)^{T-t-i} \beta_1^{i-1}\\
    &=\eta(1-\lambda \eta)^{T-t} \frac{1-\beta_1^{t-1}}{1-\beta_1}-\frac{\eta[(1-\lambda \eta)^{T-t}-\beta_1^{T-t}]}{1-\lambda \eta -\beta_1} \\
    &=\frac{\eta(1-\lambda\eta)^{T-t}(-\lambda\eta)}{(1-\beta_1)(1-\lambda\eta-\beta_1)} -\frac{\eta(1-\lambda\eta)^{T-t} \beta_1^{t-1}}{1-\beta_1} + \frac{\eta\beta_1^{T-t}}{1-\lambda\eta-\beta_1}.
\end{align*}
Then we can bound the last term by showing that
\begin{align*}
    &\frac{(\beta_2-\beta_1)(1-\beta_1)}{1-\beta_2} \sum_{t=2}^T \alpha_t \ln{\frac{\vv_{t,j}}{\vv_{1,j}}}\\
    \leq&\frac{(\beta_2-\beta_1)(1-\beta_1)}{1-\beta_2} \sum_{t=2}^T |\alpha_t| \left|\ln{\frac{\vv_{t,j}}{\vv_{1,j}}}\right|\\
    \leq& C \frac{(\beta_2-\beta_1)(1-\beta_1)}{1-\beta_2} \left[\sum_{t=2}^T \frac{\lambda \eta^2 (1-\lambda \eta)^{T-t}}{(1-\beta_1)|1-\lambda\eta-\beta_1|} + \sum_{t=2}^T \frac{\eta(1-\lambda\eta)^{T-t} \beta_1^{t-1}}{1-\beta_1} + \sum_{t=2}^T \frac{\eta\beta_1^{T-t}}{|1-\lambda\eta-\beta_1|}\right]\\
    =&C \frac{(\beta_2-\beta_1)(1-\beta_1)}{1-\beta_2} \left[\frac{\eta[1-(1-\lambda\eta)^{T-1}]}{(1-\beta_1)|1-\lambda\eta-\beta_1|} + \frac{\beta_1 \eta|(1-\lambda\eta)^{T-1}-\beta_1^{T-1}|}{(1-\beta_1)|1-\lambda\eta-\beta_1|} + \frac{\eta(1-\beta_1^{T-1})}{|1-\lambda\eta-\beta_1|(1-\beta_1)}\right]\\
    \leq&C \frac{(\beta_2-\beta_1)(1-\beta_1)}{1-\beta_2} \left[\frac{\eta[1-(1-\lambda\eta)^{T-1}]}{(1-\beta_1)|1-\lambda\eta-\beta_1|} + \frac{\beta_1 \eta\left((1-\lambda\eta)^{T-1}+\beta_1^{T-1}\right)}{(1-\beta_1)|1-\lambda\eta-\beta_1|} + \frac{\eta(1-\beta_1^{T-1})}{|1-\lambda\eta-\beta_1|(1-\beta_1)}\right]\\
    =&\frac{2C\eta(\beta_2-\beta_1)}{(1-\beta_2)|1-\lambda\eta-\beta_1|} + \frac{C\eta(\beta_2-\beta_1)(\beta_1-1)}{(1-\beta_2)|1-\lambda\eta-\beta_1|}\left[\beta_1^{T-1} + (1-\lambda\eta)^{T-1} \right] \\
    \leq& \frac{2C\eta(\beta_2-\beta_1)}{(1-\beta_2)|1-\lambda\eta-\beta_1|}.
\end{align*}

Therefore, we have the following bound
\begin{align*}
    \lambda|\vx_{T,j}| &\leq \lambda(1-\lambda\eta)^T |\vx_{0,j}| + \lambda\left| \sum_{t=1}^T \eta_t \frac{\vm_{t,j}}{\sqrt{\vv_{t,j}}}\right| \\
    &\leq \lambda(1-\lambda\eta)^T |\vx_{0,j}| + \lambda \sum_{t=1}^T \eta_t \biggl[1+ \frac{(\beta_2-\beta_1)\eta\left[\beta_1^{T} + (1-\lambda\eta)^{T} \right]}{\sum_{t=1}^T \eta_t(1-\beta_2)\abs{1-\lambda\eta-\beta_1}}+ \frac{2C\eta(\beta_2-\beta_1)}{\sum_{t=1}^T \eta_t(1-\beta_2)|1-\lambda\eta-\beta_1|} \biggr]^{\frac{1}{2}} \\
    &\leq \lambda(1-\lambda\eta)^T |\vx_{0,j}| +\left[(\lambda\sum_{t=1}^T \eta_t)^2+ \lambda^2 \sum_{t=1}^T \eta_t\left[ \frac{(\beta_2-\beta_1)\eta\left[\beta_1^{T} + (1-\lambda\eta)^{T} \right]+2C\eta(\beta_2-\beta_1)}{(1-\beta_2)\abs{1-\lambda\eta-\beta_1}}\right]\right]^{\frac{1}{2}}\\
    &\leq \lambda(1-\lambda\eta)^T |\vx_{0,j}| +\left[1+ \lambda  \frac{(\beta_2-\beta_1)\eta\left[\beta_1^{T} + (1-\lambda\eta)^{T} \right]+2C\eta(\beta_2-\beta_1)}{(1-\beta_2)\abs{1-\lambda\eta-\beta_1}}\right]^{\frac{1}{2}}\\
    &\leq \lambda(1-\lambda\eta)^T |\vx_{0,j}| +\left[1+ \frac{\lambda \eta(\beta_2-\beta_1)\left[\beta_1^{T} + (1-\lambda\eta)^{T} \right]}{2(1-\beta_2)|1-\lambda\eta-\beta_1|}+ C\frac{\lambda \eta(\beta_2-\beta_1)}{(1-\beta_2)|1-\lambda\eta-\beta_1|} \right]\\
    &\leq 1+ \lambda(1-\lambda\eta)^T \norm{\vx_{0}}_\infty +\frac{\lambda \eta(\beta_2-\beta_1)\left[\beta_1^{T} + (1-\lambda\eta)^{T} \right]}{2(1-\beta_2)|1-\lambda\eta-\beta_1|}+ C\frac{\lambda \eta(\beta_2-\beta_1)}{(1-\beta_2)|1-\lambda\eta-\beta_1|}.
\end{align*}
This completes the proof. 
\end{proof}
\section{Experimental Details and More Results}\label{sec:exp_details}

The architecture of the two-layer transformer is the same as in~\citet{kunstner2023noise}, which is also used as a tutorial example in PyTorch. It consists of a 200-dimensional embedding layer, $2$ transformer layers and a linear layer. Each transformer layer consists of a $2$-head self-attention and an MLP with a hidden dimension $200$. The experiments are run on a single A4000 or A6000. 

As mentioned in \Cref{sec:experiments}, we present the results for another random seed in \Cref{fig:ptb_seed_1}. We also plot the results in full range for the two random seeds in \Cref{fig:ptb_seed_0_full} and \Cref{fig:ptb_seed_1_full} to show that the $\ell_\infty$ norm of parameters for \adam keeps increasing. 
\begin{figure}[h]
    \centering
    \begin{subfigure}[b]{0.24\textwidth}
        \includegraphics[width=\textwidth]{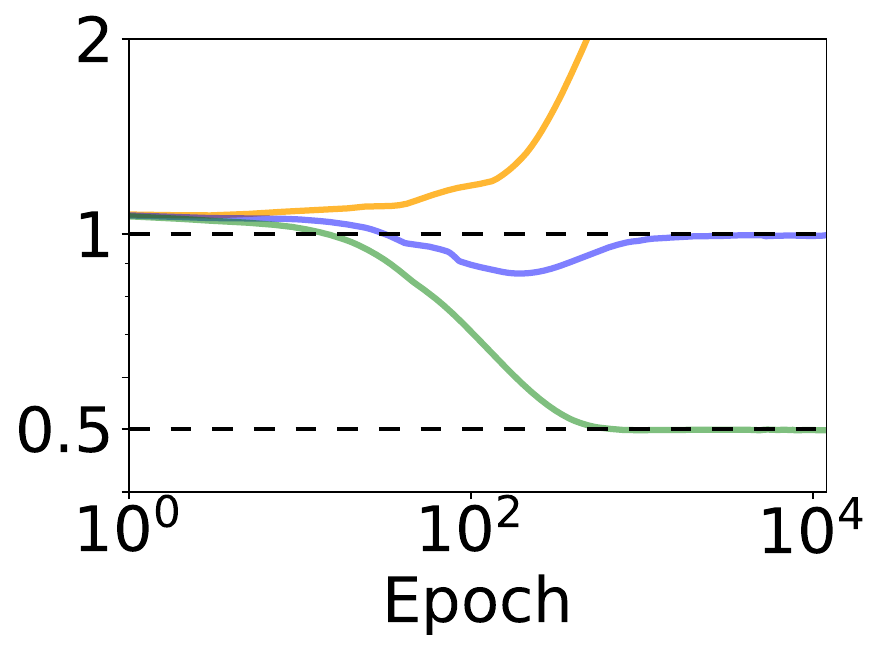}
        \caption{$\beta_1=\beta_2=0.99$}
    \end{subfigure}
    \begin{subfigure}[b]{0.24\textwidth}
        \includegraphics[width=\textwidth]{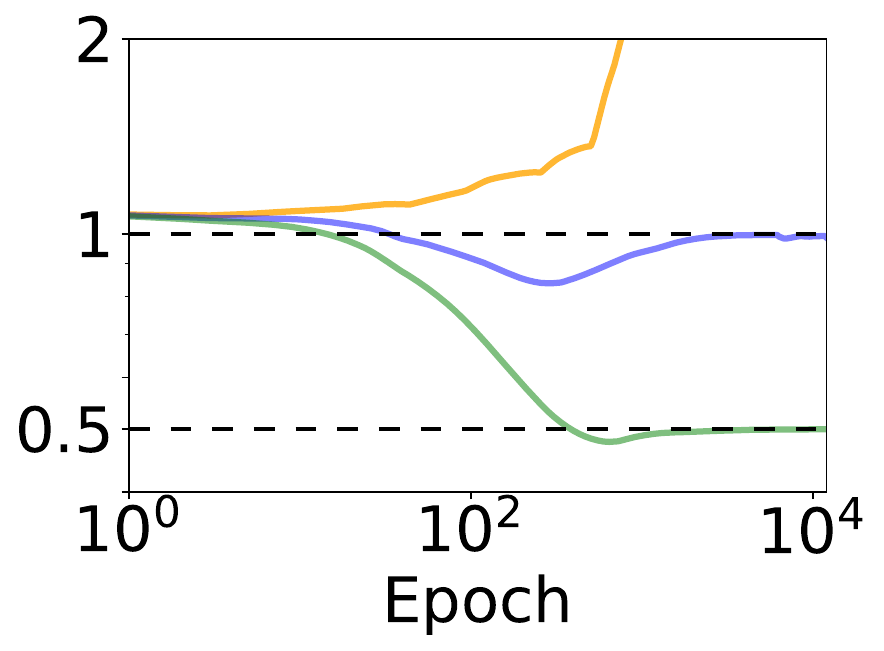}
        \caption{$\beta_1=\beta_2=0.999$}
    \end{subfigure}
    \begin{subfigure}[b]{0.24\textwidth}
        \includegraphics[width=\textwidth]{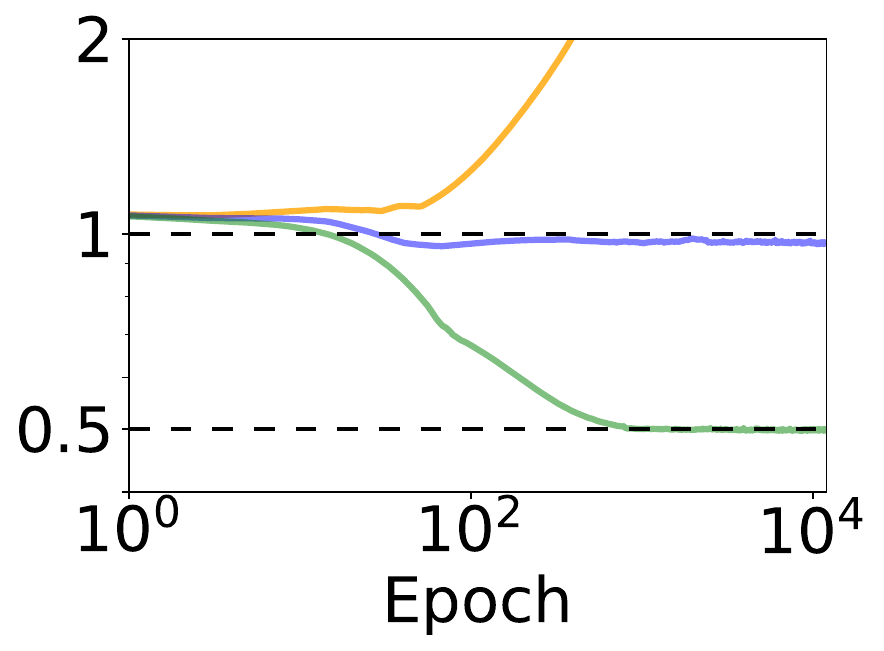}
        \caption{$\beta_1=0.9$, $\beta_2=0.95$}
    \end{subfigure}
    \begin{subfigure}[b]{0.24\textwidth}
        \includegraphics[width=\textwidth]{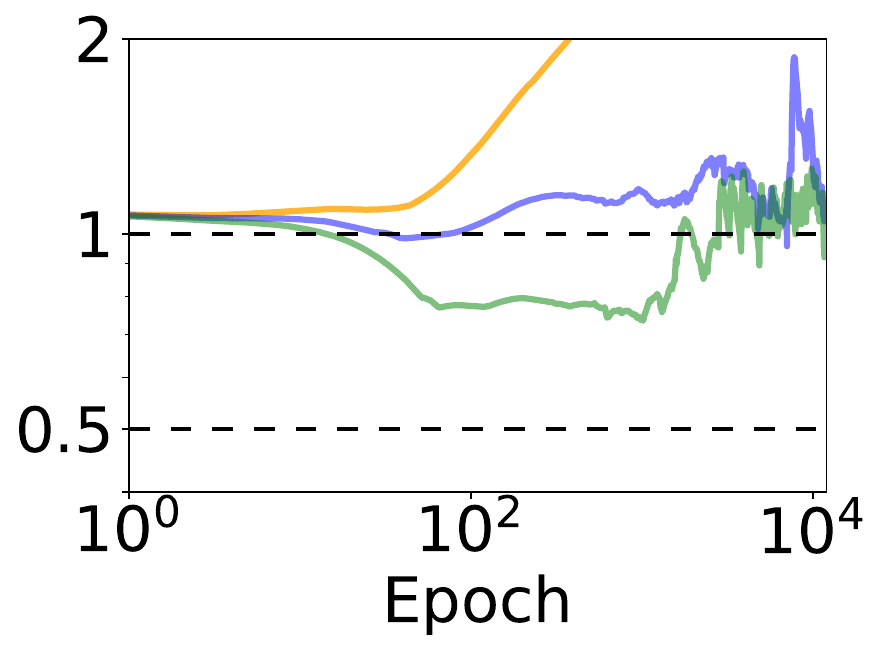}
        \caption{$\beta_1=0.9, \beta_2=0.999$}
    \end{subfigure}

    \begin{subfigure}[b]{0.6\textwidth}
        \includegraphics[width=\textwidth]{figures/legend.pdf}
        \phantomcaption
    \end{subfigure}
    \caption{$\ell_\infty$ norm of parameters for \adam and \adamw with different $\beta_1, \beta_2$ for seed $1$}
    \label{fig:ptb_seed_1}
\end{figure}

\begin{figure}[h]
    \centering
    \begin{subfigure}[b]{0.24\textwidth}
        \includegraphics[width=\textwidth]{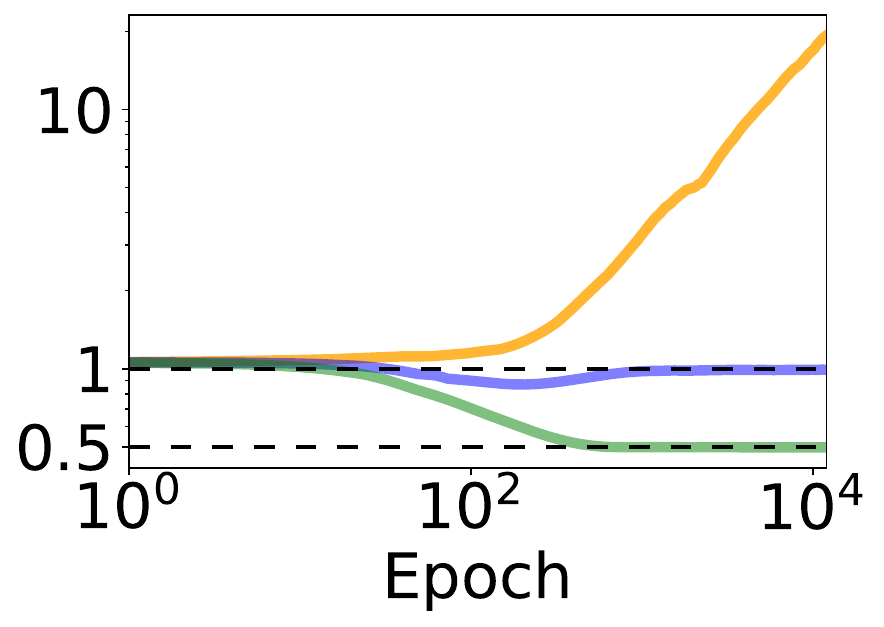}
        \caption{$\beta_1=\beta_2=0.99$}
    \end{subfigure}
    \begin{subfigure}[b]{0.24\textwidth}
        \includegraphics[width=\textwidth]{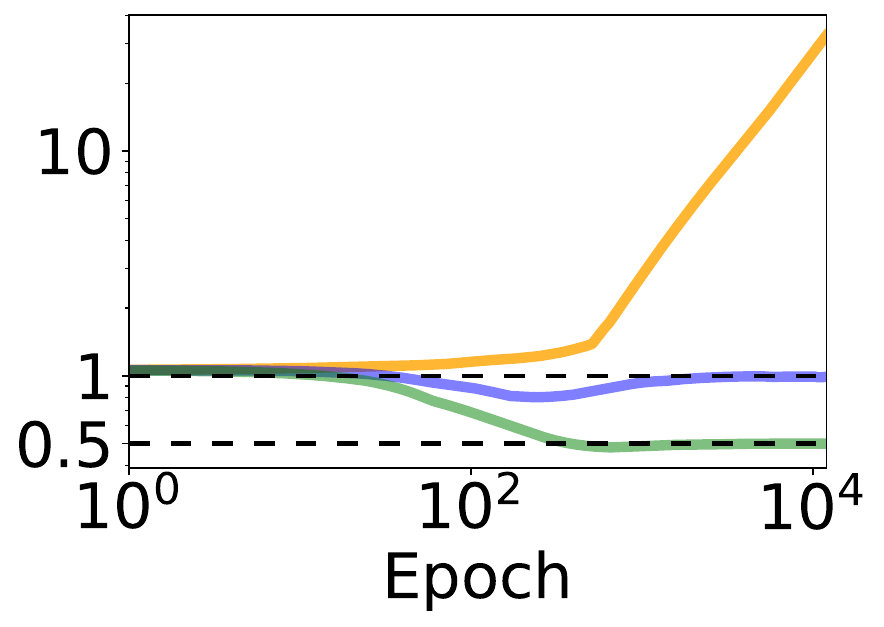}
        \caption{$\beta_1=\beta_2=0.999$}
    \end{subfigure}
    \begin{subfigure}[b]{0.24\textwidth}
        \includegraphics[width=\textwidth]{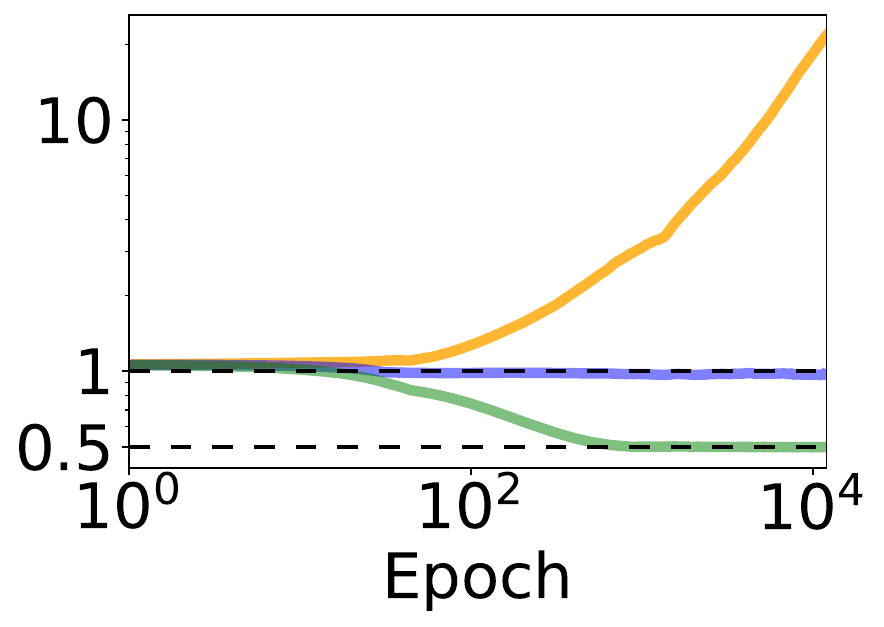}
        \caption{$\beta_1=0.9$, $\beta_2=0.95$}
    \end{subfigure}
    \begin{subfigure}[b]{0.24\textwidth}
        \includegraphics[width=\textwidth]{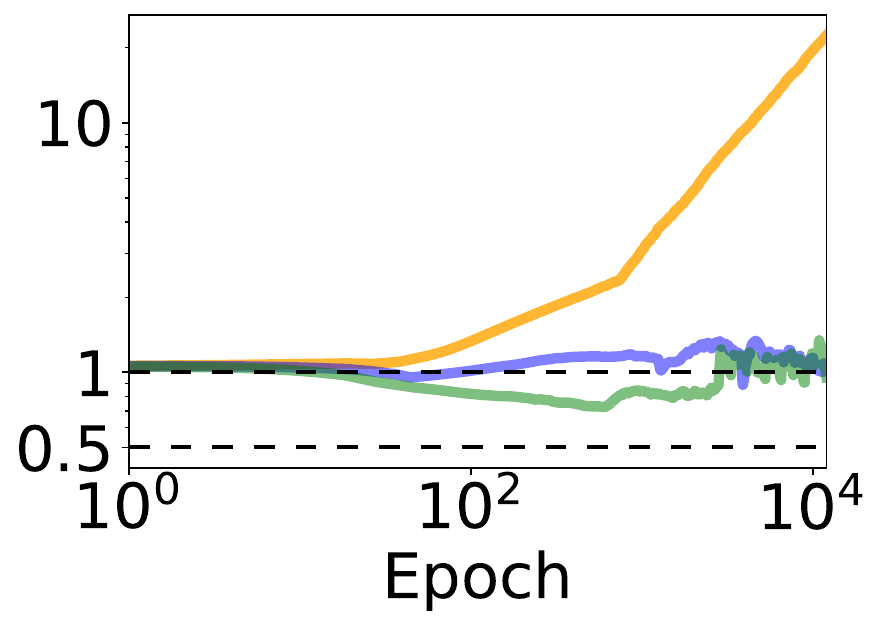}
        \caption{$\beta_1=0.9, \beta_2=0.999$}
    \end{subfigure}

    \begin{subfigure}[b]{0.6\textwidth}
        \includegraphics[width=\textwidth]{figures/legend.pdf}
        \phantomcaption
    \end{subfigure}
    \caption{$\ell_\infty$ norm of parameters for \adam and \adamw with different $\beta_1, \beta_2$ for seed $0$. The range of y-axis is extended to show the full result of \adam. }
    \label{fig:ptb_seed_0_full}
\end{figure}

\begin{figure}[h]
    \centering
    \begin{subfigure}[b]{0.24\textwidth}
        \includegraphics[width=\textwidth]{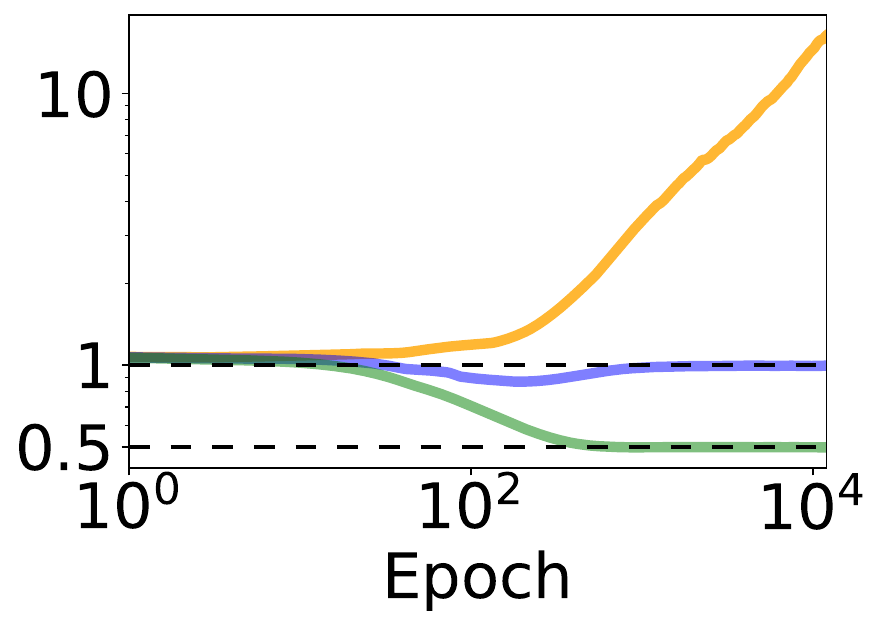}
        \caption{$\beta_1=\beta_2=0.99$}
    \end{subfigure}
    \begin{subfigure}[b]{0.24\textwidth}
        \includegraphics[width=\textwidth]{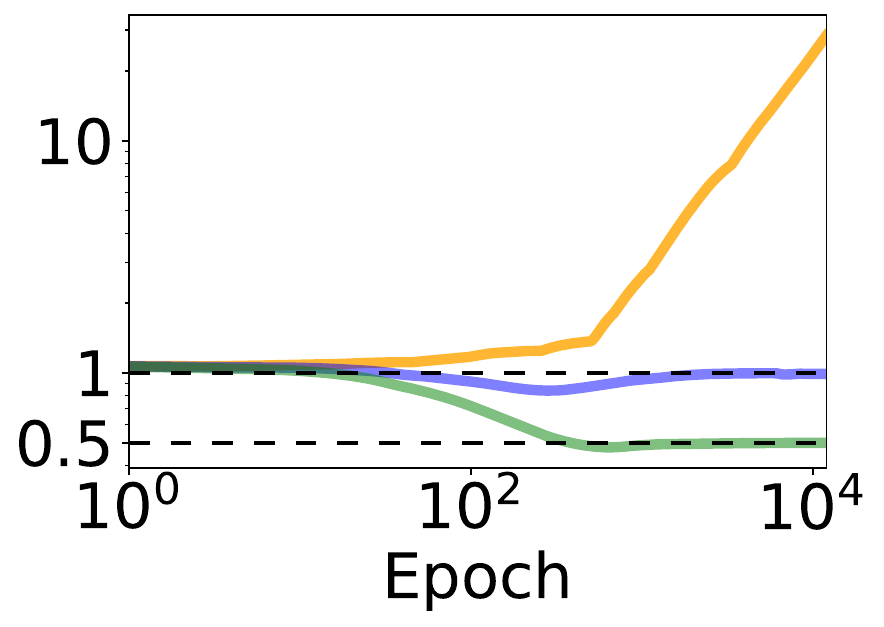}
        \caption{$\beta_1=\beta_2=0.999$}
    \end{subfigure}
    \begin{subfigure}[b]{0.24\textwidth}
        \includegraphics[width=\textwidth]{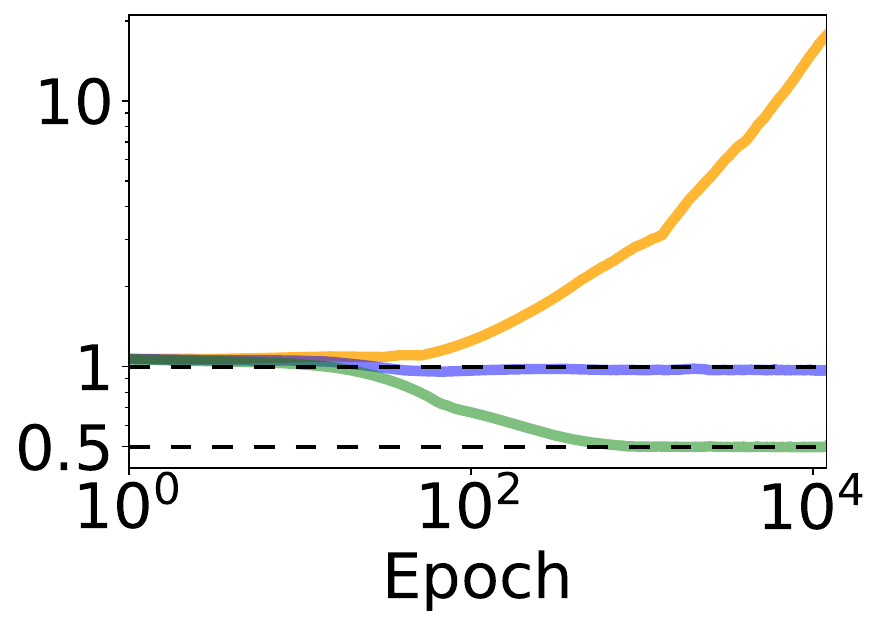}
        \caption{$\beta_1=0.9$, $\beta_2=0.95$}
    \end{subfigure}
    \begin{subfigure}[b]{0.24\textwidth}
        \includegraphics[width=\textwidth]{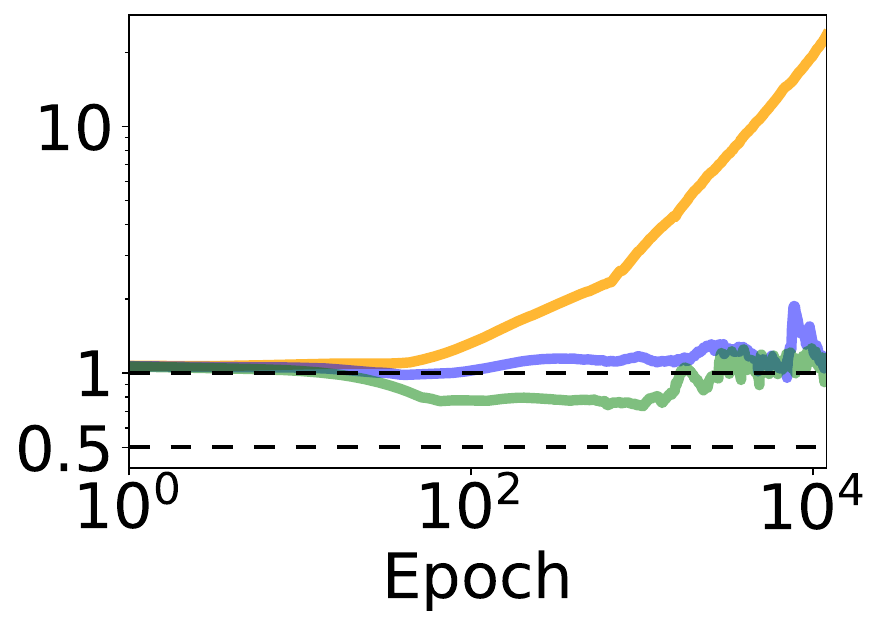}
        \caption{$\beta_1=0.9, \beta_2=0.999$}
    \end{subfigure}

    \begin{subfigure}[b]{0.6\textwidth}
        \includegraphics[width=\textwidth]{figures/legend.pdf}
        \phantomcaption
    \end{subfigure}
    \caption{$\ell_\infty$ norm of parameters for \adam and \adamw with different $\beta_1, \beta_2$ for seed $1$. The range of y-axis is extended to show the full result of \adam. }
    \label{fig:ptb_seed_1_full}
\end{figure}

\end{document}